\documentclass[12pt]{article}

\usepackage{amsmath,amssymb}
\usepackage[T1]{fontenc}
\usepackage[utf8]{inputenc}
\usepackage{textcomp}
\usepackage{lmodern}
\IfFileExists{upquote.sty}{\usepackage{upquote}}{}
\IfFileExists{microtype.sty}{%
    \usepackage[]{microtype}
    \UseMicrotypeSet[protrusion]{basicmath}
}{}
\makeatletter
\@ifundefined{KOMAClassName}{%
    \IfFileExists{parskip.sty}{%
        \usepackage{parskip}
    }{%
        \setlength{\parindent}{0pt}
        \setlength{\parskip}{6pt plus 2pt minus 1pt}}%
}{%
    \KOMAoptions{parskip=half}}
\makeatother
\usepackage[dvipsnames,svgnames,x11names]{xcolor}
\makeatletter
\ifx\paragraph\undefined\else
    \let\oldparagraph\paragraph
    \renewcommand{\paragraph}{
        \@ifstar
            \xxxParagraphStar
            \xxxParagraphNoStar
    }
    \newcommand{\xxxParagraphStar}[1]{\oldparagraph*{#1}\mbox{}}
    \newcommand{\xxxParagraphNoStar}[1]{\oldparagraph{#1}\mbox{}}
\fi
\ifx\subparagraph\undefined\else
    \let\oldsubparagraph\subparagraph
    \renewcommand{\subparagraph}{
        \@ifstar
            \xxxSubParagraphStar
            \xxxSubParagraphNoStar
    }
    \newcommand{\xxxSubParagraphStar}[1]{\oldsubparagraph*{#1}\mbox{}}
    \newcommand{\xxxSubParagraphNoStar}[1]{\oldsubparagraph{#1}\mbox{}}
\fi
\makeatother

\usepackage{longtable,booktabs,array}
\usepackage{calc}
\usepackage{etoolbox}
\makeatletter
\patchcmd\longtable{\par}{\if@noskipsec\mbox{}\fi\par}{}{}
\makeatother
\IfFileExists{footnotehyper.sty}{\usepackage{footnotehyper}}{\usepackage{footnote}}
\makesavenoteenv{longtable}
\usepackage{graphicx}
\makeatletter
\def\maxwidth{\ifdim\Gin@nat@width>\linewidth\linewidth\else\Gin@nat@width\fi}
\def\maxheight{\ifdim\Gin@nat@height>\textheight\textheight\else\Gin@nat@height\fi}
\makeatother
\setkeys{Gin}{width=\maxwidth,height=\maxheight,keepaspectratio}
\makeatletter
\def\fps@figure{htbp}
\makeatother

\makeatletter
\@ifpackageloaded{caption}{}{\usepackage{caption}}
\AtBeginDocument{%
\ifdefined\contentsname
    \renewcommand*\contentsname{Table of contents}
\else
    \newcommand\contentsname{Table of contents}
\fi
\ifdefined\listfigurename
    \renewcommand*\listfigurename{List of Figures}
\else
    \newcommand\listfigurename{List of Figures}
\fi
\ifdefined\listtablename
    \renewcommand*\listtablename{List of Tables}
\else
    \newcommand\listtablename{List of Tables}
\fi
\ifdefined\figurename
    \renewcommand*\figurename{Figure}
\else
    \newcommand\figurename{Figure}
\fi
\ifdefined\tablename
    \renewcommand*\tablename{Table}
\else
    \newcommand\tablename{Table}
\fi
}
\@ifpackageloaded{float}{}{\usepackage{float}}
\floatstyle{ruled}
\@ifundefined{c@chapter}{\newfloat{codelisting}{h}{lop}}{\newfloat{codelisting}{h}{lop}[chapter]}
\floatname{codelisting}{Listing}

\makeatother
\makeatletter
\@ifpackageloaded{caption}{}{\usepackage{caption}}
\@ifpackageloaded{subcaption}{}{\usepackage{subcaption}}
\makeatother

\usepackage{enumitem}
\usepackage{bbm}
\usepackage{bm}
\usepackage{amsthm}
\usepackage{siunitx}
\usepackage{multirow}
\usepackage[toc,page]{appendix}
\usepackage{comment}
\usepackage{tikz}

\usetikzlibrary{shapes,decorations,arrows,calc,arrows.meta,fit,positioning}
\tikzset{
    -Latex,auto,node distance =1 cm and 1 cm,semithick,
    state/.style ={ellipse, draw, minimum width = 0.7 cm},
    point/.style = {circle, draw, inner sep=0.04cm,fill,node contents={}},
    bidirected/.style={Latex-Latex,dashed},
    el/.style = {inner sep=2pt, align=right, sloped}
}

\usepackage{algorithm}
\usepackage{chngcntr} 
\usepackage[noend]{algpseudocode}
\algrenewcommand\textproc{}
\algdef{SE}[SUBALG]{Indent}{EndIndent}{}{\algorithmicend\ }%
\algtext*{Indent}
\algtext*{EndIndent}

\newcommand{\Var}{{\mbox{Var}}}

\DeclareMathOperator*{\argmin}{arg\,min}

\newcommand\independent{\protect\mathpalette{\protect\independenT}{\perp}}
\def\independenT#1#2{\mathrel{\rlap{$#1#2$}\mkern2mu{#1#2}}}

\newtheorem{theorem}{Theorem}
\newtheorem{prop}{Proposition}
\newtheorem{lemma}{Lemma}

\newtheorem{assumption}{Assumption}[section]
\newtheorem{corollary}{Corollary}

\usepackage{dsfont}

\usepackage[]{natbib}
\usepackage[unicode]{hyperref}
\usepackage{bookmark}

\IfFileExists{xurl.sty}{\usepackage{xurl}}{}
\hypersetup{
    pdftitle={Learning from the Unseen: Offline Reinforcement Learning with Hidden Actions},
    pdfauthor={},
    pdfkeywords={off-policy evaluation, latent action models, semiparametric inference, multiple robustness},
    colorlinks=true,
    linkcolor={blue},
    filecolor={Maroon},
    citecolor={Blue},
    urlcolor={Blue},
    pdfcreator={LaTeX via pandoc}}

\newcommand{\RNum}[1]{\uppercase\expandafter{\romannumeral #1\relax}}

\newcommand{\E}{\mathbb{E}}

\newcommand{\nind}{\mathrel{\not\perp}}

\usepackage{hyperref}

\hypersetup{
	unicode=false,
	pdftoolbar=true,
	pdfmenubar=true,
	pdffitwindow=false,     
	pdfstartview={FitH},    
	pdfkeywords={}, 
	pdfnewwindow=true,      
	colorlinks=true,       
	linkcolor=cyan,          
	citecolor=blue,        
	filecolor=pink,      
	urlcolor=cyan           
}

\usepackage[utf8]{inputenc}

\allowdisplaybreaks

\newcommand{\neighbor}[1]%
{\overline{#1}}

\usepackage{mathtools}
\graphicspath{{fig/}}

\newcommand{\anon}{1}

\begin{document}
\def\spacingset#1{\renewcommand{\baselinestretch}{#1}\small\normalsize}
\spacingset{1}

\if1\anon
{
    \csname title\endcsname{\bf Learning from the Unseen: Offline Reinforcement Learning with Hidden Actions}
 \author{
Zeyu Bian$^{1\dagger}$,
Ying Zhou$^{2\dagger*}$,
and Yifan Cui$^{3*}$\\[0.5em]
$^1$ Department of Statistics, Florida State University \\
 $^2$ Department of Statistics, University of Connecticut \\
$^3$ Center for Data Science, Zhejiang University \\[0.5em]
\small $^\dagger$ Equal contribution. \\
\small $^*$ Corresponding authors: yzhou@uconn.edu; cuiyf@zju.edu.cn.
}
    \date{}
    \maketitle
} \fi

\if0\anon
{
    \bigskip
    \bigskip
    \bigskip
    \begin{center}
        {\LARGE\bf Learning from the Unseen: Offline Reinforcement Learning with Hidden Actions}
    \end{center}
    \medskip
} \fi

\bigskip
\begin{abstract}
Standard offline reinforcement learning (RL) algorithms typically assume that the actions in the dataset are observed without error. However, in many real-world applications, the true actions are unobserved and only noisy proxies are available, causing existing RL methods to yield biased and potentially misleading conclusions. We study off-policy evaluation  in infinite-horizon discounted Markov decision processes with hidden actions. By leveraging the next-state variable as a natural proxy for the unobserved action, we establish identification of the policy value and propose an influence-function-based estimator: LURE (\textbf{L}earning from the \textbf{U}nseen: \textbf{R}obust \textbf{E}stimator). LURE is multiply robust—remaining consistent under several combinations of correctly specified nuisance components—and is asymptotically normal, enabling valid statistical inference. To our knowledge, this is the first work to address offline RL with hidden actions. We demonstrate LURE’s effectiveness via simulations and a sepsis management application using the MIMIC-III database.
\end{abstract}

\noindent%
{\it Keywords:} Reinforcement learning; Off-policy evaluation; Latent action; Multiple robustness
\vfill

\newpage
\spacingset{1.8}

\section{Introduction} \label{sec:intro}

Reinforcement learning \citep[RL;][]{sutton2018} aims to maximize the long-run cumulative reward in sequential decision-making by learning a policy that maps states to distributions over actions. In recent years, RL has been successfully applied to a wide range of domains, including game playing \citep{silver2016mastering}, robotics \citep{levine2020offline,zhong2026risk}, business strategy \citep{den2015dynamic,bian2024tale,liu2025contextual}, healthcare \citep{liao2022batch,zhou2024federated,chen2026data}, and social science \citep{li2026reinforcement}. Recent advances in large language models have also drawn heavily on RL techniques; see, for example, \citet{ouyang2022training,rafailov2023direct}.

RL is also closely connected to the statistical literature on dynamic treatment regimes and precision medicine \citep{Murphy,gest,qian2011performance,zhao2012estimating,zhang2012robust,wallace2015doubly,wang2018quantile,kitagawa2018should,shi2018high,qi2020multi,athey2021policy,cui2021semiparametric,bian2023variable, shen2024statistical}. This line of work studies how to tailor treatment decisions to individual patients so as to optimize expected clinical outcomes, typically over a finite number of treatment stages. While the goals are closely aligned with those of RL, most existing statistical methods for precision medicine are not designed for large- or infinite-horizon problems.

In this paper, we focus on off-policy evaluation (OPE), a fundamental problem in offline reinforcement learning. The goal of OPE is to estimate the long-run value of a target policy using historical data generated under a potentially different behavior policy. Existing model-free OPE methods broadly fall into three classes: direct methods \citep{ernst2005tree,le2019batch,shi2022statistical,chen2022well}, importance-sampling-based methods \citep{precup2000eligibility,liu2018breaking,nachum2019dualdice,uehara2020minimax,wang2023projected}, and doubly robust methods \citep{jiang2016doubly,thomas2016data,kallus2022efficiently,liao2022batch}. See \citet{uehara2022review} for a comprehensive overview.

Most of the existing literature assumes that the action recorded in the offline dataset is observed without error. In many applications, however, the action of scientific or operational interest may be partially observed or entirely latent \citep{young2018measurement,zhu2022causal}. Such discrepancies can arise from annotation errors, imperfect management systems, indirect action reconstruction from high-dimensional records, privacy-preserving data collection that releases only a coarsened proxy, or an improperly discretized action space. If these discrepancies are ignored, standard OPE methods can target the wrong state--action mechanism and yield misleading policy evaluations.

Our work is motivated by the longitudinal sepsis data from the MIMIC-III \citep{johnson2016mimic}. This large electronic health record database combines multiple heterogeneous sources and is characterized by irregular measurements, substantial missingness, and complex preprocessing pipelines \citep{gupta2022extensive}. As in many EHR-based studies, these features increase the risk of measurement error in downstream analyses \citep{rogers2021medical}. In particular, treatment records may not perfectly reflect the treatment actually delivered because of delayed charting, documentation mistakes, or administrative coding practices. This gap between recorded and true actions makes standard offline RL methods unreliable and motivates principled methods that explicitly account for hidden actions.

The hidden-action problem is related to the measurement-error literature. When the action is continuous and observed with additive noise, one can often leverage classical measurement error methods; see, e.g., \citet{carroll2006measurement}. In contrast, discrete action misclassification is substantially more challenging because the classical additive error framework underlying these methods generally does not apply. A recent related contribution is \citet{zhou2024causal}, which studies causal effect estimation under a hidden treatment in the static setting. Their framework uses a surrogate variable, together with an additional proxy variable related to the hidden treatment, and develops multiply robust estimators using semiparametric efficiency theory. 

Our work also leverages a proxy for the hidden action to identify the value function and construct an estimator. A key feature of the RL setting is that the next state provides such a proxy naturally: because it depends on the current latent action through the transition dynamics, it contains information about the hidden action that can be exploited for identification. Despite this connection, our setting differs substantially from that of \citet{zhou2024causal}. First, hidden actions arise in a sequential decision-making setting. Consequently, the target parameter and nuisance functions depend on the transition dynamics and the target policy, making the problem substantially more complex than in the static setting. Second, unlike the static hidden-treatment setting, which requires an additional proxy variable, the next state in RL serves simultaneously as a proxy for the current hidden action and as part of the future trajectory. This dual role creates technical challenges that are absent in the single-stage setting. As a result, the identification strategy, influence-function derivation, and asymptotic analysis require new arguments that explicitly exploit the sequential nature of RL, rather than constituting a direct extension of the single-stage hidden-treatment framework.

\textbf{\textit{Challenges and contributions}}. Since the true action in the environment is never observed, off-policy evaluation with hidden actions presents three main challenges. First, the state--action distribution underlying standard OPE methods is latent, making it unclear whether the target policy value is identifiable from the observed data. Second, even when identification can be established, constructing valid estimators remains nontrivial, as naively substituting the latent action with its surrogate in existing OPE methods generally leads to biased estimates. Finally, valid uncertainty quantification and policy comparison procedures have yet to be developed in this setting.

To address these challenges, we first establish identification of the target policy value in the presence of hidden actions by leveraging the next-state variable within the RL framework. Second, we derive an observed-data influence function representation and use it to construct LURE (\csname textbf\endcsname{L}earning from the \csname textbf\endcsname{U}nseen: \csname textbf\endcsname{R}obust \csname textbf\endcsname{E}stimator), a multiply robust estimator that remains consistent under several combinations of correctly specified nuisance components. To address the unobserved latent-action problem in the estimation procedure,
we propose a novel expectation--maximization-based algorithm for estimating the nuisance functions in our offline RL setting.
Finally, we establish finite-sample error bounds and the asymptotic normality of the proposed estimator, enabling valid confidence intervals and hypothesis tests for policy value estimation. To the best of our knowledge, this is the first work to study offline reinforcement learning in the hidden-action setting.


\textbf{\textit{Organization}}. The rest of the paper is organized as follows. In Section~\ref{sec: prelim}, we review the standard RL framework and existing OPE methods. In Sections~\ref{sec: latent_action} and \ref{sec: estimation}, we introduce the hidden-action setting, establish identification, and construct the LURE estimator. In Section~\ref{sec: theory}, we present the large-sample theory for LURE. In Section~\ref{sec: sim}, we report simulation results, and in Section~\ref{sec: real} we present the MIMIC-III sepsis analysis. We conclude with a discussion in Section~\ref{sec: discuss}. All proofs are deferred to the Appendix.

\section{Preliminaries} \label{sec: prelim}

In this section, we briefly review the standard RL framework, in which actions are perfectly recorded. 

We consider a time-homogeneous Markov decision process (MDP) defined by the sequence $\{(S_t, A_t, R_t)\}_{t \ge 1}$, where $S_t \in \mathcal{S}$ denotes the state, $A_t \in \mathcal{A}$ the action, and $R_t \in \mathbb{R}$ the reward at time $t$. The system dynamics are governed by a stationary Markov transition kernel $q(\cdot \mid s, a)$. Specifically, given the current state-action pair $(S_t, A_t) = (s, a)$, the next state is sampled according to $S_{t+1} \sim q(\cdot \mid s, a)$, independent of the preceding history. The reward $R_t$ is generated from a time-invariant distribution conditioned on $(S_t, A_t)$ for any time $t$. We define the reward function $\theta_R(s, a)$ as the conditional expectation: \begin{gather*}
    \theta_R(s, a) := \mathbb{E}[R_t \mid S_t = s, A_t = a].
\end{gather*} 
For any function $g:\mathcal{S}\times\mathcal{A}\to\mathbb{R}$, define $g(s,\pi):=\sum_{a\in\mathcal{A}} g(s,a)\pi(a\mid s)$.

\begin{figure}[H]
\centering
\resizebox{\linewidth}{!}{%
\begin{tikzpicture}[
  >=Stealth,
  node distance=1.4cm and 1.4cm,
  every node/.style={text=black, font=\normalsize},
  edge/.style={->, very thick, draw=black}
]

\node (S0) {$S_{t-1}$};
\node (A0) [right=of S0] {$A_{t-1}$};
\node (S1) [right=of A0] {$S_{t}$};
\node (A1) [right=of S1] {$A_{t}$};
\node (S2) [right=of A1] {$S_{t+1}$};

\node (R0) at ($(S0)!0.5!(A0) + (0,1.6cm)$) {$R_{t-1}$};
\node (R1) at ($(S1)!0.5!(A1) + (0,1.6cm)$) {$R_{t}$};

\draw[edge] (S0) -- (A0);
\draw[edge] (A0) -- (S1);
\draw[edge] (S0) to[bend right=41] (S1);
\draw[edge] (S1) -- (A1);
\draw[edge] (A1) -- (S2);
\draw[edge] (S1) to[bend right=41] (S2);

\draw[edge] (S0) -- (R0);
\draw[edge] (A0) -- (R0);

\draw[edge] (S1) -- (R1);
\draw[edge] (A1) -- (R1);

\end{tikzpicture}%
}
\caption{Data-generating process in standard MDP.}
\label{fig:mdp}
\end{figure}

As is standard in the reinforcement learning literature, throughout, we assume the environment is stationary, meaning that both the transition kernel $q$ and the reward function $\theta_R$ are Markovian and remain constant over time. This assumption lays the foundation for many state-of-the-art algorithms \citep{sutton2018}.
Under this setting, it is sufficient to focus on stationary policies $\pi$ under which $A_t \sim \pi(\cdot \mid S_t)$ \citep{puterman2014markov}. Such a stationary policy specifies how the agent selects actions at each decision time, mapping the current state to a distribution over actions. Figure \ref{fig:mdp} summarizes the data-generating process in the standard MDP setting described above.

Policy evaluation aims to estimate the expected reward under a target policy $\pi$. This is quantified by the value function, which measures the long-term performance of a policy and is defined as: \begin{gather} \label{eq:value}
    V(\pi) := \mathbb{E}_{S_1 \sim p_e} \left[ \mathbb{E}^\pi \left(\sum_{t=1}^\infty \gamma^{t-1} R_t \,\middle\vert\, S_1 \right) \right],
\end{gather} 
where $\gamma \in [0, 1)$ is the discount factor that governs the trade-off between immediate and future rewards, and $p_e$ is a known initial state distribution.

In this paper, we study off-policy evaluation, which aims to estimate the value of a target policy $\pi$ using only pre-collected historical data. This offline dataset consists of $N$ independent trajectories of length $T$, summarized as $$\{(S_{i,t}, A_{i,t}, R_{i,t}) : 1\le i\le N,\ 1\le t\le T\}.$$ These data are generated by a potentially unknown behavior policy $b$, such that $A_{i,t} \sim b(\cdot\mid S_{i,t})$, which typically differs from the target policy $\pi$.

To estimate the policy value $V(\pi)$, a popular approach is the direct method. It is based on estimating the state-value function and the $Q$-function, defined as
\begin{align*}
V^\pi(s)
&:= \E^\pi\!\left[\sum_{t=1}^\infty \gamma^{t-1} R_t\,\middle\vert\, S_1=s\right],
\\
Q^\pi(s,a)
&:= \E^\pi\!\left[\sum_{t=1}^\infty \gamma^{t-1} R_t\,\middle\vert\, S_1=s,\,A_1=a\right],
\end{align*}
so that $V^\pi(s)=Q^\pi(s,\pi)=\sum_{a\in\mathcal A} \pi(a\mid s)Q^\pi(s,a)$.
The policy value is then given by
\begin{align*}
V(\pi):=\E_{S\sim p_e}\big[V^\pi(S)\big].
\end{align*}

Another approach is importance-sampling-based estimation. Let $p_t^\pi(s,a)$ denote the marginal state--action distribution at time $t$ induced by the policy $\pi$. For reference, the distribution $p_t^\pi(s,a)$ admits the standard expansion
\begin{gather*}
 p_t^\pi(s_t,a_t)
 = \sum_{a_1,\dots,a_{t-1}}\int p_e(s_1)
 \left( \prod_{k=1}^{t-1} \pi(a_k\mid s_k)\, p(s_{k+1}\mid s_k, a_k) \right)
 \pi(a_t\mid s_t)\, ds_{1}\dots ds_{t-1}.
\end{gather*}
In addition, define the discounted visitation
\begin{align*}
 p^*(s,a)
 &:=(1-\gamma)\sum_{t=1}^\infty \gamma^{t-1} p_t^\pi(s,a),
\end{align*}
and assume that for all $s$ and $a$, the marginal density ratio
\begin{align*}
\omega(s,a):=\frac{p^*(s,a)}{p^b(s,a)}
\end{align*} exists,  where $
 p^b(s,a)$ is the marginal state--action distribution in the offline data.
 Then,
\begin{align*}
    &V(\pi)
    =(1-\gamma)^{-1} \int_{s,a,r} r\, p(r\mid s,a) p^*(s,a)\, ds\, da\, dr
    \\
    = & \;(1-\gamma)^{-1} \E\left[\frac{p^*(S,A)}{p^b(S,A)}R\right]
    =(1-\gamma)^{-1} \E\left[\omega(S,A)R\right].
\end{align*} 

Thus, to estimate the value function $V(\pi)$, it suffices to estimate the density ratio $\omega(\cdot,\cdot)$.

Finally, we introduce a doubly robust representation \citep{kallus2022efficiently} of the value function, which combines the ideas of the previous two methods. Specifically, we have 
    \begin{align*}
       V(\pi)= \E_{p_e} (V^\pi(S))+(1-\gamma)^{-1} \E \Big[\omega(S,A)\left[ R+ \gamma V^\pi(S')-Q^\pi(S,A)\right] \Big].
    \end{align*} One can then construct an estimator accordingly, which requires estimation of both the $Q$-function and the density ratio $\omega$. The key advantage of this doubly robust formulation is that the resulting estimator remains consistent if either the $Q$-function or the density ratio is correctly specified.


The aforementioned OPE identification results and estimation methods all rely on the assumption that actions are perfectly measured in the offline data. In many applications, however,  the decision-maker only has access to an error-prone observed action $\widetilde A_t \in \mathcal{A}$, which is a noisy measurement of the underlying true action $A_t$. Naively substituting $\widetilde A_t$ for $A_t$ in any of the above OPE methods will generally produce biased estimates, since the $Q$-function and density ratio are both defined with respect to the latent true action. We address this setting in the following section.

\section{RL with Unobserved Action} \label{sec: latent_action}

In our framework, the true actions $A_{i,t}$ are not directly recorded. Instead, we observe trajectories of the form $\{(S_{i,t},\widetilde A_{i,t},R_{i,t}) : 1\le i\le N,\ 1\le t\le T\}$, where $\widetilde A_{i,t}$ is an error-prone surrogate for the hidden action $A_{i,t}$. Figure~\ref{fig:dgp} summarizes the data-generating process: at each time point, the true action $A_{i,t}$ is generated by a behavior policy $b$, while the observed surrogate $\widetilde A_{i,t}$ is generated conditional on the state $S_{i,t}$ and the true action $A_{i,t}$. This setting commonly arises in studies using electronic health records. For example, $A_{i,t}$ may indicate whether a clinician actually administered a medication at time $t$, whereas $\widetilde A_{i,t}$ is the medication documented in the record. The two can differ because of delayed charting, coding mistakes, or discrepancies between an order and the treatment ultimately delivered. The misrecording rate may also depend on patient acuity through the current state $S_{i,t}$.

\begin{figure}[t]
\centering
\resizebox{\linewidth}{!}{%
\begin{tikzpicture}[
  >=Stealth,
  node distance=1.4cm and 1.4cm,
  every node/.style={text=black, font=\normalsize},
  edge/.style={->, very thick, draw=black}
]

\node (S0) {$S_{t-1}$};
\node (A0) [right=of S0] {\textcolor{magenta}{$A_{t-1}$}};
\node (S1) [right=of A0] {$S_{t}$};
\node (A1) [right=of S1] {\textcolor{magenta}{$A_{t}$}};
\node (S2) [right=of A1] {$S_{t+1}$};

\node (R0) at ($(S0)!0.5!(A0) + (0,1.6cm)$) {$R_{t-1}$};
\node (R1) at ($(S1)!0.5!(A1) + (0,1.6cm)$) {$R_{t}$};

\node (At0) at ($(S0)!0.5!(A0)+ (0,-1.6cm)$) {$\widetilde A_{t-1}$};
\node (At1) at ($(S1)!0.5!(A1) + (0,-1.6cm)$) {$\widetilde A_{t}$};

\draw[edge] (S0) -- (A0);
\draw[edge] (A0) -- (S1);
\draw[edge] (S0) to[bend right=32] (S1);
\draw[edge] (S1) -- (A1);
\draw[edge] (A1) -- (S2);
\draw[edge] (S1) to[bend right=32] (S2);

\draw[edge] (S0) -- (R0);
\draw[edge] (A0) -- (R0);

\draw[edge] (S1) -- (R1);
\draw[edge] (A1) -- (R1);

\draw[edge] (A0) -- (At0);
\draw[edge] (S0) -- (At0);
\draw[edge] (A1) -- (At1);
\draw[edge] (S1) -- (At1);

\end{tikzpicture}%
}
\caption{Data-generating process with latent actions. The underlying latent true action \textcolor{magenta}{$A_t$} governs the transition to the next state and the reward. The observed surrogate $\widetilde A_t$ is generated conditional on the state and the true action, and may differ from the true action due to misclassification.}
\label{fig:dgp}
\end{figure}

Under this setting, the goal remains the evaluation of the value function defined in Equation \eqref{eq:value}. However, the focus is on estimating the value of the target policy that governs the true action-generating mechanism $\pi(A\mid S)$, rather than that of the policy acting on the observed surrogate $\pi(\widetilde{A}\mid S)$. This distinction introduces substantial difficulty for two reasons. First, it is not immediate whether $V(\pi)$ is identifiable from the observed data, since the true action is never directly recorded. Second, even when identification can be established, standard OPE estimating techniques cannot be applied directly, as they involve the latent action explicitly. We first address the identification challenge, beginning by describing our assumptions. Throughout the paper, we focus on binary actions for clarity, while noting that the proposed framework can be extended to categorical action spaces (see Section \ref{sec:multi-action} in the Appendix). Because we focus on the stationary setting, we omit the time index hereafter where it causes no confusion.

\begin{assumption}
\label{assump:latent_action_id}
\begin{enumerate}[label=(\roman*)]
\item (Coverage) For all $s$ and $a$, we have \begin{gather*}
    p^b(s,a)= 0 \implies p^*(s,a) = 0.
\end{gather*}
\item (Relevance) $R\nind A\mid S; \widetilde A\nind A\mid S; S'\nind A\mid S$.
\item (Conditional independence) $R\independent \widetilde A \independent S' \mid (S,A)$.
\item (Surrogate)  $\Pr(\widetilde A=0\mid A=1, S=s)< \Pr(\widetilde A=0\mid A=0, S=s)$, for any $s$. \label{cdn:1}

\end{enumerate}
\end{assumption}

Assumption~\ref{assump:latent_action_id}(i) is standard in OPE literature \citep{uehara2022review}. It ensures that the support of the target policy is contained in the support of the behavior policy.
Assumption~\ref{assump:latent_action_id}(ii) and (iii) are relevance and conditional independence assumptions. They are natural consequences of the causal structure depicted in Figure~\ref{fig:dgp}: the surrogate $\widetilde A$ is generated based on $(S, A)$ and does not directly affect either the reward or the transition. Assumption~\ref{assump:latent_action_id}(iv) is mild, requiring that the surrogate be more informative than a random guess. Similar assumptions have been imposed in \citet{hu2008identification} and \citet{zhou2024causal}.

\begin{theorem}[Identification]
\label{thm:identification}
Under Assumption~\ref{assump:latent_action_id}, the policy value $V(\pi)$ is uniquely identified from the distribution of the observed data $O$.
\end{theorem}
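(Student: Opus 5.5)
Following the measurement-error identification argument of \citet{hu2008identification}, we reduce the problem to identifying, for each state $s$, the full joint law of $(S,A,\widetilde A,R,S')$ from the observed law of $(S,\widetilde A,R,S')$. The value is then recovered from the standard OPE representation of Section~\ref{sec: prelim}. The key objects are, conditional on $S=s$:
\begin{align*}
&\text{the behavior policy } b(a\mid s),\\
&\text{the misclassification probabilities } \Pr(\widetilde A\mid A=a,s),\\
&\text{the reward law } p(r\mid s,a),\\
&\text{the transition kernel } q(s'\mid s,a).
\end{align*}
Once $\theta_R(s,a)$ and $q(\cdot\mid s,a)$ are identified on the support of $p^b$, the Bellman equation $Q^\pi(s,a)=\theta_R(s,a)+\gamma\int Q^\pi(s',\pi)\,q(s'\mid s,a)\,ds'$ has a unique bounded solution, because it is a $\gamma$-contraction. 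Then $V(\pi)=\E_{p_e}[Q^\pi(S,\pi)]$ is identified. Assumption~\ref{assump:latent_action_id}(i) guarantees that $\pi$ only queries pairs $(s,a)$ with $p^b(s,a)>0$, so the kernel is needed only where it is identified. Equivalently, one can identify $\omega$ and use $(1-\gamma)^{-1}\E[\omega R]$.

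For the main step, fix $s$ and use Assumption~\ref{assump:latent_action_id}(iii) to write a mixture decomposition. For any bounded functions $f$ of $R$ and $g$ of $S'$,
\[
\E[f(R)g(S')\mathbb 1\{\widetilde A=\tilde a\}\mid s]=\sum_{a\in\{0,1\}} b(a\mid s)\,\Pr(\tilde a\mid a,s)\,\E[f(R)\mid s,a]\,\E[g(S')\mid s,a].
\]
Take the three conditionally independent views $\widetilde A$, $R$, and $S'$. In matrix form $M_{\tilde a}=L\,D_{\tilde a}\,U^\top$, where $L$ and $U$ are $2\times 2$ matrices built from two test functions of $R$ and of $S'$. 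Assumption~\ref{assump:latent_action_id}(ii) ensures that $R$ and $S'$ genuinely depend on $A$, so $L$ and $U$ can be made invertible by choosing, e.g., $f\in\{1,f_1\}$ and $g\in\{1,g_1\}$ with distinct conditional means across $a$. Then
\[
M_1M_0^{-1}=L\,D_1D_0^{-1}L^{-1}
\]
is an eigendecomposition. Its eigenvalues are $\Pr(\widetilde A=1\mid a,s)/\Pr(\widetilde A=0\mid a,s)$, which are distinct by Assumption~\ref{assump:latent_action_id}(ii). Its eigenvectors give the columns of $L$ up to scale, and the scale is fixed by the row for $f\equiv 1$.

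The main obstacle is label switching: the eigendecomposition determines the two latent components only up to permutation. Assumption~\ref{assump:latent_action_id}(iv) resolves it. The component with the larger $\Pr(\widetilde A=0\mid\cdot)$, i.e.\ the smaller eigenvalue, is labeled $A=0$. With labels fixed, $b(a\mid s)\Pr(\tilde a\mid a,s)$ follows from the diagonal $D_{\tilde a}$, and hence $b$ and the misclassification rates are identified. The conditional laws $p(r\mid s,a)$ and $q(s'\mid s,a)$ then follow by solving the invertible linear system
\[
\Pr(\widetilde A=\tilde a,\,R\in\cdot,\,S'\in\cdot\mid s)=\sum_a b(a\mid s)\Pr(\tilde a\mid a,s)\,p(\cdot\mid s,a)\,q(\cdot\mid s,a),
\]
after marginalizing one view at a time. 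A care point is invertibility when $S'$ or $R$ is continuous. I would handle this by choosing indicator test functions of sets separating the two conditional laws, which exist by relevance. Finally, plugging the identified $\theta_R$ and $q$ into the Bellman argument of the first step completes the proof.
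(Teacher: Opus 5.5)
Your argument is correct in substance but takes a different and more self-contained route than the paper. The paper's proof cites \citet{zhou2024causal} as a black box for identifying $\Pr(\widetilde A=a'\mid A=a,S=s)$. It then inverts $P_{\widetilde A,A}(s)$, which is nonsingular by (iv), to recover $b(\cdot\mid s)$ from $\Pr(\widetilde A\mid S=s)$, asserts that $\omega$ is therefore identified, and concludes via $V(\pi)=(1-\gamma)^{-1}\E[\omega(S,A)R]$.

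You instead prove the misclassification step directly, by a Hu-type spectral decomposition that uses $R$ and $S'$ as the two auxiliary views, and you resolve label switching explicitly with (iv). You then go further: applying the same $2\times 2$ inversion to $\Pr(\widetilde A=\tilde a,\,S'\in B\mid s)$ and $\Pr(\widetilde A=\tilde a,\,R\in B\mid s)$ identifies $q(\cdot\mid s,a)$ and $p(\cdot\mid s,a)$, and you finish with the Bellman/occupancy argument. That last part is exactly what the paper's step ``$\pi$ known and $P_A(s)$ identified $\Rightarrow$ $\omega$ identified'' leaves unstated. The visitation $p^*$ depends on the transition kernel, and $\E[\omega(S,A)R]$ depends on $\theta_R(s,a)$; neither follows from $b$ and the misclassification matrix alone. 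Your route fills that in. The paper's route is shorter and connects directly to the MIS representation used later for estimation.

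Three small repairs are needed.
\begin{enumerate}
\item $M_0$ need not be invertible, because (iv) allows $\Pr(\widetilde A=0\mid A=1,s)=0$. Use instead $M_1(M_0+M_1)^{-1}=L\,\mathrm{diag}\{\Pr(\widetilde A=1\mid a,s)\}L^{-1}$. Its eigenvalues are distinct and ordered directly by (iv), with the larger one labeled $A=1$.
\item Invertibility of $M_0+M_1=L\,\mathrm{diag}\{b(a\mid s)\}U^\top$ requires $0<b(1\mid s)<1$ and that the laws of $R$ and of $S'$ differ across $a$ at each $s$. So you should read (ii) pointwise in $s$; the paper states it only as global non-independence. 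The pointwise reading also implies the positivity of $b$. Choose $f_1,g_1$ by the observable requirement that $M_0+M_1$ be invertible, so the choice is a functional of the observed law.
\item In the Bellman step, restrict attention to $\mathrm{supp}(p^*)$. This set is closed under $\pi$-transitions and, by (i), contained in $\mathrm{supp}(p^b)$, where $q$ and $\theta_R$ are identified. That is the precise form of your remark about which pairs $\pi$ queries.
\end{enumerate}
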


Having established identification, we next turn to estimation and develop an influence-function-based estimator for $V(\pi)$. Specifically, we derive an influence function of $V(\pi)$ and use it to construct LURE (\textbf{L}earning from the \textbf{U}nseen: \textbf{R}obust \textbf{E}stimator), a multiply robust estimator that accommodates the latent-action structure.
We denote the observed transition tuple by \(O=(S,\widetilde A,R,S')\).  For any measurable function $f:\mathcal{S}\to\mathbb{R}$, we denote the Markov operator by
$(\mathcal{M}f)(s,a):=\E\!\left[f(S')\mid S=s, A=a\right].$
Let \(\theta_{\widetilde A}(s,a) := \E(\widetilde A \mid S=s, A=a)\) denote the surrogate-action measurement model, and let \(\theta_{S'}(s,a) := \E\left[l(S') \mid S=s, A=a\right]\) denote the conditional expectation of the proxy, where \(l:\mathbb{R}^d \to \mathbb{R}\) is a known function.  For example, one may take \(l(s) = s^{(j)}\), i.e., the $j$-th component of the state vector. 
Finally, recall that the reward function is given by \(\theta_R(s,a) := \E(R \mid S=s, A=a)\). Throughout, we assume that the reward $R$ and $l(S')$ are uniformly bounded.

\begin{theorem}\label{thm:influence_function}
Under Assumption~\ref{assump:latent_action_id}, the function $\varphi^\pi(O)$ defined below is an influence function for the value function $V(\pi)$: 
    \begin{align}\label{eq:IF}
        \nonumber \varphi^\pi(O)=&\E_{p_e} (V^\pi(S))+(1-\gamma)^{-1}\sum_a  g_{a}' (S',\widetilde A,S) \omega(S,a) \left[ R-\theta_R(S,a) \right]\\
        &+\frac{\gamma}{1-\gamma}\sum_a  g_{a} (R,\widetilde A,S) \omega(S,a) \left[ V^\pi(S')-\mathcal{M}V^\pi(S,a)  \right]-V(\pi), \\
  \text{ where } 
   \nonumber & g_{a}(R,\widetilde A,S) =  \frac{\widetilde A-\theta_{\widetilde A}(S,1-a)}{\theta_{\widetilde A}(S,a)-\theta_{\widetilde A}(S,1-a)} \frac{R-\theta_{R}(S,1-a)}{\theta_{R}(S,a)-\theta_{R}(S,1-a)},\\
 \nonumber & g_{a}'(S',\widetilde A,S) =  \frac{\widetilde A-\theta_{\widetilde A}(S,1-a)}{\theta_{\widetilde A}(S,a)-\theta_{\widetilde A}(S,1-a)} \frac{l(S')-\theta_{S'}(S,1-a)}{\theta_{S'}(S,a)-\theta_{S'}(S,1-a)},
\end{align} for $a=0,1$.
\end{theorem}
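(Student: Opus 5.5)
Here is how I would prove Theorem~\ref{thm:influence_function}. The plan is a pathwise-derivative calculation. I consider a regular one-dimensional parametric submodel $\{P_\epsilon\}$ of the observed-data law of $O=(S,\widetilde A,R,S')$ through the truth at $\epsilon=0$, with score $\mathcal{S}(O)$. The goal is to show $\partial_\epsilon V_\epsilon(\pi)|_{\epsilon=0}=\E[\varphi^\pi(O)\mathcal{S}(O)]$, where $V_\epsilon(\pi)$ is the functional given by the identification formula of Theorem~\ref{thm:identification}.

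\textbf{Step 1: the full-data influence function.} I start from the latent-action problem in which $A$ is observed. Differentiating the Bellman equation $Q^\pi=\theta_R+\gamma\mathcal{M}V^\pi$, and using the stationarity identity for $p^*$ together with the coverage condition (Assumption~\ref{assump:latent_action_id}(i)), gives the doubly robust form of \citet{kallus2022efficiently}:
\begin{align*}
\E_{p_e}V^\pi+(1-\gamma)^{-1}\omega(S,A)\{R-\theta_R(S,A)\}+\gamma(1-\gamma)^{-1}\omega(S,A)\{V^\pi(S')-\mathcal{M}V^\pi(S,A)\}-V(\pi).
\end{align*}
This shows that the derivative of $V$ is linear in the derivatives of $\theta_R$ and of the transition law. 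Concretely, it equals $(1-\gamma)^{-1}\sum_a\E_S[p^b(a\mid S)\,\omega(S,a)\,\dot\theta_R(S,a)]$ plus the analogous term for $\gamma\,\dot{\mathcal{M}}V^\pi(S,a)$.

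\textbf{Step 2: observed-data representations.} The task is to re-express the derivatives $\dot\theta_R(s,a)$ and $\dot{\mathcal{M}}V^\pi(s,a)$ through observables, using the proxies. Fix $S=s$ and write $p_a=\Pr(A=a\mid s)$. Assumption~\ref{assump:latent_action_id}(iii) factorizes $\E[\widetilde A\,R\mid s]$ and $\E[\widetilde A\,l(S')\mid s]$, so that
\begin{align*}
\E[\widetilde A\,R\mid s]=\sum_a p_a\,\theta_{\widetilde A}(s,a)\,\theta_R(s,a).
\end{align*}
I then check that the weight $g_a$ satisfies $\E[g_a(R,\widetilde A,S)\mid S,A=a']=\mathbb{1}\{a'=a\}$. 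This is a direct computation: $(\widetilde A-\theta_{\widetilde A}(S,1-a))$ and $(R-\theta_R(S,1-a))$ are conditionally independent given $(S,A)$, and both vanish in conditional mean when $A=1-a$. The relevance and surrogate conditions keep the denominators nonzero. The same holds for $g'_a$, with $l(S')$ in place of $R$, and $g'_a$ is additionally uncorrelated with $R$ given $(S,A)$. As a consequence, $\E[g'_a\,\omega(S,a)\,h(R)\mid S]=p_a\,\omega(S,a)\,\E[h(R)\mid S,a]$, and similarly $\E[g_a\,\omega(S,a)\,f(S')\mid S]=p_a\,\omega(S,a)\,\E[f(S')\mid S,a]$. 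So $g'_a$ and $g_a$ act as observed-data surrogates for the indicator $\mathbb{1}\{A=a\}$ when multiplied by functions of $R$ and of $S'$, respectively.

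\textbf{Step 3: the derivative identity.} I now differentiate the identified functional. Its nuisances $(\theta_{\widetilde A},\theta_R,\theta_{S'},p_a,\mathcal{M})$ are all implicit functions of the observed conditional moments. Using the property from Step~2, the terms in $\E[\varphi^\pi\mathcal{S}]$ coming from perturbing $\theta_{\widetilde A}$, $\theta_{S'}$ and $p_a$ should cancel, since those appear only through mean-zero centered factors. What remains should reproduce the full-data derivative of Step~1, which yields $\partial_\epsilon V_\epsilon=\E[\varphi^\pi\mathcal{S}]$. I also verify $\E\varphi^\pi=0$ directly from the same conditional-mean identities.

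\textbf{Main obstacle.} I expect the hardest part to be the dual role of $S'$: it is both the proxy used to build $g'_a$ and the argument of $V^\pi(S')$. Perturbing the transition law therefore moves $\theta_{S'}$ and $\mathcal{M}V^\pi$ simultaneously. This is why the reward residual is paired with $g'_a$, which uses $l(S')$, while the transition residual is paired with $g_a$, which uses $R$. I will check carefully that the cross terms vanish because of conditional independence (iii): for instance, $\E[\{R-\theta_R(S,a)\}\,\partial_\epsilon g'_a\mid S,A]=0$, and likewise for the transition residual. I also need to handle $\omega$ and $V^\pi$ depending on $\epsilon$ through the transition kernel; there I will reuse the Bellman-equation argument from Step~1.
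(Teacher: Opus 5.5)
Your plan is sound, but it takes a genuinely different route from the paper's. The paper never differentiates the identified functional. Instead it proves a general characterization (Lemma~\ref{lemma:IF_condition}). Every observed-data score is $\E(S^F\mid O)$ for a full-data score $S^F$ that splits along $p(s)\,b(a\mid s)\,\mu(\widetilde a\mid s,a)\,h(r\mid s,a)\,q(s'\mid s,a)$. Hence an observed function is an influence function iff its conditional means given $(\widetilde A,S,A)$, $(R,S,A)$ and $(S',S,A)$ coincide with those of the full-data influence function. The paper then checks these three identities using Lemma~\ref{lemma:indicator} and (iii).

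You instead differentiate through the nuisances. Your Step~3 is still only asserted (``should cancel''); it becomes an argument once you use your Step~2 identities as representations valid at every $\epsilon$, with $g_a,g_a'$ built from the nuisances of $P_\epsilon$. For fixed $f$ these are $p_a(s)\theta_R(s,a)=\E\{g_a'(O)R\mid S=s\}$, $p_a(s)\mathcal Mf(s,a)=\E\{g_a(O)f(S')\mid S=s\}$, and $p_a(s)=\E\{g_a'(O)\mid S=s\}=\E\{g_a(O)\mid S=s\}$. Differentiating at $\epsilon=0$ gives
\begin{align*}
p_a(s)\,\dot\theta_R(s,a)&=\E\bigl[g_a'(O)\{R-\theta_R(s,a)\}\mathcal S(O)\mid S=s\bigr]+\E\bigl[\dot g_a'(O)\{R-\theta_R(s,a)\}\mid S=s\bigr],\\
p_a(s)\,\dot{\mathcal M}f(s,a)&=\E\bigl[g_a(O)\{f(S')-\mathcal Mf(s,a)\}\mathcal S(O)\mid S=s\bigr]+\E\bigl[\dot g_a(O)\{f(S')-\mathcal Mf(s,a)\}\mid S=s\bigr].
\end{align*}
The second terms are exactly your cross terms, and they vanish. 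The residual kills the stratum $A=a$. On $A=1-a$, each term of $\dot g_a$ and $\dot g_a'$ keeps one undifferentiated factor with conditional mean zero; this includes the $\theta_R$ sitting inside $g_a$, which your list of perturbations omits. Substituting $f=V^\pi$ into your Step~1 formula then yields $\dot V=\E\{\varphi^\pi(O)\mathcal S(O)\}$.

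These representations require (iii) under every $P_\epsilon$, not just at the truth. So your submodels must be paths inside the observed-data model, i.e., images of latent models. This is the same restriction the paper encodes by writing scores as $\E(S^F\mid O)$.

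What each route buys:
\begin{itemize}
\item The paper's lemma gives an if-and-only-if description of all observed-data influence functions. It also needs no differentiability of the implicitly identified $\theta_{\widetilde A},\theta_R,\theta_{S'}$, and the dual role of $S'$ appears only in the single identity for $\E(\varphi^\pi\mid S',S,A)$.
\item Your route is more hands-on. It shows directly that $\varphi^\pi$ is first-order insensitive to the nuisances inside $g_a,g_a'$. That is the same cancellation the paper has to re-derive in Step~3 of the proof of Theorem~\ref{thm:error}.
\end{itemize}

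One correction. Assumption~\ref{assump:latent_action_id}(ii) is relevance in distribution. It does not make $\theta_R(s,1)-\theta_R(s,0)$ or $\theta_{S'}(s,1)-\theta_{S'}(s,0)$ nonzero; for example, the action could shift only the variance of $R$. For $g_a,g_a'$ to be defined you need these mean contrasts as a separate assumption, which the paper effectively imposes through Assumption~\ref{assump:regularity}(ii). Only for the binary $\widetilde A$ do (ii) and (iv) yield the nonzero contrast.
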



Throughout, we assume  $\Var\!\left[\varphi^\pi(O)\right]<\infty$. It can be seen that the above influence function is a function of the observed data. In addition, for any $a$, the quantities $g_a(R,\widetilde A,S)$, $g_a'(S',\widetilde A,S)$, $\theta_R(S,a)$, and $\mathcal{M}V^\pi(s,a)$ are all identified, following the proof of Theorem \ref{thm:identification}.
In particular, $g_a$ and $g_a'$ play the role of the indicator function $\mathds{1}(A=a)$, as we can show that (see Lemma \ref{lemma:indicator} in the Appendix)
\begin{gather*}
    \E\!\left[g_a(R,\widetilde A,S)\mid S,A\right]=\E\!\left[g_a'(S',\widetilde A,S)\mid S,A\right]=\mathds{1}(A=a).
\end{gather*}
Consequently,
\begin{align*}
    &\E\!\left[\sum_a g_a(R,\widetilde A,S)\,\omega(S,a)\,\middle|\,S,A\right]
    =\E\!\left[\sum_a g_a'(S',\widetilde A,S)\,\omega(S,a)\,\middle|\,S,A\right]\\
    = & \sum_a \mathds{1}(A=a)\,\omega(S,a)
    =\omega(S,A).
\end{align*}
Projecting the observed influence function in Equation \eqref{eq:IF} onto the full-data space yields 
\begin{align*}
\E_{p_e} (V^\pi(S))+(1-\gamma)^{-1} \Big[\omega(S,A)\left( R+ \gamma V^\pi(S')-Q^\pi(S,A)\right) \Big]-V(\pi),
\end{align*} which coincides with the full efficient influence function in \citet{kallus2022efficiently}.


Based on the influence function in Equation \eqref{eq:IF}, we construct the LURE estimator of $V(\pi)$ by replacing the nuisance components with estimators and taking the sample mean.
Let $\widehat g_a(R,\widetilde A,S)$ and $\widehat g_a'(S',\widetilde A,S)$ be obtained by plugging in estimates of $\theta_{\widetilde A}$, $\theta_{S'}$, and $\theta_R$.
Let $\widehat\omega(s,a)$ be any estimator of the marginal density ratio. 
Also let $\widehat Q^\pi(s,a)$ be an estimator of the action-value function, and define $\widehat V^\pi(s):=\widehat Q^\pi(s,\pi)=\sum_{a\in\mathcal A}\pi(a\mid s)\widehat Q^\pi(s,a)$.
Recall that ${\mathcal M}V(s,a)=\E[V^\pi(S')\mid S=s,A=a]$. By the Bellman equation, we have
$$Q^\pi(s,a)=\theta_R(s,a)+\gamma \mathcal M V^\pi(s,a).$$
Consequently, $\widehat{\mathcal M V^{\pi}}(S_{i,t},a)$ can be obtained by
\begin{gather*}
    \widehat{\mathcal M V^{\pi}}(S_{i,t},a)= \gamma^{-1} \left[\widehat Q^\pi(S_{i,t},a)-\widehat\theta_R(S_{i,t},a)\right].
\end{gather*}

The resulting value estimator is then
\begin{align} \label{eq:IF_estimator} 
& \E_{p_e}\big(\widehat V^\pi(S)\big)
+(1-\gamma)^{-1}\frac{1}{NT}\sum_{i=1}^N\sum_{t=1}^T\sum_{a\in\mathcal A}
\widehat g_a'(S'_{i,t},\widetilde A_{i,t},S_{i,t})\,\widehat\omega(S_{i,t},a)\,\Big[R_{i,t}-\widehat\theta_R(S_{i,t},a)\Big]
\nonumber\\
&+\frac{\gamma}{1-\gamma}\frac{1}{NT}\sum_{i=1}^N\sum_{t=1}^T\sum_{a\in\mathcal A}
\widehat g_a(R_{i,t},\widetilde A_{i,t},S_{i,t})\,\widehat\omega(S_{i,t},a)\,\Big[\widehat V^\pi(S_{i,t+1})-\widehat {\mathcal M V^{\pi}}(S_{i,t},a)\Big].
\end{align}


\section{Estimation Procedure} \label{sec: estimation}

Recall that by Equation \eqref{eq:IF_estimator},
constructing the LURE estimator requires estimating the following nuisance components: $\omega(s,a)$, $\theta_{\widetilde A}(s,a)$, $\theta_{R}(s,a)$, $\theta_{S'}(s,a)$, and $Q^\pi(s,a)$ (equivalently, $\mathcal{M}V^\pi(s,a)$). This section describes how we estimate these nuisance functions. The core challenge is that the true action $A$ is unobserved, so standard estimating methods in existing RL cannot be applied directly. We address this via a novel EM-style \citep{dempster1977maximum} iterative procedure that alternates between
(i) estimating nuisance functions given the current posterior probability (the ``M-step''), and
(ii) updating the posterior probability given the current nuisance estimates (the ``E-step'').
We start by initializing the posterior probability function
\[
\eta(O;a)
:=
\Pr\big(A=a\mid O\big),
\qquad
O=(S,\widetilde A,R,S').
\]
Suppose for the moment that the true value of $\eta(O;a)$ is known. Then the key nuisance functions can be estimated via weighted regression or weighted likelihood maximization. We now provide the details.

\csname textbf\endcsname{\csname textit\endcsname{Behavior policy}} $b$. The behavior policy satisfies
\begin{align*}
 b(a\mid S)
 &=\E\big[\mathds{1}(A=a)\mid S\big]
 =\E\Big[\E\big[\mathds{1}(A=a)\mid O\big] \,\Big|\,S\Big]
 =\E\big[\eta(O;a)\mid S\big].
\end{align*}
Thus, we can estimate $b(a\mid s)$ by regressing $\eta(O;a)$ on $S$.

\textit{\textbf{Measurement model}} $\theta_{\widetilde A}$. 
By Bayes' rule and iterated expectations,
\begin{align*}
\theta_{\widetilde A}(S,a)
&=\frac{\Pr(\widetilde A=1, A=a\mid S)}{\Pr(A=a\mid S)}
=\frac{\E\big[\widetilde A\mathds{1}(A=a)\mid S\big]}{b(a\mid S)}
=\frac{\E\big[\widetilde A\,\eta(O;a)\mid S\big]}{\E\big[\eta(O;a)\mid S\big]}.
\end{align*}
Equivalently, for any $a=0,\,1$,
\begin{gather*}
\E\left[ \eta(O;a)\left(\widetilde A-\theta_{\widetilde A}(S,a)\right) \middle| S \right]=0.
\end{gather*}
We can therefore estimate $\theta_{\widetilde A}(S,a)$ by regressing $\widetilde A$ on $S$ with observation weights $\eta(O;a)$.

\textit{\textbf{Reward and transition model}} $\theta_R$ and $\theta_{S'}$. Similarly, we estimate
$\theta_R(s,a)=\E(R\mid S=s,A=a)$ and
$\theta_{S'}(s,a)=\E(l(S')\mid S=s,A=a)$
by weighted regression with observation weights $\eta(O;a)$.

\textbf{\textit{Marginal density ratio}} $\omega$.
We now discuss how to estimate the marginal density ratio $\omega(S,a)$.
By Lemma \ref{lemma:balancing} in Appendix, for any function $f:\mathcal S\times\mathcal A\to\mathbb R$, the underlying true density ratio $\omega$ satisfies
\begin{equation}
\E\Big[
\omega(S,A)
\big[
f(S,A) - \gamma \, f(S',\pi)
\big]
\Big]
=
(1-\gamma)\,\mathbb E_{S\sim p_e}\Big[f(S,\pi)\Big].
\label{eq:balance}
\end{equation}
When the action $A$ can be fully observed, \citet{uehara2020minimax} propose to
estimate $\omega$ by solving the following minimax problem
\begin{align*}
\min_{\omega \in \Omega} \max_{f \in \mathcal{F}}
\Bigg[
\frac{1}{NT}\sum_{i=1}^N\sum_{t=1}^T 
\omega(S_{i,t},A_{i,t})
\Big(
f(S_{i,t},A_{i,t})
-
\gamma f(S_{i,t+1},\pi)
\Big)
-(1-\gamma)\E_{p_e}[f(S,\pi)]
\Bigg]^2,
\end{align*} with $\mathcal{F}$ serving as a discriminator function class. Such a minimax formulation is commonly employed in reinforcement learning and, more broadly, in conditional moment restriction problems; see, e.g., \citet{liu2018breaking,dikkala2020minimax,shi2021deeply}.

However, in our setting, the true action $A$ is unobserved. As a result, the above estimation strategy cannot be applied directly, since expectations involving $(S, A, S')$ cannot be approximated by empirical averages based on the observed data. To address this, we integrate
over the latent action using the posterior probabilities/weights
$\eta(O;a)$. Specifically, for any measurable function
$m(O,A)$, we have 
\begin{align*}
\E\!\left[m(O,A)\right]
&= \E\!\left[ \sum_{a \in \mathcal{A}} m(O,a)\,\mathds{1}(A=a) \right] = \E\!\left[ \E\!\left( \sum_{a \in \mathcal{A}} m(O,a)\,\mathds{1}(A=a) \,\middle|\, O \right) \right] \\
&= \E\!\left[ \sum_{a \in \mathcal{A}} m(O,a)\, \E\!\left(\mathds{1}(A=a) \mid O \right) \right] = \E\!\left[ \sum_{a \in \mathcal{A}} \eta(O;a)\, m(O,a) \right],
\end{align*}
and hence can be approximated by \begin{gather} \label{eq:computation}
    \frac{1}{NT}\sum_{i=1}^N\sum_{t=1}^T\sum_{a\in\mathcal A}
\widehat \eta(O_{i,t};a)\,m(O_{i,t},a),
\end{gather} where $\widehat \eta(o ;a)$ is some estimator for $\eta(o;a)$.

Hence, combining Equations \eqref{eq:balance} and \eqref{eq:computation}, we can
estimate $\omega$ by solving the following weighted minimax problem
\begin{align}
\min_{\omega \in \Omega} \max_{f \in \mathcal{F}}
\Bigg[
\frac{1}{NT}\sum_{i=1}^N\sum_{t=1}^T\sum_{a\in\mathcal A}\widehat \eta(O_{i,t};a)\,
\omega(S_{i,t},a)
\Big(
f(S_{i,t},a)
-
\gamma f(S_{i,t+1},\pi)
\Big)
-(1-\gamma)\E_{p_e}[f(S,\pi)]
\Bigg]^2,
\label{eq:latent_dice_combined}
\end{align}
\noindent which generalizes the estimation idea of \citet{liu2018breaking} and \citet{uehara2020minimax} to handle the latent action space.

\csname textbf\endcsname{\csname textit\endcsname{Q-function}} $Q^\pi$.
We next discuss how to estimate the $Q$-function $Q^\pi(S,A)$. Combining the Bellman equation
\begin{equation}
Q^\pi(s,a) = \mathbb E\!\left[ R + \gamma V^\pi(S') \mid S=s, A=a \right],
\label{eq:bellman_Q}
\end{equation} with Equation \eqref{eq:computation}, we obtain \begin{gather*}
    \E\!\left[\sum_{a\in\mathcal A}
\eta(O;a)\left(R+\gamma V^\pi(S')-Q^\pi(S,a) \right)\right]=0.
\end{gather*} This motivates us to employ a weighted version of fitted $Q$-evaluation \citep[FQE,][]{ernst2005tree}.
 We summarize the estimation procedure in Algorithm \ref{alg:fqe}. 

\begin{algorithm}[H]
\caption{Weighted FQE}
\label{alg:fqe}
\begin{algorithmic}[1]
\State \textbf{Initialize:} An initial estimator $\widehat Q^{(0)}$.
\For{$m=0,1,2,\ldots$ until convergence}
    \State Compute
    $
    Y_{i,t}^{(m)}
    =
    R_{i,t}
    +
    \gamma
    \sum_{a\in\mathcal A}
    \pi(a\mid S_{i,t+1})
    \widehat Q^{(m)}(S_{i,t+1},a)
    $
    for all $(i,t)$.
    \State Update $\widehat Q^{(m+1)}$ by solving
    \Statex
    \[
    \widehat Q^{(m+1)}
    \in
    \arg\min_{Q\in\mathcal Q}
    \frac{1}{NT}
    \sum_{i=1}^N\sum_{t=1}^T
    \sum_{a\in\mathcal A}
    \widehat \eta(O_{i,t};a)
    \left\{
    Y_{i,t}^{(m)} - Q(S_{i,t},a)
    \right\}^2 .
    \]
\EndFor
\State \textbf{Output:} $\widehat Q^\pi \gets \widehat Q^{(m)}$.
\end{algorithmic}
\end{algorithm}

\textbf{\textit{Posterior probability}} $\eta$.
The preceding nuisance estimators all treat $\eta(O;a)$ as given, so it remains to describe its estimation. Under Bayes' rule and the conditional independence  in Assumption~\ref{assump:latent_action_id}, the posterior probability of the latent action is
\begin{align}
\eta(O;a)
=\frac{
b(a\mid S)\,
\mu(\widetilde A\mid S,a)\,
h(R\mid S,a)\,
q(S'\mid S,a)
}{
\sum_{a'\in\mathcal A}
b(a'\mid S)\,
\mu(\widetilde A\mid S,a')\,
h(R\mid S,a')\,
q(S'\mid S,a')
},
\label{eq:posterior_bayes}
\end{align}
where $\mu(\widetilde A\mid S,a)$, $h(R\mid S,a)$, and $q(S'\mid S,a)$ denote the conditional probabilities or densities of $\widetilde A$, $R$, and $S'$ given $(S,A=a)$, respectively. For binary $\widetilde A$, we have
\[
\mu(\widetilde A\mid S,a)
=
\widetilde A\,\theta_{\widetilde A}(S,a)
+
(1-\widetilde A)\left[1-\theta_{\widetilde A}(S,a)\right].
\]
Equation~\eqref{eq:posterior_bayes} shows that estimating $\eta$ reduces to estimating the generative model components $b$, $\mu$, $h$, and $q$. However, each of these components itself depends on $\eta$ through the weighted regressions described above. We therefore adopt an iterative scheme that alternates between updating the generative components given the current posterior (the M-step) and recomputing the posterior given the updated components (the E-step). In the M-step, the behavior policy, measurement model, and conditional means of the reward and next state are updated via weighted least-squares regressions with weights $\widehat\eta^{(k)}$, while the conditional densities $q$ and $h$ are updated via weighted maximum likelihood. In the E-step, the posterior is refreshed by plugging the updated estimates into Equation \eqref{eq:posterior_bayes}. The full procedure is summarized in Algorithm~\ref{alg:latent_action_em}.

\textbf{\textit{Label alignment}}.
Although Assumption~\ref{assump:latent_action_id} allows us to identify the action labels at the population level, the numerical labels produced by Algorithm~\ref{alg:latent_action_em} are arbitrary: component \(0\) may correspond to the true action \(A=0\), or it
may correspond to the true action \(A=1\). Therefore, before evaluating the final estimator, we apply a label-alignment step.
Before label alignment, the index \(a\in\{0,1\}\) in the output of Algorithm~\ref{alg:latent_action_em} should be viewed only as an algorithmic component index. Under Assumption~\ref{assump:latent_action_id}(iv), the component with the larger conditional probability of observing \(\widetilde A=1\) is interpreted as the true action \(A=1\). We therefore define the population label selector
$d=\arg\max_{a\in\{0,1\}}\E[\theta_{\widetilde A}(S,a)],
$
and its empirical analogue
\[
\widehat d
=\arg\max_{a\in\{0,1\}}\frac{1}{NT}\sum_{i=1}^N\sum_{t=1}^T \widehat\theta_{\widetilde A}(S_{i,t},a).
\]
Thus, the component indexed by \(\widehat d\) is assigned to the true action \(A=1\), while the component indexed by \(1-\widehat d\) is assigned to the true action \(A=0\).

Let \(\widehat V_a(\pi)\), \(a\in\{0,1\}\), denote the LURE estimator obtained by treating component \(a\) as the true action \(A=1\) and component \(1-a\) as the true action \(A=0\), with all action-indexed nuisance estimates relabeled accordingly. The final label-aligned estimator is then
\[
\widetilde V(\pi)=\widehat V_{\widehat d}(\pi)
\]

\textbf{\textit{Cross-fitting procedure}}.
Finally, we introduce the LURE estimator with a $K$-fold cross-fitting procedure \citep{schick1986asymptotically,chernozhukov2018double}, which mitigates overfitting bias and helps control the empirical-process remainder. Specifically, partition the $N$ trajectories into $K$ folds $\{I_k\}_{k=1}^K$ of approximately equal size. For each fold $k$, estimate the nuisance functions $\theta_{\widetilde A}$, $\theta_{S'}$, $\theta_R$, $\omega$, and $ Q^\pi$ using the training data $\{O_{i,t}: i \notin I_k\}$, and then evaluate the value function on the held-out fold $I_k$. The final estimator is
\[
\widehat V(\pi) = \frac{1}{K}\sum_{k=1}^K \widetilde V^{(k)}(\pi),
\]
where $\widetilde V^{(k)}(\pi)$ denotes the fold-specific estimator constructed with nuisance estimates fit off fold $k$ and evaluated on fold $k$.

\begin{algorithm}[t]
\caption{Iterative latent-action reweighting}
\label{alg:latent_action_em}
\begin{algorithmic}[1]
\State \textbf{Initialize} $\{\widehat \eta^{(0)}(O_{i,t};a)\}_{i,t}$ such that $\sum_{a\in\mathcal A}\widehat \eta^{(0)}(O_{i,t};a)=1$.
\Repeat
\State \textbf{M-step (weighted supervised learning):} given $\{\widehat \eta^{(k)}(O_{i,t};a)\}_{i,t}$, $\forall a,$ update
\[
\begin{gathered}
\widehat b^{(k+1)}
=\argmin_{b \in \mathcal{B}}\; \frac{1}{NT} \sum_{i=1}^N\sum_{t=1}^T \big[\widehat \eta^{(k)}(O_{i,t};a)-b(a\mid S_{i,t})\big]^2, \\
\widehat \theta_{\widetilde A}^{(k+1)}
=\argmin_{\theta_{\widetilde A} \in \Theta_{\widetilde A}}\; \frac{1}{NT} \sum_{i=1}^N\sum_{t=1}^T \widehat \eta^{(k)}(O_{i,t};a)\big[\widetilde A_{i,t}-\theta_{\widetilde A}(S_{i,t},a)\big]^2, \\
\widehat \theta_{S'}^{(k+1)}
=\argmin_{\theta_{ S'} \in \Theta_{ S'}}\; \frac{1}{NT} \sum_{i=1}^N\sum_{t=1}^T \widehat \eta^{(k)}(O_{i,t};a)\big[ l(S_{i,t+1})-\theta_{S'}(S_{i,t},a)\big]^2, \\
\widehat\theta_R^{(k+1)}
=\argmin_{\theta_R\in \Theta_R}\; \frac{1}{NT}\sum_{i=1}^N\sum_{t=1}^T \widehat \eta^{(k)}(O_{i,t};a)\big[R_{i,t}-\theta_R(S_{i,t},a)\big]^2, \\
\widehat q^{(k+1)}
=\argmin_{q\in\mathcal Q}\; -\frac{1}{NT}\sum_{i=1}^N\sum_{t=1}^T \sum_{a \in \mathcal A} \widehat \eta^{(k)}(O_{i,t};a)\log q(S_{i,t+1}\mid S_{i,t},a), \\
\widehat h^{(k+1)}
=\argmin_{h\in\mathcal H}\; -\frac{1}{NT}\sum_{i=1}^N\sum_{t=1}^T \sum_{a \in \mathcal A} \widehat \eta^{(k)}(O_{i,t};a)\log h(R_{i,t}\mid S_{i,t},a).
\end{gathered}
\]
\State \textbf{E-step (posterior update):} for each $(i,t)$ and $a\in\mathcal A$, let 
\begin{gather*}
\nonumber \widehat \mu^{(k+1)}(\widetilde A_{i,t}\mid S_{i,t},a)=\widetilde A_{i,t}\widehat \theta_{\widetilde A}^{(k+1)}(S_{i,t},a)+(1-\widetilde A_{i,t})\left(1-\widehat \theta_{\widetilde A}^{(k+1)}(S_{i,t},a)\right) \\
\widehat \eta^{(k+1)}(O_{i,t};a)
=
\frac{
\widehat b^{(k+1)}(a\mid S_{i,t})\,
\widehat \mu^{(k+1)}(\widetilde A_{i,t}\mid S_{i,t},a)\,
\widehat h^{(k+1)}(R_{i,t}\mid S_{i,t},a)\,
\widehat q^{(k+1)}(S_{i,t+1}\mid S_{i,t},a)
}{
\sum_{a' \in\mathcal A}
\widehat b^{(k+1)}(a'\mid S_{i,t})\,
\widehat \mu^{(k+1)}(\widetilde A_{i,t}\mid S_{i,t},a')\,
\widehat h^{(k+1)}(R_{i,t}\mid S_{i,t},a')\,
\widehat q^{(k+1)}(S_{i,t+1}\mid S_{i,t},a')
}.
\end{gather*}
\Until{convergence}
\State \textbf{Output} $\widehat \eta(O_{i,t};a)$ and nuisance estimates
$\widehat b$, $\widehat\theta_{S'}$, $\widehat\theta_{\widetilde A}$, and $\widehat\theta_R$.
\end{algorithmic}
\end{algorithm}


\section{Theory} \label{sec: theory}

In this section, we establish the theoretical properties of the cross-fitted LURE estimator $\widehat V(\pi)$, showing that it achieves $\sqrt{NT}$-consistency and asymptotic normality. These results enable the construction of asymptotically valid confidence intervals and hypothesis tests for the policy value $V(\pi)$.

For later use, we introduce some notation. Let \(\|\cdot\|\) denote the \(L_2(P)\) norm under the data-generating distribution, and let \(\|\cdot\|_\infty\) denote the supremum norm. For each action-indexed estimated nuisance function
\(\widehat f\in\{\widehat\omega,\widehat\theta_{\widetilde A},\widehat\theta_R,\widehat\theta_{S'}\}\), define 
\(\delta \widehat f(\cdot,a)=\widehat f(\cdot,a)-f(\cdot,a)\), \(a\in\{0,1\}\).
Define $e_M(S,a)=\mathbb E[\widehat V^\pi(S')\mid S,A=a]-\widehat{\mathcal M V^{\pi}}(S,a).$ We begin with the following technical conditions.

\begin{assumption}[Regularity conditions]
\label{assump:regularity}
(i) \text{(Boundedness)} All the nuisance estimators are uniformly bounded. (ii) \textit{(Separation)}
There exists some universal constant \(c>0\) such that, for all $a$,
    $\inf_s\left|\theta_{\widetilde A}(s,a)-\theta_{\widetilde A}(s,1-a)\right|\ge c,$ $\inf_s\left|\theta_R(s,a)-\theta_R(s,1-a)\right|\ge c$ and $\inf_s\left|\theta_{S'}(s,a)-\theta_{S'}(s,1-a)\right|\ge c.$
The same lower bounds hold almost surely when \(\theta_{\widetilde A}\), \(\theta_R\), and \(\theta_{S'}\) are replaced by their estimated counterparts.

\end{assumption}

Assumption~\ref{assump:regularity}(i) is a standard boundedness condition. Assumption~\ref{assump:regularity}(ii) is a separation condition; such conditions are common in causal inference and policy learning, see, e.g., \citet{qian2011performance,zhou2024causal}. It ensures that the denominators appearing in the influence function are bounded away from zero. Intuitively, it requires the hidden actions to induce meaningfully different conditional laws for the surrogate action, reward, and next-state proxy.

\begin{theorem}[Error Bound]\label{thm:error}
Under Assumptions~\ref{assump:latent_action_id} and~\ref{assump:regularity}, we have
\[
\widehat V(\pi)-V(\pi) = \frac{1}{NT} \sum_{i=1}^N\sum_{t=1}^T\varphi^\pi(O_{i,t})+O_p(L_n+I_n) + o_p((NT)^{-1/2}),
\]where $L_n=\max_{a\in\{0,1\}}\|\delta \widehat \theta_{\widetilde A}(\cdot,a)\|^\kappa,$ for some $\kappa>0,$ is the label-selection error, and
\begin{align*}
&I_n = \sum_{a\in\{0,1\}}\E\Big[
\left\{|\delta\widehat\theta_{\widetilde A}(S,a)|
+|\delta\widehat\theta_{S'}(S,a)|
+|\delta\widehat\omega(S,a)|\right\}
|\delta\widehat\theta_R(S,a)| \\
&+ \left\{|\delta\widehat\theta_{\widetilde A}(S,a)|
+|\delta\widehat\theta_R(S,a)|
+|\delta\widehat\omega(S,a)|
\right\}|e_M(S,a)| \\
&+ |\delta\widehat\theta_{\widetilde A}(S,a)|
|\delta\widehat\theta_{S'}(S,a)| + 
|\delta\widehat\theta_{\widetilde A}(S,1-a)|
|\delta\widehat\theta_{S'}(S,1-a)|
|\delta\widehat\theta_R(S,a)| \\
&+
|\delta\widehat\omega(S,a)|
|\delta\widehat\theta_{\widetilde A}(S,1-a)|
|\delta\widehat\theta_{S'}(S,1-a)|
+
|\delta\widehat\omega(S,a)|
|\delta\widehat\theta_{\widetilde A}(S,1-a)|
|\delta\widehat\theta_R(S,1-a)|
\Big] 
\end{align*}
is the second order remainder.
\end{theorem}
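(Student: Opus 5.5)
We split the estimation error into three parts: label selection, an empirical-process term, and a deterministic bias term. Write $\psi(O;\eta)$ for the uncentered summand of \eqref{eq:IF_estimator}, where $\eta$ collects $(\theta_{\widetilde A},\theta_R,\theta_{S'},\omega,Q^\pi)$.

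\textbf{Label selection.} Work on the event $\{\widehat d=d\}$. Assumption~\ref{assump:regularity}(ii) gives $|\E[\theta_{\widetilde A}(S,1)-\theta_{\widetilde A}(S,0)]|\ge c$. Hence a wrong label requires the empirical average of $\widehat\theta_{\widetilde A}$ to deviate by at least $c/2$. Splitting this deviation into empirical-process fluctuation, which is $O_p((NT)^{-1/2})$, and estimation error, which is $O_p(\max_a\|\delta\widehat\theta_{\widetilde A}(\cdot,a)\|)$, Markov's inequality bounds $\Pr(\widehat d\ne d)$. Multiplying by the bounded discrepancy $|\widehat V_0-\widehat V_1|$ then yields the $O_p(L_n)$ term; the exponent $\kappa$ absorbs whatever power the Markov/Chebyshev step delivers. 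On $\{\widehat d=d\}$ we may treat $\widetilde V^{(k)}=\widehat V_d^{(k)}$ as the correctly labeled LURE estimator.

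\textbf{Empirical-process term.} For each fold, decompose
\[
\mathbb P_{n,k}\psi(\widehat\eta)-V(\pi)=(\mathbb P_{n,k}-P)\psi(\eta)+(\mathbb P_{n,k}-P)[\psi(\widehat\eta)-\psi(\eta)]+P[\psi(\widehat\eta)-\psi(\eta)],
\]
and note that $P\psi(\eta)=V(\pi)$ by Theorem~\ref{thm:influence_function}. The first term averages to $\frac{1}{NT}\sum\varphi^\pi$. For the second term, cross-fitting makes $\widehat\eta$ fixed given the training folds. Boundedness and separation in Assumption~\ref{assump:regularity} make $\psi$ Lipschitz in the nuisances, so $\|\psi(\widehat\eta)-\psi(\eta)\|\to0$. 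A conditional Chebyshev bound, applied with trajectory-level independence and treating the $T$ within-trajectory terms as a block with bounded variance, then makes this term $o_p((NT)^{-1/2})$.

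\textbf{Bias term $P[\psi(\widehat\eta)-\psi(\eta)]$.} This is the core of the proof and the main obstacle. Condition on $(S,A)$ and use Assumption~\ref{assump:latent_action_id}(iii), the factorization $R\independent\widetilde A\independent S'\mid(S,A)$. Then
\[
\E[\widehat g_a'\mid S,A]=\frac{\theta_{\widetilde A}(S,A)-\widehat\theta_{\widetilde A}(S,1-a)}{\widehat\theta_{\widetilde A}(S,a)-\widehat\theta_{\widetilde A}(S,1-a)}\cdot\frac{\theta_{S'}(S,A)-\widehat\theta_{S'}(S,1-a)}{\widehat\theta_{S'}(S,a)-\widehat\theta_{S'}(S,1-a)}.
\]
Expanding around the truth, as in Lemma~\ref{lemma:indicator}, gives $\mathds 1(A=a)$ plus terms linear in $\delta\widehat\theta_{\widetilde A}$ and $\delta\widehat\theta_{S'}$ plus their product. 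The separation condition keeps the denominators bounded away from zero.

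The reward term is $\widehat g_a'\widehat\omega(S,a)[R-\widehat\theta_R(S,a)]$. Because $R$ is conditionally independent of $(\widetilde A,S')$ given $(S,A)$, its conditional mean factorizes into $\E[\widehat g_a'\mid S,A]\,\widehat\omega(S,a)\,[\theta_R(S,A)-\widehat\theta_R(S,a)]$. On $\{A=a\}$ the last factor is $-\delta\widehat\theta_R(S,a)$. On $\{A=1-a\}$, $\E[\widehat g_a'\mid S,A=1-a]$ is itself a product $\delta\widehat\theta_{\widetilde A}(S,1-a)\,\delta\widehat\theta_{S'}(S,1-a)$ up to bounded factors; this is exactly where the triple products in $I_n$ arise.

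The next-state term is handled the same way with $\widehat g_a$, which pairs $R$ with $\widetilde A$, and the residual $V^\pi(S')$-type term. The conditional mean of $\widehat V^\pi(S')-\widehat{\mathcal M V^\pi}(S,a)$ given $A=a$ is $e_M(S,a)$. Taking $\E$ over $S$ against $b(a\mid S)$ converts the leading terms into $\E[\omega(S,A)(\cdot)]$-weighted quantities. The balancing identity \eqref{eq:balance} (Lemma~\ref{lemma:balancing}), applied with $f=\widehat Q^\pi-Q^\pi$, then cancels the leading terms against $\E_{p_e}(\widehat V^\pi-V^\pi)$. This leaves the $\delta\widehat\omega$ cross terms.

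The hard bookkeeping is to verify that every first-order term cancels. The terms proportional to the single errors $\delta\widehat\theta_R$, $e_M$ and $\delta\widehat\omega$ must each cancel, using $\widehat Q^\pi=\widehat\theta_R+\gamma\widehat{\mathcal M V^\pi}$ in the Bellman substitution. What remains is a sum of products matching the listed components of $I_n$. Cauchy--Schwarz is not needed, since $I_n$ is stated directly as expectations of absolute products.

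Finally, average over the $K$ folds to conclude.
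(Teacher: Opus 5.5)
Your proposal is correct and follows the paper's proof in all essentials: the same split into a label-selection term, a cross-fitted empirical-process remainder controlled by conditional Chebyshev, and a population drift whose first-order part is removed by conditioning on $(S,A)$, factorizing via Assumption~\ref{assump:latent_action_id}(iii), and applying Lemma~\ref{lemma:balancing} to $\widehat Q^\pi-Q^\pi$. The difference is only bookkeeping. You take the exact conditional mean of each plug-in term on the strata $A=a$ and $A=1-a$ before linearizing, so $\delta\widehat\omega$ never appears at first order and the $\delta\widehat\theta_R$ and $e_M$ terms cancel jointly rather than each separately; the paper instead first expands with the triple-product identity into a linear part and eight higher-order terms. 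As in the paper, your empirical-process step tacitly needs $\|\psi(\widehat\eta)-\psi(\eta)\|\to0$, i.e.\ nuisance consistency beyond the theorem's stated assumptions.
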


Theorem~\ref{thm:error} decomposes the policy evaluation error into the sample average of the influence function, together with a label-selection remainder $L_n$, a second-order remainder $I_n$, and a negligible remainder of order $o_p((NT)^{-1/2})$. We make several remarks. First, the exponent $\kappa>0$ governs the decay rate of the label-selection remainder $L_n$; its exact value depends on the regularity of the label selector and only affects the rate at which $L_n$ converges to zero. Second, under suitable convergence rate conditions on the nuisance function estimators, both $L_n$ and $I_n$ are $o_p((NT)^{-1/2})$, so that the influence-function term dominates the asymptotic expansion. Finally, the remainder $I_n$ consists of products of nuisance estimation errors, which gives rise to a multiple robustness property for consistency, as formalized below.

\begin{corollary}[Multiple Robustness] \label{cor: multiple robust}
Suppose that the assumptions in Theorem \ref{thm:error} hold and the label-alignment step is consistent.
Then we have \begin{gather*}
    \widehat V(\pi) - V(\pi) = o_p(1),
\end{gather*} provided that for each \(a\in\{0,1\}\), at least one of the following conditions is satisfied:\\
    (i) \(\| \widehat\theta_{\widetilde A}(\cdot,a)- \theta_{\widetilde A}(\cdot,a)\|=o_p(1)\), \(\| \widehat\theta_{R}(\cdot,a)- \theta_{R}(\cdot,a)\|=o_p(1)\),
and \(\|e_M(\cdot,a)\|=o_p(1)\);\\
    (ii) \(\| \widehat\theta_{S'}(\cdot,a)- \theta_{S'}(\cdot,a)\|=o_p(1)\), \(\| \widehat\theta_{R}(\cdot,a)- \theta_{R}(\cdot,a)\|=o_p(1)\),
and \(\|e_M(\cdot,a)\|=o_p(1)\);\\
    (iii) \(\| \widehat\omega(\cdot,a)- \omega(\cdot,a)\|=o_p(1)\), \(\| \widehat\theta_{R}(\cdot,a)- \theta_{R}(\cdot,a)\|=o_p(1)\), and \(\| \widehat\theta_{\widetilde A}(\cdot,a)- \theta_{\widetilde A}(\cdot,a)\|=o_p(1)\);\\
    (iv) \(\| \widehat\omega(\cdot,a)- \omega(\cdot,a)\|=o_p(1)\), \(\| \widehat\theta_{\widetilde A}(\cdot,a)- \theta_{\widetilde A}(\cdot,a)\|=o_p(1)\), \(\| \widehat\theta_{S'}(\cdot,a)- \theta_{S'}(\cdot,a)\|=o_p(1)\), and \(\|e_M(\cdot,a)\|=o_p(1)\).
\end{corollary}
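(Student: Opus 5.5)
The plan is to read the corollary off the expansion in Theorem~\ref{thm:error}. By that theorem,
\[
\widehat V(\pi)-V(\pi)=\frac{1}{NT}\sum_{i=1}^N\sum_{t=1}^T\varphi^\pi(O_{i,t})+O_p(L_n+I_n)+o_p((NT)^{-1/2}).
\]
The influence-function average is $O_p((NT)^{-1/2})$, since $\E[\varphi^\pi(O)]=0$ (it is an influence function, Theorem~\ref{thm:influence_function}) and $\Var[\varphi^\pi(O)]<\infty$. A Chebyshev bound under the weak dependence implicit in Theorem~\ref{thm:error}, or a plain LLN across the i.i.d.\ trajectories with $T$ fixed, suffices. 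It therefore remains to show $L_n=o_p(1)$ and $I_n=o_p(1)$. One caveat: Theorem~\ref{thm:error} is stated at the correct label, so when $\widehat\theta_{\widetilde A}$ is not assumed consistent the $L_n$ term must be handled by the hypothesis that the label-alignment step is consistent.

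\textbf{Step 1 (label term).} Under conditions (i), (iii) and (iv), $\|\delta\widehat\theta_{\widetilde A}(\cdot,a)\|=o_p(1)$, so $L_n=o_p(1)$ by continuity of $x\mapsto x^\kappa$ at $0$. Under (ii) this rate is unavailable, so we instead use the assumed consistency of label alignment, $\Pr(\widehat d\neq d)\to0$. This makes the label-selection contribution $o_p(1)$ directly, because on the event $\{\widehat d=d\}$ the estimator coincides with the correctly labeled one.

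\textbf{Step 2 (second-order term).} By Assumption~\ref{assump:regularity}(i), every $\delta\widehat f$ and $e_M$ is uniformly bounded. Each summand of $I_n$ is an expectation of a product of such errors, and Cauchy--Schwarz gives
\[
\E|\delta\widehat f_1\,\delta\widehat f_2|\le\|\delta\widehat f_1\|\,\|\delta\widehat f_2\|,
\qquad
\E|\delta\widehat f_1\,\delta\widehat f_2\,\delta\widehat f_3|\le C\,\|\delta\widehat f_j\|\,\|\delta\widehat f_k\|
\]
for any two of the three factors, with $C$ bounding the third. Hence it suffices that every summand contains at least one factor that is $o_p(1)$ in $L_2$. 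This holds when the sup bound is applied to all but one factor; for the index-$a$ and index-$(1-a)$ terms the check uses the condition for the relevant index.

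\textbf{Step 3 (bookkeeping).} For fixed $a$, the sum over $a\in\{0,1\}$ contains the following terms:
\begin{itemize}[label={}]
\item (1) $\{\delta\theta_{\widetilde A},\delta\theta_{S'},\delta\omega\}\times\delta\theta_R$ at $a$;
\item (2) $\{\delta\theta_{\widetilde A},\delta\theta_R,\delta\omega\}\times e_M$ at $a$;
\item (3) $\delta\theta_{\widetilde A}(a)\,\delta\theta_{S'}(a)$;
\item (4) $\delta\theta_{\widetilde A}(1-a)\,\delta\theta_{S'}(1-a)\,\delta\theta_R(a)$;
\item (5) $\delta\omega(a)\,\delta\theta_{\widetilde A}(1-a)\,\delta\theta_{S'}(1-a)$;
\item (6) $\delta\omega(a)\,\delta\theta_{\widetilde A}(1-a)\,\delta\theta_R(1-a)$.
\end{itemize}
Terms (1)--(3) involve only index $a$. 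Each contains a vanishing factor: $\delta\theta_R$ or $e_M$ under (i) and (ii), $\delta\theta_R$ or $\delta\theta_{\widetilde A}$ under (iii), and $e_M$ or $\delta\theta_{\widetilde A}$ under (iv).

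Terms (4)--(6) mix the two indices and are the main obstacle. A condition at index $a$ alone need not control them, since for example (iv) does not control $\delta\theta_R(a)$ in (4). I would resolve this by applying the hypothesis at index $1-a$ as well, since the corollary requires a condition for each $a$. Every one of (i)--(iv) makes $\delta\theta_{\widetilde A}$ or $\delta\theta_{S'}$ vanish at $1-a$, which kills (4) and (5). For (6), the condition at $1-a$ makes $\delta\theta_{\widetilde A}(1-a)$ or $\delta\theta_R(1-a)$ vanish in all four cases. Summing the finitely many vanishing terms gives $I_n=o_p(1)$, and so $\widehat V(\pi)-V(\pi)=o_p(1)$.
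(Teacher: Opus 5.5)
Your proposal is correct and follows the paper's proof. Like the paper, you use the expansion from Theorem~\ref{thm:error}, dispose of the label-selection term via consistency of label alignment (you note it is exactly zero on $\{\widehat d=d\}$), and check that every summand of $I_n$ carries an $o_p(1)$ factor. Your explicit appeal to the hypothesis at index $1-a$ for the cross-index terms (4)--(6) is a genuine tightening, since the paper's proof tacitly assumes the same condition holds at both indices. Separately, your one-line attribution for terms (1)--(3) is slightly off: term (3) under (i) or (ii), the $\delta\widehat\omega\,e_M$ term under (iii), and the $\delta\widehat\theta_{S'}\,\delta\widehat\theta_R$ term under (iv) vanish through factors you did not list, though each still has a vanishing factor at index $a$, so the conclusion is unaffected.
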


The four cases in Corollary~\ref{cor: multiple robust} highlight two complementary routes to consistency. Cases~(i) and~(ii) correspond to a direct-estimation strategy: consistency is achieved when the reward bridge $\theta_R$ is estimated consistently together with either the latent-action regression $\theta_{\widetilde A}$ or the next-state regression $\theta_{S'}$, provided that the continuation-value error $e_M$ is negligible. In contrast, Cases~(iii) and~(iv) correspond to an MIS-based strategy: once $\omega$ and $\theta_{\widetilde A}$ are consistently estimated, consistency is preserved as long as either $\theta_R$ or $\theta_{S'}$ is correctly specified. This multiple robustness property allows misspecification in one component of the debiasing structure to be offset by correct specification of another.

\textbf{\textit{Asymptotic normality}}. We next strengthen the consistency result to an asymptotic normality result. Let $\alpha_\omega,\alpha_{\widetilde A},\alpha_R,\alpha_{S'},\alpha_M, \alpha_{\min} \geq 0$ such that
\[
\begin{gathered}
\max_{a\in\{0,1\}}\|\delta \widehat \omega(\cdot,a)\|=O_p((NT)^{-\alpha_\omega}),\qquad
\max_{a\in\{0,1\}}\|\delta \widehat \theta_{\widetilde A}(\cdot,a)\|
    =O_p((NT)^{-\alpha_{\widetilde A}}), \\
\max_{a\in\{0,1\}}\|\delta \widehat \theta_R(\cdot,a)\|=O_p((NT)^{-\alpha_R}),\qquad
\max_{a\in\{0,1\}}\|\delta \widehat \theta_{S'}(\cdot,a)\|
    =O_p((NT)^{-\alpha_{S'}}), \\
\max_{a\in\{0,1\}}\|e_M(\cdot,a)\|=O_p((NT)^{-\alpha_M}), \qquad \alpha_{\min}=\min\left(\alpha_\omega,\alpha_{\widetilde A},\alpha_R,\alpha_{S'},\alpha_M\right).
\end{gathered}
\]
Under additional rate conditions $\alpha_{\min}>1/4$, the LURE estimator admits the expansion
\begin{align*}
\widehat V(\pi)-V(\pi)
=
\frac{1}{NT} \sum_{i=1}^N\sum_{t=1}^T
\varphi^\pi(O_{i,t})
+o_p((NT)^{-1/2}).
\end{align*} Consequently, asymptotic normality follows by applying the martingale central limit theorem \citep{mcleish1974dependent} to the leading term.

\begin{theorem}[Asymptotic normality]
\label{thm:asymptotic_normality}
Under Assumptions~\ref{assump:latent_action_id} and \ref{assump:regularity}, suppose further that $\alpha_{\min}>1/4$ and, for some $\kappa>0$, $\max_{a\in\{0,1\}}\|\delta \widehat \theta_{\widetilde A}(\cdot,a)\|^\kappa=o_p((NT)^{-1/2}).$
We have
\[
\sqrt{NT}\big(\widehat V(\pi) - V(\pi)\big) \xrightarrow{d} \mathcal{N}\big(0,\sigma^2_\pi\big),
\]
where $\sigma^2_\pi = \Var(\varphi^\pi(O))$ is the variance of the influence function in Equation~\eqref{eq:IF}.
\end{theorem}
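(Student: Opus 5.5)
The plan is to combine the error decomposition of Theorem~\ref{thm:error} with a martingale central limit theorem applied to the leading influence-function average. By Theorem~\ref{thm:error},
\[
\widehat V(\pi)-V(\pi)=\frac{1}{NT}\sum_{i=1}^N\sum_{t=1}^T\varphi^\pi(O_{i,t})+O_p(L_n+I_n)+o_p((NT)^{-1/2}),
\]
so it suffices to show two things: that $L_n+I_n=o_p((NT)^{-1/2})$, and that $\sqrt{NT}$ times the leading average converges to $\mathcal N(0,\sigma^2_\pi)$. Slutsky's lemma then finishes the proof.

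\emph{Step 1 (remainders).} The label-selection term is handled directly by hypothesis: $L_n=\max_a\|\delta\widehat\theta_{\widetilde A}(\cdot,a)\|^\kappa=o_p((NT)^{-1/2})$.

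For $I_n$, each summand is an expectation of a product of at least two absolute nuisance errors. Some products also contain a third factor. These can be reduced to two factors because all nuisance estimators and the true nuisances are uniformly bounded (Assumption~\ref{assump:regularity}(i), together with boundedness of $R$ and $l(S')$). Hence one factor can be bounded by a constant in sup norm, leaving a product of two errors.

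Each two-factor term is then bounded by Cauchy--Schwarz in $L_2(P)$, giving a product of two $L_2$ rates. Since every rate is $O_p((NT)^{-\alpha_{\min}})$, each term is $O_p((NT)^{-2\alpha_{\min}})$, and $2\alpha_{\min}>1/2$ by assumption. Summing finitely many such terms over $a\in\{0,1\}$ gives $I_n=o_p((NT)^{-1/2})$.

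\emph{Step 2 (CLT for the leading term).} First I would check that $\E[\varphi^\pi(O_{i,t})\mid S_{i,t},\mathcal F_{i,t-1}]=0$, where $\mathcal F_{i,t-1}$ is the past of trajectory $i$. This is the main obstacle, because $\varphi^\pi$ contains the non-conditionally-centered pieces $\E_{p_e}V^\pi(S)-V(\pi)$. These must be grouped with the $\omega$-weighted terms, following the structure of \citet{kallus2022efficiently}.

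For the $\omega$-weighted terms I would argue as follows. Conditioning on $(S,A)$ and using Lemma~\ref{lemma:indicator} together with the conditional independence in Assumption~\ref{assump:latent_action_id}(iii), the products factor. For the term in $R$, we have $\E[g_a'(R-\theta_R(S,a))\mid S,A]=\E[g_a'\mid S,A]\,\E[R-\theta_R(S,a)\mid S,A]=\mathds 1(A=a)(\theta_R(S,A)-\theta_R(S,a))=0$. The same argument applies to the term with $g_a$ and $V^\pi(S')-\mathcal M V^\pi(S,a)$. The Markov property then makes these terms martingale differences with respect to the filtration ordered across $(i,t)$.

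The constant part $\E_{p_e}V^\pi-V(\pi)$ is exactly zero by definition of $V(\pi)$. So the whole of $\varphi^\pi(O_{i,t})$ is a martingale difference, and independence across trajectories handles the index $i$.

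Next I would verify the conditions of the martingale CLT of \citet{mcleish1974dependent}.
\begin{itemize}
\item The conditional Lindeberg condition follows from boundedness: the denominators in $g_a,g_a'$ are bounded below by the separation condition in Assumption~\ref{assump:regularity}(ii), so $\varphi^\pi$ is uniformly bounded.
\item The convergence of $(NT)^{-1}\sum_{i,t}\E[\varphi^\pi(O_{i,t})^2\mid\mathcal F_{i,t-1}]\to\sigma^2_\pi$ in probability follows from stationarity of the data-generating process and a law of large numbers. Stationarity is needed here so that the variance is $\Var(\varphi^\pi(O))$ under $p^b$; if $T\to\infty$ this requires a mixing argument, while for fixed $T$ and $N\to\infty$ it is the i.i.d.\ LLN across trajectories.
\end{itemize}

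\emph{Step 3.} Combining Steps 1 and 2 via Slutsky's lemma yields $\sqrt{NT}(\widehat V(\pi)-V(\pi))\xrightarrow{d}\mathcal N(0,\sigma^2_\pi)$.
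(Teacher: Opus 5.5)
Your proposal is correct and follows the paper's proof essentially step for step. The remainders $L_n$ and $I_n$ are killed by the hypothesis on $\kappa$ and by Cauchy--Schwarz/H\"older with one sup-norm factor, and the leading term is handled by the McLeish martingale CLT, with Lindeberg coming from boundedness and conditional-variance convergence from a stationarity/ergodic law of large numbers. The differences are cosmetic. The paper puts the hidden pair $(S_{i,t},A_{i,t})$ into the filtration instead of conditioning on $S_{i,t}$ and the past, which is equivalent by the tower property and the Markov structure. Also, the constant $\E_{p_e}V^\pi(S)-V(\pi)$ that you flag as the ``main obstacle'' is, as you then note, identically zero, so no regrouping with the $\omega$-weighted terms is needed.
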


Theorem~\ref{thm:asymptotic_normality} establishes that the proposed estimator is asymptotically normal under the stated rate conditions. This enables the construction of Wald-type confidence intervals:
\[
\widehat V(\pi) \pm z_{\alpha/2} \cdot \frac{\widehat\sigma_\pi}{\sqrt{NT}},
\]
where $z_{\alpha/2}$ is the $(1-\alpha/2)$-quantile of the standard normal distribution, and $\widehat\sigma_\pi^2$ can be obtained by computing the sample variance of the estimated influence function, i.e., \[
\widehat\sigma_\pi^2 = \frac{1}{NT}\sum_{i=1}^N\sum_{t=1}^T \big(\widehat\varphi^\pi(O_{i,t})\big)^2.
\] The following proposition guarantees that $\widehat\sigma_\pi^2$ is consistent for the true asymptotic variance $\sigma_\pi^2$, ensuring that the confidence intervals have correct asymptotic coverage.

\begin{prop}[Consistent Variance Estimator] \label{prop:var}
Under the conditions of Theorem~\ref{thm:asymptotic_normality}, $$\sigma_\pi^2= \widehat\sigma_\pi^2+o_p(1).$$
\end{prop}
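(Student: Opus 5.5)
The plan is to decompose the estimated variance into an oracle term and an estimation-error term. Write $\widehat\varphi^\pi$ for the estimated influence function used on each fold. This is the summand of \eqref{eq:IF}, with cross-fitted, label-aligned nuisances $(\widehat\omega,\widehat\theta_{\widetilde A},\widehat\theta_R,\widehat\theta_{S'},\widehat Q^\pi)$ plugged in and $V(\pi)$ replaced by $\widehat V(\pi)$. Then
\[
\widehat\sigma_\pi^2=\frac{1}{NT}\sum_{i,t}\varphi^\pi(O_{i,t})^2+\frac{2}{NT}\sum_{i,t}\varphi^\pi(O_{i,t})\Delta_{i,t}+\frac{1}{NT}\sum_{i,t}\Delta_{i,t}^2,
\qquad \Delta_{i,t}:=\widehat\varphi^\pi(O_{i,t})-\varphi^\pi(O_{i,t}).
\]
By Cauchy--Schwarz, the cross term is bounded by $2\{(NT)^{-1}\sum\varphi^2\}^{1/2}\{(NT)^{-1}\sum\Delta^2\}^{1/2}$. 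It therefore suffices to show two things: first, $(NT)^{-1}\sum_{i,t}\varphi^\pi(O_{i,t})^2\to\E[\varphi^\pi(O)^2]=\sigma_\pi^2$ in probability, which holds because $\E\varphi^\pi(O)=0$; second, $(NT)^{-1}\sum_{i,t}\Delta_{i,t}^2=o_p(1)$.

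For the oracle term, trajectories are i.i.d. across $i$, and $\varphi^\pi$ is bounded: $R$ and $l(S')$ are bounded, $\omega$ and $V^\pi$ are bounded, and the denominators in $g_a,g_a'$ are bounded below by the separation condition in Assumption~\ref{assump:regularity}(ii). The quantities $T^{-1}\sum_t\varphi^\pi(O_{i,t})^2$ are therefore i.i.d. and bounded, with mean $\sigma_\pi^2$ under stationarity of the data distribution, and the weak law of large numbers over $i$ gives the claim. If $T\to\infty$ as well, I would instead invoke the same mixing/ergodicity conditions that underlie the martingale CLT used for Theorem~\ref{thm:asymptotic_normality}.

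For the error term, I work fold by fold. Conditional on the training sample $\{O_{i,t}:i\notin I_k\}$, the nuisances are fixed, and by Markov's inequality it suffices to show $\E[\Delta(O)^2\mid \text{train}]=o_p(1)$. I expand $\widehat\varphi^\pi-\varphi^\pi$ telescopically, swapping one nuisance at a time. Because everything is uniformly bounded (Assumption~\ref{assump:regularity}(i)) and the denominators of $g_a,\widehat g_a,g_a',\widehat g_a'$ are bounded below by $c$, each piece is Lipschitz in the corresponding nuisance. This gives a pointwise bound
\[
|\Delta(O)|\lesssim\sum_a\bigl\{|\delta\widehat\omega(S,a)|+|\delta\widehat\theta_{\widetilde A}(S,a)|+|\delta\widehat\theta_R(S,a)|+|\delta\widehat\theta_{S'}(S,a)|+|\delta\widehat{\mathcal MV^\pi}(S,a)|\bigr\}+|\widehat V^\pi(S')-V^\pi(S')|+|\widehat V(\pi)-V(\pi)|+|\E_{p_e}(\widehat V^\pi-V^\pi)|.
\]
The rate conditions with $\alpha_{\min}>1/4$ make the first four $L_2$ norms $o_p(1)$, and Theorem~\ref{thm:asymptotic_normality} gives $\widehat V(\pi)-V(\pi)=o_p(1)$. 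The label-selection event $\{\widehat d\neq d\}$ has probability tending to zero under the condition on $\|\delta\widehat\theta_{\widetilde A}\|^\kappa$, so I can restrict to the event $\{\widehat d=d\}$.

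The main obstacle is the terms involving $\widehat Q^\pi$. The stated rate assumptions control $e_M(S,a)=\E[\widehat V^\pi(S')\mid S,a]-\widehat{\mathcal MV^\pi}(S,a)$, which is a Bellman-type residual, not the $L_2$ errors of $\widehat V^\pi$ and $\widehat{\mathcal MV^\pi}$ themselves. I would handle this by grouping the two terms together. The combination $\widehat V^\pi(S')-\widehat{\mathcal MV^\pi}(S,a)$ minus its true counterpart equals $\{\widehat V^\pi(S')-V^\pi(S')\}-\mathcal M(\widehat V^\pi-V^\pi)(S,a)-e_M(S,a)$. The part $\{\widehat V^\pi-V^\pi\}(S')-\mathcal M(\widehat V^\pi-V^\pi)(S,a)$ is a conditionally mean-zero fluctuation, but its second moment still requires $\|\widehat V^\pi-V^\pi\|=o_p(1)$. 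To get this, I would use the Bellman identity: since $e_M=\gamma^{-1}\{\mathcal T\widehat Q^\pi-\widehat Q^\pi\}$ up to the $\widehat\theta_R$ error, and the Bellman operator is a $\gamma$-contraction, the $L_2$ error of $\widehat Q^\pi$ can be bounded under the coverage condition of Assumption~\ref{assump:latent_action_id}(i) by a constant (involving a distribution-shift factor bounded via $\omega$) times $\|e_M\|+\|\delta\widehat\theta_R\|$. If this route fails, I would simply add $L_2$-consistency of $\widehat Q^\pi$ as an implicit requirement, which is standard in FQE analyses.

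Combining the pieces, $(NT)^{-1}\sum\Delta^2=o_p(1)$ on each of the finitely many folds, and hence $\widehat\sigma_\pi^2=\sigma_\pi^2+o_p(1)$.
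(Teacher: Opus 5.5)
Your proposal follows essentially the same route as the paper's proof. Both use the identity $x^2-y^2=(x-y)^2+2y(x-y)$ with Cauchy--Schwarz on the cross term, split off $(\widehat V(\pi)-V(\pi))^2$, bound $\phi^\pi(O;\widehat\Theta)-\phi^\pi(O;\Theta_0)$ pointwise using boundedness and separation, and apply fold-wise conditional Markov together with the $L_2$ rates. You are more careful than the paper on the oracle term, where the paper simply asserts $\overline\sigma_\pi^2-\sigma_\pi^2=o_p(1)$, and on the label event $\{\widehat d\neq d\}$.

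Your diagnosis of the obstacle is correct, and the paper glosses over it. Its bound contains $\|\delta\widehat V^\pi\|^2$, which it declares $O_p(n^{-2\alpha_{\min}})$, although no such rate is among the conditions of Theorem~\ref{thm:asymptotic_normality}. So your fallback is exactly the assumption the paper makes implicitly.

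Your primary route, however, does not work as written. $\omega$ measures the shift from $p_e$, not from the behavior distribution, so it gives no bound on $(I-\gamma\mathcal M^\pi)^{-1}$ in $L_2(P)$. A bound linear in $\|e_M\|+\|\delta\widehat\theta_R\|$ would need a concentrability condition for the $\pi$-chain started from the data distribution.

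Since you only need $o_p(1)$, a truncated Neumann series suffices. Write $\delta\widehat Q^\pi:=\widehat Q^\pi-Q^\pi$ and $(\mathcal M^\pi f)(s,a):=\E[f(S',\pi)\mid S=s,A=a]$. The relations $\widehat Q^\pi=\widehat\theta_R+\gamma\widehat{\mathcal M V^{\pi}}$ and $Q^\pi=\theta_R+\gamma\mathcal MV^\pi$ then give
\[
(I-\gamma\mathcal M^\pi)\,\delta\widehat Q^\pi=\delta\widehat\theta_R-\gamma e_M,
\qquad\text{hence}\qquad
\delta\widehat Q^\pi=\sum_{k=0}^{K-1}\gamma^k(\mathcal M^\pi)^k(\delta\widehat\theta_R-\gamma e_M)+\gamma^K(\mathcal M^\pi)^K\delta\widehat Q^\pi .
\]
The last term is $O(\gamma^K)$ in sup norm by Assumption~\ref{assump:regularity}(i). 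The finite sum is $o_p(1)$ in $L_2(P)$ whenever $\mathcal M^\pi$ is bounded on $L_2(P)$. This holds, for example, if $b(a\mid s)$ is bounded below and the law of $S'$ has bounded density with respect to that of $S$, as under stationarity. Fixing $K$ with $\gamma^K$ small and letting $NT\to\infty$ yields $\|\delta\widehat Q^\pi\|=o_p(1)$, and hence $\|\delta\widehat V^\pi\|=o_p(1)$ under the same conditions.

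The constant $\E_{p_e}\delta\widehat V^\pi$ needs no such argument. Lemma~\ref{lemma:balancing} with $f=\delta\widehat Q^\pi$ gives $\E_{p_e}\delta\widehat V^\pi=(1-\gamma)^{-1}\E[\omega(S,A)\{\delta\widehat\theta_R(S,A)-\gamma e_M(S,A)\}]$, which the stated rates already control.

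There is also a minor sign slip: $\delta\widehat V^\pi(S')-\delta\widehat{\mathcal M V^{\pi}}(S,a)=\delta\widehat V^\pi(S')-\mathcal M\delta\widehat V^\pi(S,a)+e_M(S,a)$.
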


\section{Numerical Studies} \label{sec: sim}

In this section, we evaluate the finite-sample performance of the proposed LURE estimator through synthetic experiments and a benchmark dataset. 

\textbf{Competing methods}. We compare our proposal with representative direct, importance-sampling, and doubly robust estimators that ignore the latent-action structure and instead treat the observed surrogate $\widetilde A$ as if it were the true action. Specifically, we consider the following competing methods: sequential importance sampling \citep[SIS,][]{precup2000eligibility}, fitted $Q$ evaluation \citep[FQE,][]{ernst2005tree}, least-squares temporal difference \citep[LSTD,][]{shi2022statistical}, marginalized importance sampling \citep[MIS,][]{uehara2020minimax}, and double reinforcement learning \citep[DRL,][]{kallus2022efficiently}. 
Crucially, all five baselines use the observed surrogate $\widetilde A$ in place of the true action, so they are expected to exhibit systematic bias when misclassification is present. 

Throughout, we set the discount factor to $\gamma=0.7$, generate $N=50$ independent offline trajectories of length $T=50$ in each replication, and vary the misclassification rate over four scenarios $ \Pr(A \neq \widetilde A) \in\{0.05,0.1,0.2,0.3\}$. The surrogate action is subject to misclassification, generated according to $\Pr(\widetilde A\neq A)=\tau$, independently of $(S,A)$; equivalently, $\theta_{\widetilde A}(s,1)=1-\tau$ and $\theta_{\widetilde A}(s,0)=\tau$ for all $s$. We consider $\tau\in\{0.05,0.1,0.2,0.3\}$, corresponding to Scenarios 1--4.

\textbf{\textit{Proxy variable selection}}. Recall that $\theta_{S'}(s,a)=\E\!\left[l(S')\mid S=s,A=a\right]$ denotes the conditional expectation of a known feature mapping $l(\cdot)$ of the next state, given the current state and action. In implementation, we restrict attention to coordinate-wise features $l(s) = s^{(j)}$, the $j$-th component of the state variable, so that $\theta_{S'}(s,a)=\E\!\left[S'^{(j)}\mid S=s,A=a\right]$ represents the corresponding conditional mean. To construct $g_a'(O)$, we select the coordinate $j$ based on the Pearson partial correlation between $S'^{(j)}$ and observed $\widetilde A$ conditional on $S$, and choose the coordinate with the largest absolute value.

\subsection{Tabular MDP}

\textbf{Data-generating process.}
We consider a tabular MDP with $|\mathcal{S}|=3$ states, binary actions $\mathcal{A}=\{0,1\}$. All environment parameters are fixed across replications and specified as follows. The transition kernel is given by
\[
q(\cdot\mid s, 0)=
\begin{pmatrix}
0.2 & 0.5 & 0.3\\
0.1 & 0.3 & 0.6\\
0.05 & 0.25 & 0.7
\end{pmatrix},\qquad
q(\cdot\mid s, 1)=
\begin{pmatrix}
0.7 & 0.2 & 0.1\\
0.4 & 0.4 & 0.2\\
0.2 & 0.5 & 0.3
\end{pmatrix},
\]
where each row specifies the distribution of the next state $S'$. The reward function is $\theta_R(s,a)$ with $\theta_R(\cdot,0)=(1.0,\,0.5,\,0.0)^\top$ and $\theta_R(\cdot,1)=(2.0,\,1.5,\,0.5)^\top$, and observed rewards follow
\[
R_{i,t}=\theta_R(S_{i,t},A_{i,t})+\varepsilon_{i,t},\qquad \varepsilon_{i,t}\sim\mathcal{N}(0,0.5^2).
\]
\begin{figure}[h]
    \centering
\includegraphics[width=1\textwidth]{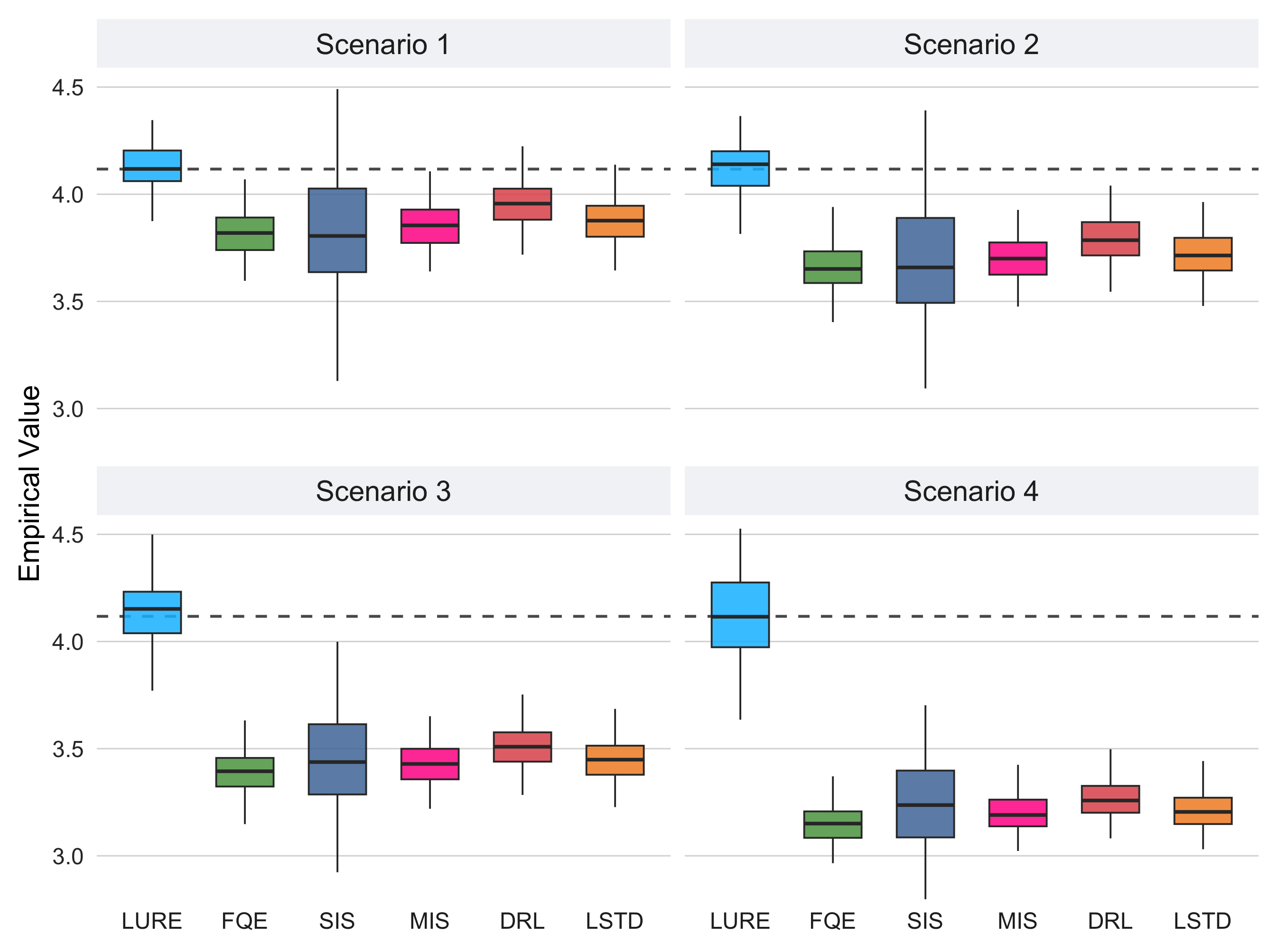}
    \caption{Empirical distributions of the estimated policy value under Scenarios~1 ($\tau=0.05$), 2 ($\tau=0.1$), 3 ($\tau=0.2$), and 4 ($\tau=0.3$), based on $N=50$ trajectories of length $T=50$ over $100$ replications. The dotted vertical line denotes the true policy value.}
    \label{fig:tab}
\end{figure}
The behavior policy satisfies $b(1\mid s)=(0.3,\,0.5,\,0.7)$ for $s=1,2,3$, while the target policy is given by $\pi(1\mid s)=0.8$ for all $s$.  The initial state distribution $p_e$ is uniform over $\mathcal{S}$.
In this tabular setting, the true policy value can be computed analytically via the Bellman equations, and all nuisance functions in LURE are estimated nonparametrically using weighted empirical frequencies and conditional means within each state or state--action cell.

\begin{table}[H]
\centering
\caption{Empirical coverage rates of the proposed $95\%$ confidence intervals under the Tabular MDP, General MDP, and OpenAI Gym environments across four scenarios.}
\label{tab:coverage}
\begin{tabular}{cS[table-format=1.2]S[table-format=1.2]S[table-format=1.2]}
\toprule
Scenario & {Tabular MDP} & {General MDP} & {OpenAI Gym} \\
\midrule
1 & 0.96 & 0.94 & 0.94 \\
2 & 0.98 & 0.95 & 0.96 \\
3 & 0.92 & 0.94 & 0.95 \\
4 & 0.96 & 0.91 & 0.94 \\
\bottomrule
\end{tabular}
\end{table}

Figure~\ref{fig:tab} displays the empirical distributions of the estimated policy value across the four scenarios. All competing methods exhibit noticeable bias, which becomes more pronounced as the misclassification rate increases. In contrast, LURE remains well-centered around the true policy value across all settings. Table~\ref{tab:coverage} also reports the empirical coverage rates of the proposed $95\%$ confidence intervals for the LURE estimator, which are consistently close to the nominal level across all scenarios. We note that, among the competing methods, both LSTD and DRL can in principle be accompanied by confidence intervals. However, because these estimators exhibit substantial bias in our simulations, their empirical coverage is extremely low. Therefore, we report coverage only for the proposed LURE estimator.

\subsection{General MDP}

\begin{figure}[H]
    \centering
    \includegraphics[width=1\textwidth]{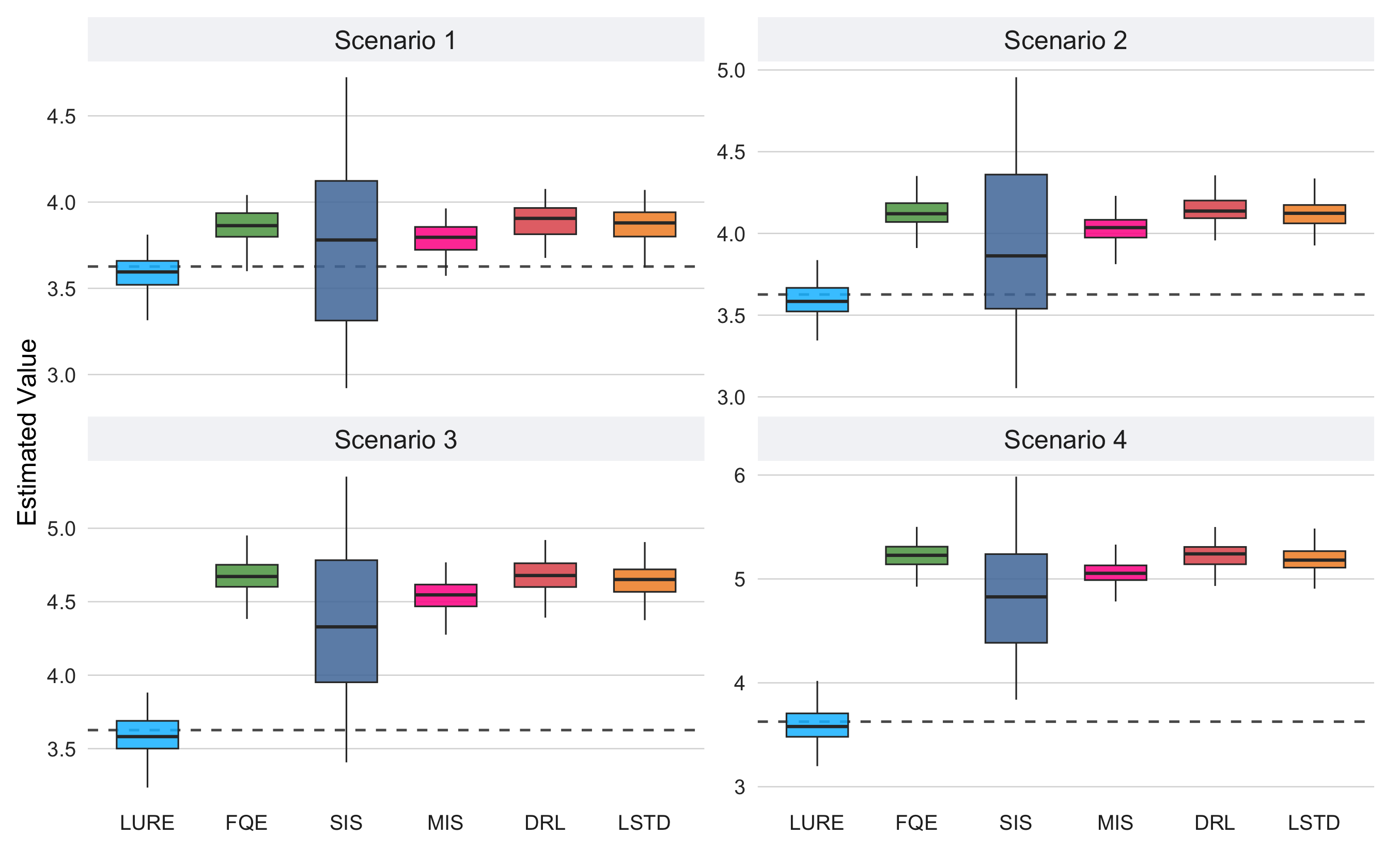}
    \caption{Empirical distributions of the estimated policy value under Scenarios~1 ($\tau=0.05$), 2 ($\tau=0.10$), 3 ($\tau=0.20$), and 4 ($\tau=0.30$), with $N=50$ trajectories of length $T=50$.}
    \label{fig:linear}
\end{figure}
We next consider a continuous-state MDP following a setup similar to that in \citet{shi2022statistical} and \citet{wang2023projected}, but with the addition of surrogate action misclassification. The state space is $\mathcal{S}=\mathbb{R}^2$, so that the state at time $t$ in trajectory $i$ is denoted by $S_{i,t}=\left(S_{i,t}^{(1)},S_{i,t}^{(2)}\right)^\top\in\mathbb{R}^2$, with binary actions $\mathcal{A}=\{0,1\}$ and discount factor $\gamma=0.7$. The initial state is drawn from independent Gaussians,
$S_{1}^{(1)}\sim \mathcal{N}(0.25,0.75^2),
$ and $S_{1}^{(2)}\sim \mathcal{N}(0.05,0.75^2).$
The offline behavior policy is state-independent with $b(1\mid s)=0.5$.
Given the current state $S_t$ and true action $A_t$, the dynamics are \begin{gather*}
    S_{t+1}^{(1)}
= \frac{1}{2}S_t^{(1)}+\frac{3}{5}(2A_t-1)+0.4S_t^{(1)}A_t+\varepsilon_{t}^{(1)},\\
\mbox{and } S_{t+1}^{(2)}= \frac{1}{3}S_t^{(2)}-\frac{3}{5}(2A_t-1)-0.3S_t^{(2)}A_t+\varepsilon_{t}^{(2)},
\end{gather*}
where $\varepsilon_{t}^{(1)},\varepsilon_{t}^{(2)}\stackrel{\mathrm{i.i.d.}}{\sim}\mathcal{N}(0,0.5^2)$. The reward is generated according to
$R_t=1+S_t^{(1)}+\frac{1}{2}S_t^{(2)}+\frac{3}{2}A_t+\varepsilon_{t},$ where $\varepsilon_{t}\sim\mathcal{N}(0,0.5^2).$
 We consider the following deterministic target policy $\pi(1\mid s)=\mathds{1}\left(s^{(1)}\ge 0.25,\ s^{(2)}\ge -0.10\right).$ For the LURE estimator, we use linear features to estimate the nuisance functions. The same linear specification is also used for all the competing methods.

Figure~\ref{fig:linear} shows the empirical distributions of the estimated policy value across the four scenarios. Similar to the tabular case, all competing methods exhibit bias that worsens with higher misclassification rates, while LURE remains well-centered around the true policy value. The empirical coverage rates of the proposed confidence intervals are also close to the nominal level across all scenarios, as reported in Table~\ref{tab:coverage}.

\subsection{OpenAI Gym CartPole}

We evaluate the proposed method in the CartPole environment using offline data with horizon $T=50$ and sample size $N=50$. The state is four-dimensional, denoted by $S=(S^{(1)},S^{(2)},S^{(3)},S^{(4)})^\top$, where $S^{(1)}$ is the cart position, $S^{(2)}$ is the cart velocity, $S^{(3)}$ is the pole angle, and $S^{(4)}$ is the pole angular velocity. The action $A\in\{0,1\}$ corresponds to applying a force to the cart in one of two directions. As in the previous setting, the behavior policy is state-independent, with $b(1\mid s)=0.5$ for all states $s$, and the target policy is deterministic,
\[
\pi(1\mid S)=\mathds{1}\left(S^{(1)}>-0.5,\; S^{(3)}<0.1\right).
\]

Similarly to \citet{uehara2020minimax} and \citet{shi2021deeply}, we modify the standard CartPole environment as follows. We add Gaussian noise with standard deviation $0.05$ to the next state and use the resulting noisy state in all subsequent transitions. In addition, the deterministic next state generated by the CartPole dynamics is shifted by $0.8A$ in each coordinate before the state noise is added. The reward is generated according to
\[
R_t = 1-\frac{\left(S_t^{(1)}\right)^2}{11.52}-\frac{\left(S_t^{(3)}\right)^2}{288}+0.8A_t+\varepsilon_t,\qquad \varepsilon_t\sim \mathcal{N}(0,0.2^2),
\] for all $t$. Across all nuisance functions and all implemented methods, including the proposed method and the competing approaches, we use the same additive cubic B-spline specification.

\begin{figure}[H]
    \centering
    \includegraphics[width=1\textwidth]{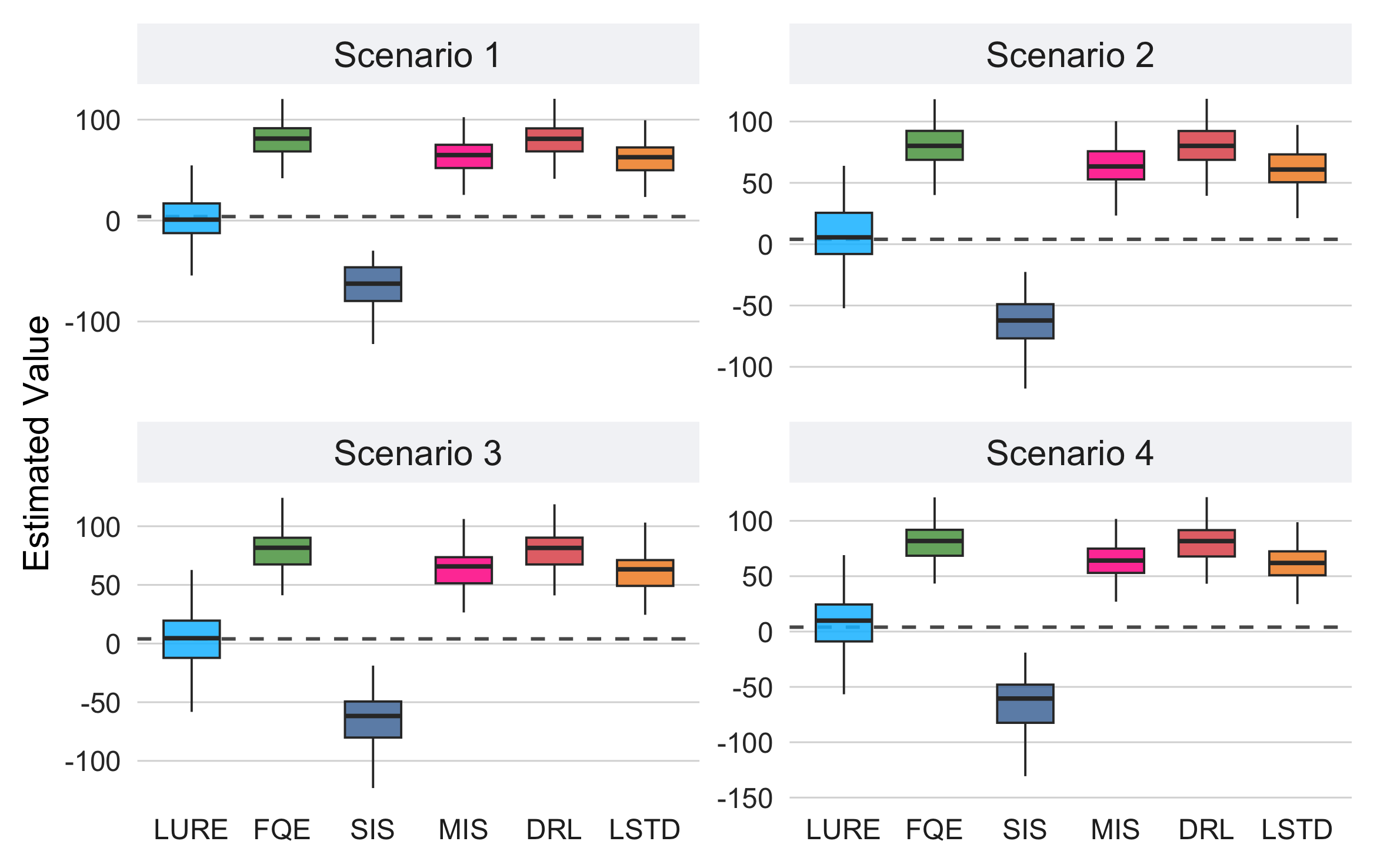}
    \caption{Empirical distributions of the estimated policy value under Scenarios~1 ($\tau=0.05$), 2 ($\tau=0.10$), 3 ($\tau=0.20$), and 4 ($\tau=0.30$), with $N=50$ trajectories of length $T=50$.}
    \label{fig:gym}
\end{figure}

Figure~\ref{fig:gym} summarizes the estimated policy value across the four scenarios for all the methods. As in the previous settings, the LURE estimator outperforms all competing methods. The empirical coverage rates of the proposed confidence intervals are also close to the nominal level across all scenarios, as reported in Table~\ref{tab:coverage}. Unlike the previous two settings, the performance of the competing methods remains relatively stable across different misclassification rates, which may be attributed to the more complex dynamics and reward structure in this environment that can partially mitigate the bias from treating $\widetilde A$ as the true action. Nevertheless, LURE still achieves superior performance by properly accounting for the latent-action structure.

\section{Real Data Analysis}\label{sec: real}

In this section, we apply the proposed method to a sepsis dataset from MIMIC-III \citep{johnson2016mimic}, a database containing information on critical care patients from Beth Israel Deaconess Medical Center. We focus on ICU stays with complete observations over 20 decision stages, yielding $T=20$ and $N=500$. The outcome is the sequential organ failure assessment (SOFA) score \citep{jones2009sequential}, where a larger score indicates more severe organ dysfunction and a higher risk of mortality. Following \citet{bian2025off}, we study a binary intravenous-fluid intervention to illustrate the method in a clinically interpretable setting. Specifically, at each decision time $t$, we consider $A_t \in \{0,1\}$, where $A_t=1$ indicates that intravenous fluid is administered and $A_t=0$ indicates no intravenous fluid. The state vector includes 46 demographic, physiological, laboratory, and clinical variables, such as age, gender, weight, and vital signs.

We evaluate three target policies. The first policy always administers IV, $\pi_{\mathrm{iv}}(1\mid s)=1.$ The second policy never administers IV, namely $\pi_{\mathrm{noiv}}(1\mid s)=0.$ Similarly to \citet{bian2025off}, we also evaluate a SOFA-tailored policy that administers IV when the current SOFA score is at least 3 and no IV otherwise, $\pi_{\mathrm{ta}}(1\mid s)=\mathds{1}\left(\mathrm{SOFA}\geq 3\right).$ This policy is motivated by the finding that a SOFA score greater than 2 reflects an overall mortality risk of approximately 10\% in a general hospital population with suspected infection \citep{singer2016third}.

We apply the proposed LURE estimator with discount factor $\gamma=0.9$. The nuisance functions are estimated using linear features of the state variables. For comparison, we also report the DRL estimator based on FQE and MIS. As in the simulation studies, we construct $\theta_{S'}(s,a)$ using the Pearson partial correlation between each next-state component $S'^{(j)}$ and the observed treatment $\widetilde A$ conditional on $S$, and choose as the proxy variable the component with the largest absolute partial correlation.

\begin{table}[t]
\centering
\caption{Estimated policy values for the MIMIC-III sepsis analysis using DRL and LURE. Smaller values indicate lower expected SOFA scores.}
\label{tab:mimic}
\begin{tabular}{lcccc}
\toprule
Policy  & DRL estimate & DRL 95\% CI & LURE estimate & LURE 95\% CI \\
\midrule
Always IV  & 66.29 & [63.78, 68.79] & 62.10 & [60.16, 64.04] \\
No IV  & 67.15 & [62.42, 71.88] & 68.63 & [67.03, 70.22]\\
SOFA-tailored rule & 66.91 & [64.27, 69.55] & 65.92 & [64.30, 67.53] \\
\bottomrule
\end{tabular}
\end{table}

Table~\ref{tab:mimic} reports the estimated policy values. Because smaller values correspond to better outcomes, the LURE estimator favors the always-IV policy, with an estimated SOFA score of 62.10, followed by the SOFA-tailored rule at 65.92 and the no-IV policy at 68.63. The 95\% confidence interval for the always-IV policy, [60.16, 64.04], does not overlap with the intervals for either the SOFA-tailored rule or the no-IV policy. By contrast, the DRL confidence intervals overlap substantially across all three policies. These results suggest that accounting for the hidden-action structure may lead to different policy evaluations in this sepsis application, highlighting the potential impact of hidden actions on empirical conclusions.

\section{Discussion} \label{sec: discuss}

This paper studies off-policy evaluation when the true action is hidden and only a noisy surrogate is observed. By exploiting the next state as an internal proxy for the hidden action, we establish identification of the policy value and derive an observed-data influence function. The resulting LURE estimator is multiply robust and asymptotically normal, which supports valid large-sample inference under flexible nuisance estimation. A novel EM-based procedure is proposed for estimating the nuisance functions.
Across tabular, continuous-state, and OpenAI Gym environments, the simulations show that explicitly accounting for hidden actions can reduce bias relative to standard OPE methods that treat the surrogate action as the truth. 


\spacingset{1.6}

\bibliographystyle{apalike}

\bibliography{references}

\appendix

\spacingset{1.8}

\newpage

\section{Auxiliary Results and Proofs}

\subsection{Lemmas}

Throughout, we use $c$ and $C$ to denote a generic constant that can vary from line to line. Moreover, to simplify notation, we sometimes write $\varphi^\pi(O;\Theta_0)$ as $\varphi^\pi(O)$. We use $\varphi^\pi(O;\widehat\Theta)$ to denote the influence function obtained by replacing the true nuisance functions with their estimated counterparts.  We also write \(n=NT\).

\begin{lemma} \label{lemma:balancing}
For any measurable function $f(s,a)$, the following holds.
    \begin{gather*}
        \E_{p_e}(f(S;\pi))+(1-\gamma)^{-1}\E\left[\omega(S,A)(\gamma f(S';\pi)-f(S,A)) \right]=0. 
    \end{gather*}
\end{lemma}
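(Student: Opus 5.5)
The plan is to reduce the identity to a telescoping argument over the discounted visitation distribution $p^*$. Fix a measurable (bounded, or at least $p^*$- and $p^b$-integrable) function $f$. The expectation $\E[\cdot]$ on the left is taken under the offline law $p^b(s,a)\,q(s'\mid s,a)$ of $(S,A,S')$. First I change measure to $p^*$. By the coverage condition in Assumption~\ref{assump:latent_action_id}(i), $\omega=p^*/p^b$ is well defined wherever $p^*>0$. Since the transition kernel $q$ is the same under behavior and target policies, we have
\begin{align*}
\E\left[\omega(S,A)\big(\gamma f(S';\pi)-f(S,A)\big)\right]
=\E_{(S,A)\sim p^*,\,S'\sim q(\cdot\mid S,A)}\left[\gamma f(S';\pi)-f(S,A)\right].
\end{align*}

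Next I expand $p^*=(1-\gamma)\sum_{t\ge1}\gamma^{t-1}p_t^\pi$ and use the Markov structure under $\pi$. If $(S_t,A_t)\sim p_t^\pi$ and $S_{t+1}\sim q(\cdot\mid S_t,A_t)$, then $S_{t+1}$ has the time-$(t+1)$ state marginal under $\pi$. Since $A_{t+1}\sim\pi(\cdot\mid S_{t+1})$, this gives $\E[f(S_{t+1};\pi)]=\E_{p_{t+1}^\pi}[f(S,A)]$. Writing $m_t:=\E_{p_t^\pi}[f(S,A)]$, and noting $m_1=\E_{p_e}[f(S;\pi)]$ because $S_1\sim p_e$ and $A_1\sim\pi$, the right-hand side above becomes
\begin{align*}
(1-\gamma)\sum_{t\ge1}\gamma^{t-1}\left(\gamma m_{t+1}-m_t\right)
=(1-\gamma)\left(\sum_{t\ge2}\gamma^{t-1}m_t-\sum_{t\ge1}\gamma^{t-1}m_t\right)
=-(1-\gamma)m_1 .
\end{align*}
Multiplying by $(1-\gamma)^{-1}$ and adding $\E_{p_e}(f(S;\pi))=m_1$ gives zero, which is the claim. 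It also yields \eqref{eq:balance} after rearrangement.

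The only delicate point is justifying the interchange of the infinite sum with the expectation and the splitting of the telescoping series. For bounded $f$ this is immediate: $|m_t|\le\|f\|_\infty$ and $\gamma<1$, so both series converge absolutely and dominated convergence (Fubini) applies. For general measurable $f$ I would state the lemma under the integrability condition $\E_{p^*}|f|<\infty$ together with the next-state analogue, which is what makes both sides finite. Under that condition the same absolute-convergence argument goes through. The change of measure in the first step uses that on $\{p^b=0\}$ we also have $p^*=0$, so no mass is lost.
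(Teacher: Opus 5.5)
Your proposal is correct and follows essentially the same route as the paper. Both arguments write $\E_{p_e} f(S;\pi)$ as the $t=1$ term of the discounted visitation sum. Both then use the one-step recursion $p_{t+1}^\pi(s',a')=\pi(a'\mid s')\sum_a\int q(s'\mid s,a)\,p_t^\pi(s,a)\,ds$ to identify the shifted sum with $\gamma(1-\gamma)^{-1}\E[\omega(S,A)f(S';\pi)]$, which is exactly your telescoping of $\gamma m_{t+1}-m_t$. Your added integrability condition (e.g.\ bounded $f$), which justifies interchanging the sum and the expectation, is a welcome refinement, since the paper's ``any measurable $f$'' tacitly assumes it.
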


\begin{proof}
    \begin{align*}
        &\E_{p_e}(f(S;\pi))=\int_s \sum_a f(s,a) \underbrace{\pi(a|s) p_e(s)}_{p_1^\pi(s,a)} ds\\
       =& \; \int_s \sum_a f(s,a) \left[\underbrace{p_1^\pi(s,a)+\sum_{t=2}^\infty \gamma^{t-1} p_t^\pi(s,a)}_{(1-\gamma)^{-1}p^*(s,a)}-\sum_{t=2}^\infty \gamma^{t-1} p_t^\pi(s,a) \right] ds\\
       =& \; (1-\gamma)^{-1}\E\left[\omega(S,A)f(S,A) \right]-\int_s \sum_a f(s,a)\sum_{t=2}^\infty \gamma^{t-1} p_t^\pi(s,a) ds.
    \end{align*} It thus remains to show that \begin{gather*}
        \int_s \sum_a f(s,a)\sum_{t=2}^\infty \gamma^{t-1} p_t^\pi(s,a) ds=(1-\gamma)^{-1}\E\left[\gamma \omega(S,A) f(S';\pi)\right].
    \end{gather*} It follows that 
\begin{align*}
       &\int_s \sum_a f(s,a)\sum_{t=2}^\infty \gamma^{t-1} p_t^\pi(s,a) ds=\int_{s'} \sum_{a'} f(s',a')\sum_{t=2}^\infty \gamma^{t-1} p_{t}^\pi(s',a') ds'\\
       =& \; \int_{s'} \sum_{a'} f(s',a')\sum_{t=1}^\infty \gamma^{t} p_{t+1}^\pi(s',a') ds=\int_{s,s'} \sum_{a,a'} f(s',a')\sum_{t=1}^\infty \gamma^{t} p_{t+1}^\pi(s',a'|s,a)p_{t}^\pi(s,a) ds' ds\\
       =& \; \int_{s,s'} \sum_{a,a'} f(s',a')\sum_{t=1}^\infty \gamma^{t} \pi(a'|s')p(s'|s,a)p_{t}^\pi(s,a) ds' ds\\
         =& \; \int_{s,s'}  \sum_{a} f(s',\pi)p(s'|s,a) \underbrace{\sum_{t=1}^\infty \gamma^{t} p_{t}^\pi(s,a)}_{\gamma(1-\gamma)^{-1}p^*(s,a)} ds' ds=(1-\gamma)^{-1}\E\left[\gamma \omega(S,A) f(S';\pi)\right].
     \end{align*} This completes the proof.
\end{proof}

\begin{lemma} \label{lemma:indicator}
    \begin{gather*}
\E[g_a(R,\widetilde A, S)\mid S,A]=\E[g_a'(S',\widetilde A, S)\mid S,A]=\mathds 1(A=a).
    \end{gather*} 
\end{lemma}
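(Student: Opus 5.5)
The argument rests on the conditional independence in Assumption~\ref{assump:latent_action_id}(iii): given $(S,A)$, the variables $R$, $\widetilde A$ and $S'$ are mutually independent. The functions $g_a$ and $g_a'$ are products of two factors, and each factor is a known function of $S$ times a single one of these variables. So the conditional expectation of the product given $(S,A)$ factors into the product of the conditional expectations. The nuisance functions $\theta_{\widetilde A}$, $\theta_R$ and $\theta_{S'}$ are evaluated at the true values, and the denominators are nonzero (Assumption~\ref{assump:regularity}(ii), or relevance in Assumption~\ref{assump:latent_action_id}(ii)), so every factor is a well-defined bounded function of $S$.

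For $g_a$, conditional independence gives
\begin{align*}
\E[g_a(R,\widetilde A,S)\mid S,A]
=\frac{\E[\widetilde A\mid S,A]-\theta_{\widetilde A}(S,1-a)}{\theta_{\widetilde A}(S,a)-\theta_{\widetilde A}(S,1-a)}\cdot
\frac{\E[R\mid S,A]-\theta_{R}(S,1-a)}{\theta_{R}(S,a)-\theta_{R}(S,1-a)} .
\end{align*}
By definition, $\E[\widetilde A\mid S,A]=\theta_{\widetilde A}(S,A)$ and $\E[R\mid S,A]=\theta_R(S,A)$. Since $A$ is binary, it suffices to check the two cases $A=a$ and $A=1-a$. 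When $A=a$, each fraction has identical numerator and denominator, so each equals $1$ and the product is $1$. When $A=1-a$, the first numerator is $\theta_{\widetilde A}(S,1-a)-\theta_{\widetilde A}(S,1-a)=0$, so the product is $0$ (the second numerator also vanishes). Hence $\E[g_a\mid S,A]=\mathds 1(A=a)$.

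The argument for $g_a'$ is identical, with $R$ replaced by $l(S')$, using $\widetilde A\independent S'\mid(S,A)$ and $\E[l(S')\mid S,A]=\theta_{S'}(S,A)$.

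The only point needing care is the factorization step. It requires the independence of $\widetilde A$ and $R$ (respectively $\widetilde A$ and $S'$) conditional on $(S,A)$, not just on $S$. It also requires that the $\theta$'s be treated as fixed functions of $S$ that can be pulled out of the conditional expectation. Both conditions hold here, so I do not expect a genuine obstacle. The boundedness of $R$ and $l(S')$ ensures all the expectations exist.
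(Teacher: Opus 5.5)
Your proposal is correct and matches the paper's proof. The paper also factors the conditional expectation using $\widetilde A\independent R\mid(S,A)$ (respectively $\widetilde A\independent S'\mid(S,A)$) and then notes that the product of ratios equals $\mathds 1(A=a)$. One small correction: the relevance condition $R\nind A\mid S$ alone does not guarantee $\theta_R(s,1)\neq\theta_R(s,0)$ (and likewise for $\theta_{S'}$), so the nonvanishing denominators should be attributed to the separation condition in Assumption~\ref{assump:regularity}(ii); for $\theta_{\widetilde A}$, Assumption~\ref{assump:latent_action_id}(iv) already suffices.
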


\begin{proof}
The proof is based on the proof of Theorem 4 in \citet{zhou2024causal}.
    \begin{align*}
& \E[g_a(R,\widetilde A, S)\mid S,A]= \E\left[\frac{\widetilde A-\theta_{\widetilde A}(S,1-a)}{\theta_{\widetilde A}(S,a)-\theta_{\widetilde A}(S,1-a)} \frac{R-\theta_{R}(S,1-a)}{\theta_{R}(S,a)-\theta_{R}(S,1-a)} \middle| S,A\right]\\
        = & \E\left[\frac{\widetilde A-\theta_{\widetilde A}(S,1-a)}{\theta_{\widetilde A}(S,a)-\theta_{\widetilde A}(S,1-a)}  \middle| S,A\right] \E\left[\frac{R-\theta_{R}(S,1-a)}{\theta_{R}(S,a)-\theta_{R}(S,1-a)} \middle| S,A\right],
    \end{align*}where the last equality holds by the conditional independence assumption $R\independent \widetilde A| S,A$.

    It follows that 
  \begin{align*}
\E[g_a(R,\widetilde A, S)\mid S,A]= \frac{\theta_{\widetilde A}(S,A)-\theta_{\widetilde A}(S,1-a)}{\theta_{\widetilde A}(S,a)-\theta_{\widetilde A}(S,1-a)}  \frac{\theta_{R}(S,A)-\theta_{R}(S,1-a)}{\theta_{R}(S,a)-\theta_{R}(S,1-a)} =\mathds 1(A=a)
    \end{align*}

Following the same derivation, we also have 
\begin{gather*}
\E[g_a'(S',\widetilde A, S)\mid S,A]=\mathds 1(A=a).
    \end{gather*}
This completes the proof.
\end{proof} 

\begin{lemma} \label{lemma:IF_condition}
A function $H(\widetilde{A},R,S',S)$ is an influence function for $V(\pi)$ in the observed data model if and only if there exists $\text{IF}^F$, an influence function for the full data model where $A$ is observed, such that 
\begin{align}
\mathbb E(H\mid \widetilde A,S,A) &= \mathbb E(\text{IF}^F\mid \widetilde A,S,A), \nonumber\\
\mathbb E(H\mid R,S,A) &= \mathbb E(\text{IF}^F\mid R,S,A), \\
\mathbb E(H\mid S',S,A) &= \mathbb E(\text{IF}^F\mid S',S,A). \nonumber
\end{align}
\end{lemma}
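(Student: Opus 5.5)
Throughout, I will use the standard characterization of influence functions. In a model $\mathcal P$, a mean-zero, finite-variance function $H$ is an influence function for a parameter $\psi$ if and only if, along every regular parametric submodel $P_t$ with score $s$,
\[
\frac{d}{dt}\psi(P_t)\big|_{t=0}=\E[Hs].
\]
The observed-data law is the marginal of the full-data law $P^F$ on $(S,A,\widetilde A,R,S')$. Hence every observed-data submodel is induced by a full-data submodel, and the observed score is the projection $\E[s^F\mid O]$. Because $V(\pi)$ is a functional of $P^F$ that is identified (Theorem~\ref{thm:identification}), the derivative is the same along both. So $H(O)$ is an observed-data influence function if and only if
\[
\E[H\,\E(s^F\mid O)]=\E[H s^F]=\E[\mathrm{IF}^F s^F]\quad\text{for all full-data scores }s^F.
\]
That is, $H-\mathrm{IF}^F$ must be orthogonal to the full-data tangent space. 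The plan is to show that, for the full-data model implied by Assumption~\ref{assump:latent_action_id}, this orthogonality is exactly the three conditional-mean conditions.

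\textbf{Step 1: describe the full-data tangent space.} Under the conditional independence in Assumption~\ref{assump:latent_action_id}(iii), the full-data density factorizes as
\[
p(s)\,b(a\mid s)\,\mu(\widetilde a\mid s,a)\,h(r\mid s,a)\,q(s'\mid s,a).
\]
With otherwise unrestricted factors, the tangent space is the orthogonal sum $\Lambda_S\oplus\Lambda_A\oplus\Lambda_{\widetilde A}\oplus\Lambda_R\oplus\Lambda_{S'}$. Here $\Lambda_S=\{f(S):\E f=0\}$, $\Lambda_A=\{f(S,A):\E[f\mid S]=0\}$, $\Lambda_{\widetilde A}=\{f(\widetilde A,S,A):\E[f\mid S,A]=0\}$, and $\Lambda_R$, $\Lambda_{S'}$ are defined analogously. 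I treat $(S,S')$ as one transition draw, as the paper does; the discounted-visitation parameter depends on $P^F$ only through $q$, $h$, $\pi$ and $p_e$.

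\textbf{Step 2: check orthogonality component by component.} Let $D=H-\mathrm{IF}^F$.

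Orthogonality to $\Lambda_{\widetilde A}$ says that $\E[D\mid\widetilde A,S,A]$ is a function of $(S,A)$ alone, i.e. it equals $\E[D\mid S,A]$. The analogous statements hold for $\Lambda_R$ and $\Lambda_{S'}$.

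Orthogonality to $\Lambda_A$ and $\Lambda_S$ then requires $\E[D\mid S,A]$ to be constant, and since both $H$ and $\mathrm{IF}^F$ are mean zero, that constant is $0$.

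Combining these, $D\perp\Lambda$ holds exactly when $\E[D\mid\widetilde A,S,A]$, $\E[D\mid R,S,A]$ and $\E[D\mid S',S,A]$ all vanish, which are the displayed conditions. Conversely, those three conditions imply $\E[D\mid S,A]=0$ by iterated expectation, which gives orthogonality to $\Lambda_S$ and $\Lambda_A$ as well. The ``only if'' direction follows by taking any full-data influence function $\mathrm{IF}^F$ with $H-\mathrm{IF}^F\perp\Lambda$. Such an influence function exists, for example $\mathrm{IF}^F=H+\Pi(\mathrm{IF}^F_{\mathrm{eff}}-H\mid\Lambda^\perp)$.

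\textbf{Main obstacle.} The hard step is justifying that the full-data model is the factorized model with unrestricted factors, so that each $\Lambda_\bullet$ is the full space described in Step 1 and the orthogonal decomposition holds. The relevance and surrogate conditions in Assumption~\ref{assump:latent_action_id}(ii),(iv) are open inequality constraints, so they do not shrink the tangent space. I would argue that small perturbations preserve them, for instance by bounded-score submodels $p_t=p(1+t s)$.

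A secondary subtlety is the dependence across time steps within a trajectory. I would handle it as the paper does, by working with the stationary transition-level distribution of $O$. The derivative of $V(\pi)$ along full-data submodels is supplied by the known full-data efficient influence function of \citet{kallus2022efficiently}.
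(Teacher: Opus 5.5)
Your proposal is correct and follows essentially the paper's argument: factor the full-data density, use $\E[H s^F]=\E[H\,\E(s^F\mid O)]$ to reduce the question to orthogonality of $D=H-\mathrm{IF}^F$ to $\Lambda_S\oplus\Lambda_A\oplus\Lambda_{\widetilde A}\oplus\Lambda_R\oplus\Lambda_{S'}$, and read off the three conditional-mean conditions componentwise, with the mean-zero property fixing the constant. The only superfluous step is the projection construction in the ``only if'' direction: you have already shown that $H-\mathrm{IF}^F\perp\Lambda$ for every full-data influence function, so any one of them (e.g.\ the Kallus--Uehara one) suffices, and your construction in fact just returns $\mathrm{IF}^F_{\mathrm{eff}}$.
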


\begin{proof}
Let \(O^F=(S,A,\widetilde A,R,S')\) denote the full data, where \(A\) is observed, and let \(O=(S,\widetilde A,R,S')\) denote the observed data, where \(A\) is hidden. Let \(\mathcal G=\sigma(O)\). Under the full-data model, the density factorizes as
\[
p(o^F)=
p_S(s)p_{A\mid S}(a\mid s)
p_{\widetilde A\mid S,A}(\widetilde a\mid s,a)
p_{R\mid S,A}(r\mid s,a)
p_{S'\mid S,A}(s'\mid s,a).
\]
Hence, any full-data score can be written as
\[
S^F(O^F)=S_S(S)+S_{A\mid S}(A,S)+S_{\widetilde A\mid S,A}(\widetilde A,S,A)
+S_{R\mid S,A}(R,S,A)+S_{S'\mid S,A}(S',S,A),
\]
where the components satisfy:
\(\mathbb E[S_S(S)]=0\),
\(\mathbb E[S_{A\mid S}(A,S)\mid S]=0\),
\(\mathbb E[S_{\widetilde A\mid S,A}(\widetilde A,S,A)\mid S,A]=0\),
\(\mathbb E[S_{R\mid S,A}(R,S,A)\mid S,A]=0\), and
\(\mathbb E[S_{S'\mid S,A}(S',S,A)\mid S,A]=0\).

Let \(S(O)=\mathbb{E}\!\left(S^F\mid O\right)\) denote the observed-data score induced by the full-data score \(S^F\). Since \(H=H(O)\) is measurable with respect to the observed data,
\[
\mathbb{E}\{H S(O)\}=\mathbb{E}\!\left[H\,\mathbb{E}(S^F\mid O)\right]
=\mathbb{E}(H S^F).
\]

By the defining equation for an influence function, \(H\) is an observed-data influence function for \(V(\pi)\) if, for every regular parametric submodel,
\[
\mathbb{E}\{H S(O)\}=\left.\frac{d}{d\varepsilon}V_{P_\varepsilon}(\pi)\right|_{\varepsilon=0}.
\]
If \(IF^F\) is a full-data influence function, then
\[
\left.\frac{d}{d\varepsilon}V_{P_\varepsilon}(\pi)
\right|_{\varepsilon=0}=\mathbb{E}(IF^F S^F).
\]
Therefore, \(H\) is an observed-data influence function if and only if
\[
\mathbb{E}\left[\left\{H-IF^F\right\}S^F\right]=0
\]
for every admissible full-data score \(S^F\).

Write \(D=H-IF^F\). Expanding against the score decomposition gives
\[
\begin{aligned}
\mathbb E(DS^F)
&=\mathbb E[\mathbb E(D\mid S)S_S(S)]+\mathbb E[\mathbb E(D\mid S,A)S_{A\mid S}(A,S)]\\
&\quad+\mathbb E[\mathbb E(D\mid \widetilde A,S,A)S_{\widetilde A\mid S,A}(\widetilde A,S,A)]\\
&\quad+\mathbb E[\mathbb E(D\mid R,S,A)S_{R\mid S,A}(R,S,A)]\\
&\quad+\mathbb E[\mathbb E(D\mid S',S,A)S_{S'\mid S,A}(S',S,A)].
\end{aligned}
\]

We now compare score components. Since \(S_S(S)\) is arbitrary subject to \(\mathbb E[S_S(S)]=0\), the condition \(\mathbb E[\mathbb E(D\mid S)S_S(S)]=0\) for all such scores implies that \(\mathbb E(D\mid S)\) is constant. Since both \(H\) and \(IF^F\) are taken to be mean-zero influence functions, \(\mathbb E(D)=0\), and hence \(\mathbb E(D\mid S)=0\). Next, since \(S_{A\mid S}\) is arbitrary subject to \(\mathbb E(S_{A\mid S}\mid S)=0\), we obtain \(\mathbb E(D\mid S,A)=\mathbb E(D\mid S)=0\).

Next, since \(S_{\widetilde A\mid S,A}\) is arbitrary subject to \(\mathbb E(S_{\widetilde A\mid S,A}\mid S,A)=0\), we obtain \(\mathbb E(D\mid \widetilde A,S,A)=\mathbb E(D\mid S,A)=0\). Equivalently,
\[
\mathbb E(H\mid \widetilde A,S,A)=\mathbb E(IF^F\mid \widetilde A,S,A).
\]
The same argument applied to the \(R\mid S,A\) and \(S'\mid S,A\) score components gives
\[
\mathbb E(H\mid R,S,A)=\mathbb E(IF^F\mid R,S,A),
\qquad
\mathbb E(H\mid S',S,A)=\mathbb E(IF^F\mid S',S,A).
\]
This proves the necessity of the three conditional equations.

Conversely, suppose there exists a full-data influence function \(IF^F\) satisfying the three conditional equations. Then, with \(D=H-IF^F\), the first equation gives \(\mathbb E(D\mid \widetilde A,S,A)=0\), and hence by iterated expectation \(\mathbb E(D\mid S,A)=0\) and \(\mathbb E(D\mid S)=0\). The other two equations give \(\mathbb E(D\mid R,S,A)=0\) and \(\mathbb E(D\mid S',S,A)=0\). Therefore every term in the expansion of \(\mathbb E(DS^F)\) is zero, so \(\mathbb E(DS^F)=0\) for every full-data score \(S^F\). Consequently,
\[
\mathbb E(HS^F)=\mathbb E(IF^F S^F)=\left.\frac{d}{d\varepsilon}V_{P_\varepsilon}(\pi)\right|_{\varepsilon=0}.
\]
Since \(\mathbb E(HS^F)=\mathbb E\{H\,\mathbb E(S^F\mid \mathcal G)\}\), this is exactly the defining condition for \(H\) to be an observed-data influence function. The proof is complete.
\end{proof}

\begin{lemma} \label{lem:EP}
    The empirical-process remainder
\[
E_n=(\mathbb P_n-\mathbb P)\{\varphi^\pi(O;\widehat\Theta)-\varphi^\pi(O;\Theta_0)\}=o_p((NT)^{-1/2}).
\]
\end{lemma}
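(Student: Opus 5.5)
The plan is to exploit cross-fitting. Fix a fold $k$ and condition on the training data $\{O_{i,t}: i\notin I_k\}$. The nuisance estimates $\widehat\Theta^{(k)}=(\widehat\omega,\widehat\theta_{\widetilde A},\widehat\theta_R,\widehat\theta_{S'},\widehat Q^\pi)$ are then fixed functions. Write
\[
\Delta_k(O):=\varphi^\pi(O;\widehat\Theta^{(k)})-\varphi^\pi(O;\Theta_0),
\qquad
E_n^{(k)}:=(\mathbb P_{n,k}-\mathbb P)\Delta_k .
\]
Since $K$ is fixed, it suffices to show $E_n^{(k)}=o_p(n^{-1/2})$ for each $k$. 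For this I will use the conditional Chebyshev inequality: it is enough to show $\E[(E_n^{(k)})^2\mid \text{train}]\lesssim n^{-1}\|\Delta_k\|^2$ together with $\|\Delta_k\|=o_p(1)$. Conditional convergence in probability then transfers to unconditional convergence by dominated convergence.

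\textbf{Step 1: variance bound under serial dependence.} Within a held-out trajectory the transitions $O_{i,1},\dots,O_{i,T}$ are dependent, while trajectories are independent. I would write
\[
\sum_{i\in I_k}\sum_t\{\Delta_k(O_{i,t})-\mathbb P\Delta_k\}
\]
as a sum of $|I_k|$ i.i.d. trajectory-level terms. To bound the within-trajectory covariances I would try the following, in order.

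\emph{First choice: a martingale decomposition.} Decompose each summand into a martingale difference with respect to the filtration generated by $(S_{i,1},\dots,S_{i,t+1})$, plus a term depending only on $S_{i,t}$. The martingale part has orthogonal increments, so its variance is $T\,\E[\Delta_k^2]$. The state-only part $\E[\Delta_k(O)\mid S]$ requires a mixing assumption, either geometric $\beta$-mixing or stationarity of the behavior chain, which is standard in this literature (e.g. \citet{kallus2022efficiently,shi2022statistical}). Under such an assumption the sum of autocovariances is bounded by $C\,T\,\|\Delta_k\|^2$.

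\emph{Fallback.} If $T$ is treated as fixed, Cauchy--Schwarz gives a variance bound of $T^2\|\Delta_k\|^2/|I_k|$, which is still of order $n^{-1}\|\Delta_k\|^2$.

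\textbf{Step 2: $L_2$ consistency of $\Delta_k$.} Expand $\Delta_k$ term by term using the structure of $\varphi^\pi$:
\begin{itemize}
\item the products $g_a'\,\omega\,(R-\theta_R)$;
\item the products $g_a\,\omega\,(V^\pi(S')-\mathcal MV^\pi)$;
\item the constant $\E_{p_e}\widehat V^\pi-V(\pi)$, which cancels under centering.
\end{itemize}
By Assumption~\ref{assump:regularity}, all nuisances and their estimates are uniformly bounded, and the denominators in $g_a$ and $g_a'$ are at least $c$. Hence the map $\Theta\mapsto\varphi^\pi(O;\Theta)$ is Lipschitz in sup-norm on the relevant range. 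This gives
\[
\|\Delta_k\|\lesssim \sum_a\Big(\|\delta\widehat\omega(\cdot,a)\|+\|\delta\widehat\theta_{\widetilde A}(\cdot,a)\|+\|\delta\widehat\theta_R(\cdot,a)\|+\|\delta\widehat\theta_{S'}(\cdot,a)\|+\|\widehat V^\pi-V^\pi\|+\|\widehat{\mathcal MV^\pi}-\mathcal MV^\pi\|\Big).
\]
Under the rate conditions ($\alpha_{\min}>0$), every term is $o_p(1)$; the $V^\pi$ terms are controlled through $e_M$ and the Bellman relation. The label-alignment step is handled by noting that $\widehat d=d$ with probability tending to one, so on that event the relabeled nuisances coincide.

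\textbf{Main obstacle.} I expect the hard part to be the dependence in Step 1, specifically controlling the $S_{i,t+1}$-dependent piece $\widehat V^\pi(S_{i,t+1})$. It enters both the current transition and the next one through $S_{i,t+1}$, so the centered summands are not martingale differences as written. My plan is to regroup $\widehat V^\pi(S_{i,t+1})-V^\pi(S_{i,t+1})$ with the next time step's state-only terms, and then invoke mixing. If that regrouping fails, I would state the result for fixed $T$ with $N\to\infty$, where the Cauchy--Schwarz fallback suffices.
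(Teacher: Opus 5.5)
Your argument has the same skeleton as the paper's proof. You condition on the training folds so that $\Delta_k$ is a fixed function, apply a conditional Chebyshev bound to $(\mathbb P_{n,k}-\mathbb P)\Delta_k$, show $\|\Delta_k\|=o_p(1)$ from boundedness, separation and the nuisance rates, and sum over the finitely many folds. You genuinely differ in the variance step. The paper simply asserts $\Var(E_{n,k}\mid\widehat\Theta^{(-k)})\le\|g_k\|^2/n_k$. That treats the $n_k$ transitions in a held-out fold as uncorrelated, although cross-fitting splits whole trajectories and transitions within a trajectory are serially dependent. Your Step~1 is what actually justifies a bound of order $n^{-1}\|\Delta_k\|^2$:
\begin{itemize}
\item For fixed $T$, Cauchy--Schwarz within each trajectory gives $T\|\Delta_k\|^2/n_k$.
\item For growing $T$ you need a mixing condition, of the same kind the paper itself invokes (geometric ergodicity) in the proof of Theorem~\ref{thm:asymptotic_normality}.
\end{itemize}
One precision on the mixing case. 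Stationarity alone does not make autocovariances summable, and geometric $\beta$-mixing bounds them through $L_{2+\delta}$ or sup norms. The exact form $C\,T\,\|\Delta_k\|^2$ for the state-only part therefore needs $\rho$-mixing. With bounded summands, Davydov's inequality still gives $O(T\|\Delta_k\|^{4/(2+\delta)})$ for the variance of a trajectory sum, after splitting via $\Var(X+Y)\le 2\Var(X)+2\Var(Y)$. That suffices for an $o_p$ statement.

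\textbf{The obstacle you anticipate does not arise.} Take $\mathcal F_{i,t}=\sigma(S_{i,1},O_{i,1},\dots,O_{i,t})$, which already contains $S_{i,t+1}$. The Markov structure in Figure~\ref{fig:dgp} gives $\E[\Delta_k(O_{i,t})\mid\mathcal F_{i,t-1}]=m_k(S_{i,t})$, where $m_k(s)=\E[\Delta_k(O)\mid S=s]$. So $\Delta_k(O_{i,t})-m_k(S_{i,t})$ is a martingale difference as it stands, and the $\widehat V^\pi(S_{i,t+1})$ term needs no regrouping.

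\textbf{Step~2 needs an extra condition.} The stated rates do not by themselves control $\|\widehat V^\pi-V^\pi\|$ and $\|\widehat{\mathcal MV^\pi}-\mathcal MV^\pi\|$. These enter $\Delta_k$ through $\delta\widehat V^\pi(S')-\delta\widehat M(S,a)$ itself, not only through its conditional mean $e_M(S,a)$. Write $(\mathcal P^\pi f)(s,a)=\E[f(S',\pi)\mid S=s,A=a]$. The Bellman relation then gives $\widehat Q^\pi-Q^\pi=(I-\gamma\mathcal P^\pi)^{-1}(\delta\widehat\theta_R-\gamma e_M)$. Passing from this to $L_2(P)$ requires a concentrability-type condition on the laws that $\pi$ induces from the behavior distribution. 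The coverage condition in Assumption~\ref{assump:latent_action_id}(i) does not supply this. The paper also takes it for granted: it lists $\|\delta\widehat V^\pi\|$ among the vanishing terms in the proof of Proposition~\ref{prop:var} without a stated rate. The clean fix is to assume $L_2$-consistency of $\widehat Q^\pi$ directly.
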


\begin{proof}
Let the total sample of size $n$ be partitioned into $K$ folds $\mathcal{I}_1,\ldots,\mathcal{I}_K$, where fold $k$ has size $n_k$. Let $\widehat{\Theta}^{(-k)}$ denote the nuisance estimator constructed using observations outside fold $k$, and define
\[
g_k(O)=\varphi^\pi\!\left(O;\widehat{\Theta}^{(-k)}\right)-\varphi^\pi(O;\Theta_0).
\]

For fold $k$, let
\[
\mathbb{P}_{n,k}f=\frac{1}{n_k}\sum_{i\in\mathcal{I}_k} f(O_i),
\]
and define
\[
E_{n,k}=(\mathbb{P}_{n,k}-\mathbb{P})g_k.
\]

By cross-fitting, $\widehat{\Theta}^{(-k)}$ is estimated using observations outside $\mathcal{I}_k$. Hence, conditional on the training sample, $g_k$ is fixed and independent of the observations in fold $k$.
Therefore,
\[
\mathbb{E}\left(E_{n,k}\mid\widehat{\Theta}^{(-k)}\right)=0,
\]
and
\[
\operatorname{Var}\left(E_{n,k}\mid\widehat{\Theta}^{(-k)}\right)
\leq\frac{1}{n_k}\|g_k\|^2.
\]

By Chebyshev's inequality,
\[
E_{n,k}=O_p\left(n_k^{-1/2}\|g_k\|\right).
\]

By Assumption~\ref{assump:regularity} and the nuisance-function convergence rates,
\[
\|g_k\|=\left\|\varphi^\pi\left(\cdot;\widehat{\Theta}^{(-k)}\right)
-\varphi^\pi(\cdot;\Theta_0)\right\|=O_p\left(n^{-\alpha_{\min}}\right)=o_p(1)
\]
for each $k$. Hence,
\(E_{n,k}=o_p(n_k^{-1/2}).\)

The full cross-fitted empirical-process term is
\[
E_n=\sum_{k=1}^K\frac{n_k}{n}E_{n,k}.
\]
Since $K$ is fixed and $n_k\asymp n$ for every $k$,
\[
E_n=o_p(n^{-1/2}).
\]
\end{proof}

\subsection{Proof of Theorem \ref{thm:identification}}

\begin{proof}
Under Assumption \ref{assump:latent_action_id}, \citet{zhou2024causal} proved that \begin{gather*}
    \Pr(\widetilde A=a'\mid A=a,S=s)
\end{gather*} is identifiable for all $s$, $a'$ and $a$.

Thus, we have \begin{gather} \label{eq:bayes}
    \Pr(\widetilde A=a'\mid S=s)=\sum_{a} \Pr(\widetilde A=a'\mid A=a, S=s)\Pr(A=a\mid  S=s).
\end{gather} Writing Equation \eqref{eq:bayes} in matrix form, we have \begin{gather*}
    P_{\widetilde A}(s)=P_{\widetilde A,A}(s) P_{A}(s),
\end{gather*} where \begin{gather*}
    P_{\widetilde A}(s)=\left[\Pr(\widetilde A=0\mid S=s),\Pr(\widetilde A=1\mid S=s)\right]^\top,
\end{gather*}
\begin{equation*}
   P_{\widetilde A,A}(s)= \begin{pmatrix}
        \Pr(\widetilde A=0\mid A=0, S=s) & \Pr(\widetilde A=0\mid A=1, S=s)\\
         \Pr(\widetilde A=1\mid A=0, S=s) & \Pr(\widetilde A=1\mid A=1, S=s)
    \end{pmatrix},
\end{equation*} and
\begin{gather*}
    P_{A}(s)=\left[\Pr( A=0\mid S=s),\Pr( A=1\mid S=s)\right]^\top,
\end{gather*} By Assumption \ref{assump:latent_action_id} (iv), the matrix $P_{\widetilde A,A}(s)$ is non-singular for all $s$, thus
we have \begin{gather*}
    P_{A}(s)=P_{\widetilde A,A}^{-1}(s) P_{\widetilde A}(s),
\end{gather*} for all $s$. Since $P_{\widetilde A,A}^{-1}(s)$ is identifiable, and $P_{\widetilde A}(s)$ is a function of the observed data, $P_{A}(s)$ is identifiable for all $s$.

Recall the definition of the discounted visitation
\begin{align*}
 p^*(s,a)
 &:=(1-\gamma)\sum_{t=1}^\infty \gamma^{t-1} p_t^\pi(s,a).
\end{align*}
Under the coverage condition in Assumption \ref{assump:latent_action_id} (i), the marginal density ratio
\begin{align*}
\omega(s,a):=\frac{p^*(s,a)}{p^b(s,a)}
\end{align*} is well defined and exists for all $s$ and $a$. Moreover, since $\pi(a|s)$ is known, and $P_{A}(s)$ is identifiable for all $s$, the density ratio $\omega(s,a)$ is also identifiable from the observed data. Then,
\begin{align*}
    &V(\pi)
    =(1-\gamma)^{-1} \int_{s,a,r} r\, p(r\mid s,a) p^*(s,a)\, ds\, da\, dr
    \\
    = & \;(1-\gamma)^{-1} \E\left[\frac{p^*(S,A)}{p^b(S,A)}R\right]
    =(1-\gamma)^{-1} \E\left[\omega(S,A)R\right].
\end{align*} 



\end{proof}

\subsection{Proof of Theorem \ref{thm:influence_function}}

\begin{proof}
Let \(O^F=(S,A,\widetilde A,R,S')\) denote the full data, where the true action \(A\) is observed, and let \(O=(S,\widetilde A,R,S')\) denote the observed data. 
Consider the full-data influence function
\[
IF^F=\mathbb E_{p_e}\{V^\pi(S)\} + \frac{1}{1-\gamma}\omega(S,A)\{R-\theta_R(S,A)\}
+\frac{\gamma}{1-\gamma}\omega(S,A)\{V^\pi(S')-(\mathcal M V^\pi)(S,A)\}-V(\pi).
\]
By Lemma \ref{lemma:IF_condition}, it suffices to show that \(\varphi^\pi(O)\) and \(IF^F\) have the same conditional expectations given \((\widetilde A,S,A)\), \((R,S,A)\), and \((S',S,A)\).

Let
\begin{align*}
\alpha_a(\widetilde A,S) &= \frac{\widetilde A-\theta_{\widetilde A}(S,1-a)}{\theta_{\widetilde A}(S,a)-\theta_{\widetilde A}(S,1-a)}, \\
\beta_a^{S'}(S',S) &= \frac{l(S')-\theta_{S'}(S,1-a)}{\theta_{S'}(S,a)-\theta_{S'}(S,1-a)},\\
\beta_a^R(R, S) &= \frac{R-\theta_{R}(S,1-a)}{\theta_{R}(S,a)-\theta_{R}(S,1-a)}.
\end{align*}
Then
\begin{align*}
g_a(R,\widetilde A,S) = \alpha_a(\widetilde A,S)\beta_a^R(R, S),\quad
g'_a(S',\widetilde A,S) = \alpha_a(\widetilde A,S) \beta_a^{S'}(S',S).
\end{align*}
Also note that
\begin{align*}
\E[\alpha_a(\widetilde A,S)\mid R,A,S] &= \E[\alpha_a(\widetilde A,S)\mid S',A,S] = \E[\alpha_a(\widetilde A,S)\mid A,S] = \mathds{1}(A=a), \\
\E[\beta^R_a(R,S)\mid \widetilde A,A,S] &= \E[\beta^R_a(R,S)\mid S',A,S] = \E[\beta^R_a(R,S)\mid A,S] = \mathds{1}(A=a),\\
\E[\beta^{S'}_a(S',S)\mid \widetilde A,A,S] &= \E[\beta^{S'}_a(S',S)\mid R,A,S] = \E[\beta^{S'}_a(S',S)\mid A,S] = \mathds{1}(A=a).
\end{align*}

We now verify the three conditional equalities required by Lemma \ref{lemma:IF_condition}.

First, condition on $(\widetilde A,S,A)$. For the reward component of
$\varphi^\pi(O)$,
\[
\begin{aligned}
&\mathbb E\left[\sum_a
g_a'(S',\widetilde A,S)\omega(S,a)\{R-\theta_R(S,a)\}\middle|\widetilde A,S,A\right] \\
&=\sum_a\alpha_a(\widetilde A,S)\omega(S,a)\mathbb E\{\beta_a^{S'}(S',S)\mid S,A\}\mathbb E\{R-\theta_R(S,a)\mid S,A\} \\
&=\sum_a\alpha_a(\widetilde A,S)\omega(S,a)\mathbbm 1(A=a)\{\theta_R(S,A)-\theta_R(S,a)\} \\
&=0.
\end{aligned}
\]

Similarly, for the transition component,
\[
\begin{aligned}
&\mathbb E\left[\sum_ag_a(R,\widetilde A,S)\omega(S,a)\{V^\pi(S')-(\mathcal M V^\pi)(S,a)\}\middle|\widetilde A,S,A\right] \\
&=\sum_a\alpha_a(\widetilde A,S)\omega(S,a)\mathbb E\{\beta_a^R(R,S)\mid S,A\} 
\mathbb E\left[V^\pi(S')-(\mathcal M V^\pi)(S,a)\mid S,A\right] \\
&=\sum_a\alpha_a(\widetilde A,S)\omega(S,a)\mathbbm 1(A=a)\{(\mathcal M V^\pi)(S,A)-(\mathcal M V^\pi)(S,a)\} \\
&=0.
\end{aligned}
\]

Therefore,
\[
\mathbb E\{\varphi^\pi(O)\mid \widetilde A,S,A\}=\mathbb E_{p_e}\{V^\pi(S)\}-V(\pi).
\]
The two residual terms in $IF^F$ also have conditional mean zero given $(\widetilde A,S,A)$. Hence,
\[
\mathbb E\{\varphi^\pi(O)\mid \widetilde A,S,A\}=\mathbb E\{IF^F\mid \widetilde A,S,A\}.
\]

Second, condition on $(R,S,A)$. For the reward component,
\[
\begin{aligned}
&\mathbb E\left[\sum_ag_a'(S',\widetilde A,S)\omega(S,a)\{R-\theta_R(S,a)\}\middle|R,S,A\right] \\
&=\sum_a\omega(S,a)\{R-\theta_R(S,a)\}\mathbb E\{g_a'(S',\widetilde A,S)\mid R,S,A\} \\
&=\sum_a\omega(S,a)\{R-\theta_R(S,a)\}\mathbbm 1(A=a) \\
&=\omega(S,A)\{R-\theta_R(S,A)\}.
\end{aligned}
\]

For the transition component,
\[
\begin{aligned}
&\mathbb E\left[\sum_ag_a(R,\widetilde A,S)\omega(S,a)\{V^\pi(S')-(\mathcal M V^\pi)(S,a)\}
\middle|R,S,A\right] \\
&=\sum_a\beta_a^R(R,S)\omega(S,a)
\mathbb E\{\alpha_a(\widetilde A,S)\mid S,A\} 
\mathbb E\left[V^\pi(S')-(\mathcal M V^\pi)(S,a)\mid S,A\right] \\
&=\sum_a\beta_a^R(R,S)\omega(S,a)\mathbbm 1(A=a)\{(\mathcal M V^\pi)(S,A)-(\mathcal M V^\pi)(S,a)\} \\
&=0.
\end{aligned}
\]

Thus,
\[
\begin{aligned}
\mathbb E\{\varphi^\pi(O)\mid R,S,A\}
&=\mathbb E_{p_e}\{V^\pi(S)\}-V(\pi) +\frac{1}{1-\gamma}\omega(S,A)\{R-\theta_R(S,A)\} \\
&=\mathbb E\{IF^F\mid R,S,A\}.
\end{aligned}
\]

Finally, condition on $(S',S,A)$. For the transition component,
\[
\begin{aligned}
&\mathbb E\left[\sum_ag_a(R,\widetilde A,S)\omega(S,a)\{V^\pi(S')-(\mathcal M V^\pi)(S,a)\}\middle| S',S,A\right] \\
&=\sum_a\omega(S,a)\{V^\pi(S')-(\mathcal M V^\pi)(S,a)\}\mathbb E\{g_a(R,\widetilde A,S)\mid S',S,A\} \\
&=\sum_a\omega(S,a)\{V^\pi(S')-(\mathcal M V^\pi)(S,a)\}\mathbbm 1(A=a) \\
&=\omega(S,A)\{V^\pi(S')-(\mathcal M V^\pi)(S,A)\}.
\end{aligned}
\]

For the reward component,
\[
\begin{aligned}
&\mathbb E\left[
\sum_ag_a'(S',\widetilde A,S)\omega(S,a)\{R-\theta_R(S,a)\}
\middle| S',S,A\right] \\
&=\sum_a\beta_a^{S'}(S',S)\omega(S,a)\mathbb E\{\alpha_a(\widetilde A,S)\mid S,A\} 
\mathbb E\{R-\theta_R(S,a)\mid S,A\} \\
&=\sum_a\beta_a^{S'}(S',S)\omega(S,a)\mathbbm 1(A=a)\{\theta_R(S,A)-\theta_R(S,a)\} \\
&=0.
\end{aligned}
\]

Therefore,
\[
\begin{aligned}
\mathbb E\{\varphi^\pi(O)\mid S',S,A\}
&=\mathbb E_{p_e}\{V^\pi(S)\}-V(\pi)  +\frac{\gamma}{1-\gamma}\omega(S,A)\{V^\pi(S')-(\mathcal M V^\pi)(S,A)\} \\
&=\mathbb E\{IF^F\mid S',S,A\}.
\end{aligned}
\]

Thus all three conditional equalities in Lemma \ref{lemma:IF_condition} are satisfied. Therefore, by Lemma \ref{lemma:IF_condition}, \(\varphi^\pi(O)\) is an influence function for \(V(\pi)\) in the observed-data model.
\end{proof}

\subsection{Proof of Theorem \ref{thm:error}}
Let the nuisance collection be \(\Theta=(\theta_R, M,\omega,\theta_{\widetilde A},\theta_{S'})\), where $M(s,a) := \mathcal{M}V^\pi(s,a).$

For $a\in\{0,1\}$, define the action-specific contrasts
\[
\Delta_{\tilde A,a}(s):=\theta_{\tilde A}(s,a)-\theta_{\tilde A}(s,1-a),
\quad
\Delta_{R,a}(s):=\theta_R(s,a)-\theta_R(s,1-a),
\quad
\Delta_{S',a}(s):=\theta_{S'}(s,a)-\theta_{S'}(s,1-a).
\]

Define
\begin{align*}
\alpha_a(\widetilde A,S;\Theta)=&\frac{\widetilde A-\theta_{\tilde A}(S,1-a)}{\Delta_{\tilde A,a}(S)},\\
\beta^R_a(R,S;\Theta)=\frac{R-\theta_R(S,1-a)}{\Delta_{R,a}(S)}, & \qquad
\beta^{S'}_a(S',S;\Theta)=\frac{l(S')-\theta_{S'}(S,1-a)}{\Delta_{S',a}(S)}.
\end{align*}
Then
$g_a(R,\widetilde A,S;\Theta) = \alpha_a(\widetilde A,S;\Theta)\beta^R_a(R,S;\Theta),\ 
g'_a(S',\widetilde A,S;\Theta) = \alpha_a(\widetilde A,S;\Theta)\beta^{S'}_a(S',S;\Theta).$

Define 
\begin{align}
\phi^\pi(O;\Theta)&=
\mathbb{E}_{p_e}[V^\pi_\Theta(S)]+\frac{1}{1-\gamma}\sum_a g'_a(O;\Theta)\,\omega(S,a)\,\{R-\theta_R(S,a)\} \nonumber\\
&\quad +\frac{\gamma}{1-\gamma}\sum_a g_a(O;\Theta)\,\omega(S,a)\,\{V^\pi_\Theta(S')-M(S,a)\}.
\label{eq:Psi}
\end{align}

When the nuisance functions take their true values \(\Theta_0\), the expectation of this quantity equals the target value \(V(\pi)\):
\[
V(\pi)=\mathbb{E}\{\phi^\pi(O;\Theta_0)\}.
\]
The corresponding influence function is
$\varphi^\pi(O)=\phi^\pi(O;\Theta_0)-V(\pi).$

\paragraph*{Step 1: Empirical-process term and label-selection error.}
We first separate the error due to estimating the labeling from the estimation error under the population-correct labeling. Recall that the final label-aligned estimator is
\[
\widehat V(\pi)=\widehat V_{\widehat d}(\pi)=\sum_{a=0,1} \mathds 1\{\widehat d=a\}\widehat V_a(\pi).
\]
We decompose
\begin{align}\label{eq:decomposition}
\widehat V(\pi)-V(\pi)=\{\widehat V_d(\pi)-V(\pi)\}+\{\widehat V_{\widehat d}(\pi)-\widehat V_d(\pi)\}.
\end{align}


The first term in \eqref{eq:decomposition} is the estimation error under the population-correct labeling, while the second term captures the effect of using the empirical label selector \(\widehat d\).
Under the population label selector $d$, write
\[
\widehat V_d(\pi)=\mathbb P_n\{\phi_d^\pi(O;\widehat\Theta)\}, \qquad V(\pi)=\mathbb P\{\phi_d^\pi(O;\Theta_0)\}.
\]

Adding and subtracting
\(\mathbb P_n\{\phi_d^\pi(O;\Theta_0)\}\) and
\(\mathbb P\{\phi_d^\pi(O;\widehat\Theta)\}\), we obtain
\[
\begin{aligned}
\widehat V_d(\pi)-V(\pi)
&=\mathbb P_n[\phi_d^\pi(O;\widehat\Theta)]-\mathbb P[\phi_d^\pi(O;\Theta_0)] \\
&=(\mathbb P_n-\mathbb P)\phi_d^\pi(O;\Theta_0)
+\mathbb P[\phi_d^\pi(O;\widehat\Theta)-\phi_d^\pi(O;\Theta_0)]
+(\mathbb P_n-\mathbb P)[\phi_d^\pi(O;\widehat\Theta)-\phi_d^\pi(O;\Theta_0)]\\
&=(\mathbb P_n-\mathbb P)\phi^\pi(O;\Theta_0)
+\mathbb P[\phi^\pi(O;\widehat\Theta)-\phi^\pi(O;\Theta_0)]
+(\mathbb P_n-\mathbb P)[\phi^\pi(O;\widehat\Theta)-\phi^\pi(O;\Theta_0)].
\end{aligned}
\]
Since
\(
\varphi^\pi(O;\Theta_0)=\phi^\pi(O;\Theta_0)-V(\pi),
\)
the first term equals
\(
(\mathbb P_n-\mathbb P)\varphi^\pi(O;\Theta_0).
\)
Thus,
\[
\widehat V_d(\pi)-V(\pi)=(\mathbb P_n-\mathbb P)\varphi^\pi(O;\Theta_0)+\mathcal D(\widehat\Theta)+E_n,
\]
where
\[
\mathcal D(\widehat\Theta)=\mathbb P[\phi^\pi(O;\widehat\Theta)-\phi^\pi(O;\Theta_0)]
\]
is the population drift, and
\[
E_n=(\mathbb P_n-\mathbb P)[\phi^\pi(O;\widehat\Theta)-\phi^\pi(O;\Theta_0)]
=(\mathbb P_n-\mathbb P)[\varphi^\pi(O;\widehat\Theta)-\varphi^\pi(O;\Theta_0)]
\]
is the empirical-process remainder.

The second term in \eqref{eq:decomposition} is the label-selection error. Since there are only two candidate
labelings,
\[
\left|\widehat V_{\widehat d}(\pi)-\widehat V_d(\pi)\right|\le|\widehat V_1(\pi)-\widehat V_0(\pi)|\mathds 1\{\widehat d\neq d\}.
\]
By Assumption~\ref{assump:regularity}, we have that
\(
\max_{a=0,1}|\widehat{V}_a(\pi)| = O_p(1),
\)
and consequently
\[
\widehat V_{\widehat d}(\pi)-\widehat V_d(\pi)=O_p(1)\mathds 1\{\widehat d\neq d\}.
\]

It remains to control the indicator \(\mathds 1\{\widehat d\neq d\}\). Recall
that
\[
\widehat d=\arg\max_{a\in\{0,1\}}\mathbb P_n[\widehat\theta_{\widetilde A}(S,a)].
\]
By Assumption~\ref{assump:regularity}, there exists \(c>0\) such that
\[
\theta_{\widetilde A}(s,d)-\theta_{\widetilde A}(s,1-d)\geq c
\quad\text{for all }s.
\]
Therefore,
\[
\mathbb P_n[\theta_{\widetilde A}(\cdot,d)-\theta_{\widetilde A}(\cdot,1-d)]\geq c.
\]
On the event \(\{\widehat d\neq d\}\), the definition of the empirical label selector gives
\[
\mathbb P_n[\widehat\theta_{\widetilde A}(\cdot,1-d)]
\geq
\mathbb P_n[\widehat\theta_{\widetilde A}(\cdot,d)].
\]
Thus, on \(\{\widehat d\neq d\}\),
\[
\begin{aligned}
c
&\leq\mathbb P_n[\theta_{\widetilde A}(\cdot,d)-\theta_{\widetilde A}(\cdot,1-d)] \\
&\leq\mathbb P_n[\theta_{\widetilde A}(\cdot,d)-\widehat\theta_{\widetilde A}(\cdot,d)]+\mathbb P_n
[\widehat\theta_{\widetilde A}(\cdot,1-d)-\theta_{\widetilde A}(\cdot,1-d)] \\
& \leq2\max_{a\in\{0,1\}}\left[\mathbb P_n[\widehat{\theta}_{\widetilde A}(\cdot,a) - \theta_{\widetilde A}(\cdot,a)]^2\right]^{1/2}.
\end{aligned}
\]
Consequently,
\[
\mathds 1\{\widehat d\neq d\}\leq\mathds 1\left\{\max_{a\in\{0,1\}}\left[\mathbb P_n[\widehat{\theta}_{\widetilde A}(\cdot,a) - \theta_{\widetilde A}(\cdot,a)]^2\right]^{1/2}\geq c/2\right\}.
\]
For any fixed \(\kappa>0\), the right-hand side is bounded by
\[
\left(\frac{2}{c}\right)^\kappa
\max_{a\in\{0,1\}}\left[\mathbb P_n[\widehat{\theta}_{\widetilde A}(\cdot,a) - \theta_{\widetilde A}(\cdot,a)]^2\right]^{\kappa/2} .
\]
By cross-fitting, for each validation fold the fitted function \(\widehat\theta_{\widetilde A}(\cdot,a)\) is constructed using observations outside that fold. Conditional on the corresponding training sample, the fitted function is fixed and independent of the validation observations. Hence, for each fold,
\[
\mathbb E\left[\mathbb P_{n,k}\left\{\widehat\theta_{\widetilde A}^{(-k)}(\cdot,a)-\theta_{\widetilde A}(\cdot,a)\right\}^2\mid \mathcal T_k\right]=\|\delta \widehat \theta^{(-k)}_{\widetilde A}(\cdot,a)\|_2^2,
\]
where \(\mathcal T_k\) denotes the training folds used to estimate the nuisance
functions evaluated at fold $k$. By conditional Markov's inequality,
\[
\mathbb P_{n,k}\left[\widehat\theta_{\widetilde A}^{(-k)}(\cdot,a)-\theta_{\widetilde A}(\cdot,a)\right]^2=O_p\left(\|\delta \widehat \theta^{(-k)}_{\widetilde A}(\cdot,a)\|_2^2\right).
\]
Applying this argument fold by fold and combining over the fixed number of folds gives, 
\[
\mathbb P_{n}\left[\widehat\theta_{\widetilde A}(\cdot,a)-\theta_{\widetilde A}(\cdot,a)\right]^2=O_p\left(\|\delta \widehat \theta_{\widetilde A}(\cdot,a)\|_2^2\right).
\]
Taking the \(\kappa/2\)-power and using that \(a\in\{0,1\}\) is finite, we obtain
\[
\max_{a\in\{0,1\}}\left[\mathbb P_n[\widehat\theta_{\widetilde A}(\cdot,a)-\theta_{\widetilde A}(\cdot,a)]^2\right]^{\kappa/2}
=O_p\left(\max_{a\in\{0,1\}}\|\delta \widehat \theta_{\widetilde A}(\cdot,a)\|_2^\kappa\right).
\]

Hence, with
\(
L_n=\max_{a\in\{0,1\}}\|\delta \widehat \theta_{\widetilde A}(\cdot,a)\|_2^\kappa,
\)
we have
\(
\mathds 1\{\widehat d\neq d\}=O_p(L_n).
\)
Therefore,
\[
\widehat V_{\widehat d}(\pi)-\widehat V_d(\pi)=O_p(L_n).
\]

Combining the two terms in \eqref{eq:decomposition}, we obtain
\[
\widehat V(\pi)-V(\pi)=(\mathbb P_n-\mathbb P)\varphi^\pi(O;\Theta_0)+\mathcal D(\widehat\Theta)+E_n+O_p(L_n).
\]
By Lemma \ref{lem:EP}, $E_n =o_p(n^{-1/2})$.
Therefore, it remains to bound the population drift 
\[
\mathcal D(\widehat\Theta)=\E\left[\phi^\pi(O;\widehat\Theta)-\phi^\pi(O;\Theta_0)\right].
\]

\paragraph*{Step 2: Decomposition of the population drift.}
We now decompose the population drift $\mathcal D(\widehat\Theta)$.
By the definition of \(\phi^\pi(O;\Theta)\), write
\[
\mathcal D(\widehat\Theta)=D_0(\widehat\Theta)+D_1(\widehat\Theta)+D_2(\widehat\Theta),
\]
where
\[
D_0(\widehat\Theta)=\mathbb E_{p_e}\{\widehat V^\pi(S)-V^\pi(S)\},
\]
\[
D_1(\widehat\Theta)=\frac{1}{1-\gamma}\sum_{a\in\{0,1\}}
\mathbb E\left[\widehat g_a'(S',\widetilde A,S)\widehat\omega(S,a)
\{R-\widehat\theta_R(S,a)\}-g_a'(S',\widetilde A,S)\omega(S,a)\{R-\theta_R(S,a)\}\right],
\]
and
\[
D_2(\widehat\Theta)=\frac{\gamma}{1-\gamma}\sum_{a\in\{0,1\}}\mathbb E\left[\widehat g_a(R,\widetilde A,S)\widehat\omega(S,a)
\{\widehat V^\pi(S')-\widehat{M}(S,a)\}-g_a(R,\widetilde A,S)\omega(S,a)\{V^\pi(S')- M(S,a)\}\right].
\]

For \(D_0(\widehat\Theta)\), note that
\[
V^\pi(s)=\sum_a \pi(a\mid s)\left\{\theta_R(s,a)+\gamma M(s,a)\right\}.
\]
Also recall that
\[
\widehat V^\pi(s)=\sum_a \pi(a\mid s)\widehat Q^\pi(s,a)=\sum_a \pi(a\mid s)\left\{\widehat\theta_R(s,a)+\gamma\widehat{M}(s,a)\right\},
\]
where
\[
\widehat{M}(s,a)=\gamma^{-1}\left\{\widehat Q^\pi(s,a)-\widehat\theta_R(s,a)\right\}.
\]

Therefore,
\[
D_0(\widehat\Theta)=\mathbb E_{p_e}\left[\sum_a\pi(a\mid S)\left\{\delta \widehat \theta_R(S,a)+\gamma\delta \widehat{M}(S,a)\right\}\right],
\]
where
\[
\delta \widehat{M}(s,a)=\widehat{M}(s,a)-M(s,a).
\]

The remaining two terms are obtained by comparing the hatted and true products inside the reward and transition corrections. We repeatedly use the elementary identity
\[
\begin{aligned}
\widehat X\widehat Y\widehat Z-XYZ&=(\delta \widehat X)YZ+X(\delta \widehat Y)Z+XY(\delta \widehat Z) \\
&\quad+(\delta \widehat X)(\delta \widehat Y)Z+(\delta \widehat X)Y(\delta \widehat Z)+X(\delta \widehat Y)(\delta \widehat Z)+(\delta \widehat X)(\delta \widehat Y)(\delta \widehat Z),
\end{aligned}
\]
where \(\widehat X=X+\delta \widehat X\), \(\widehat Y=Y+\delta \widehat Y\), and \(\widehat Z=Z+\delta \widehat Z\). The first line is linear in the perturbations, while the second line is second order and higher.

For $D_1$, fix \(a\) and set \(X_a=g'_a(S',\widetilde A,S)\), \(Y_a=\omega(S,a)\), and \(Z_a=R-\theta_R(S,a)\). Since \(\widehat Z_a=Z_a-\delta \widehat \theta_R(S,a)\), applying the preceding identity and summing over \(a\) gives
\[
D_1(\widehat\Theta)=L_1(\widehat\Theta)+\mathcal R_1(\widehat\Theta),
\]
where
\[
\begin{aligned}
L_1(\widehat\Theta)=\frac{1}{1-\gamma}\E\sum_a\Big[\delta\widehat g'_a\,\omega(S,a)\{R-\theta_R(S,a)\}+g'_a\delta \widehat \omega(S,a)\{R-\theta_R(S,a)\} -g'_a\omega(S,a)\delta \widehat \theta_R(S,a)
\Big],
\end{aligned}
\]
and
\[
\begin{aligned}
\mathcal R_1(\widehat\Theta)=\frac{1}{1-\gamma}\E\sum_a\Big[
&\delta \widehat g'_a\,\delta \widehat \omega(S,a)\{R-\theta_R(S,a)\}-\delta \widehat g'_a\,\omega(S,a)\delta \widehat \theta_R(S,a) \\
&-g'_a\delta \widehat \omega(S,a)\delta \widehat \theta_R(S,a)-\delta\widehat  g'_a\,\delta \widehat \omega(S,a)\delta \widehat \theta_R(S,a)\Big].
\end{aligned}
\]

For $D_2$, fix \(a\) and set \(X_a=g_a(R,\widetilde A,S)\), \(Y_a=\omega(S,a)\), and \(Z_a=V^\pi(S')-M(S,a)\). Since \(\widehat Z_a=Z_a+\delta\widehat  V^\pi(S')-\delta\widehat M(S,a)\), the same expansion gives
\[
D_2(\widehat\Theta)=L_2(\widehat\Theta)+\mathcal R_2(\widehat\Theta),
\]
where
\[
\begin{aligned}
L_2(\widehat\Theta)=\frac{\gamma}{1-\gamma}\E\sum_a\Big[&\delta \widehat g_a\,\omega(S,a)\{V^\pi(S')-M(S,a)\}+g_a\delta \widehat \omega(S,a)\{V^\pi(S')-M(S,a)\} \\&+g_a\omega(S,a)\{\delta \widehat V^\pi(S')-\delta \widehat M(S,a)\}\Big],
\end{aligned}
\]
and
\[
\begin{aligned}
\mathcal R_2(\widehat\Theta)=\frac{\gamma}{1-\gamma}\E\sum_a\Big[
&\delta \widehat g_a\,\delta \widehat \omega(S,a)\{V^\pi(S')-M(S,a)\} +\delta \widehat g_a\,\omega(S,a)\{\delta \widehat V^\pi(S')-\delta \widehat M(S,a)\} \\
&+g_a\delta \widehat \omega(S,a)\{\delta \widehat V^\pi(S')-\delta \widehat M(S,a)\} +\delta \widehat g_a\,\delta \widehat \omega(S,a)\{\delta \widehat V^\pi(S')-\delta \widehat M(S,a)\}\Big].
\end{aligned}
\]

Therefore, with
\[
L(\widehat\Theta)=D_0(\widehat\Theta)+L_1(\widehat\Theta)+L_2(\widehat\Theta),
\qquad
\mathcal R(\widehat\Theta)=\mathcal R_1(\widehat\Theta)+\mathcal R_2(\widehat\Theta),
\]
we have the exact decomposition
\[
\mathcal D(\widehat\Theta)=L(\widehat\Theta)+\mathcal R(\widehat\Theta).
\]
Here \(L(\widehat\Theta)\) is linear in \(\delta \widehat \theta_R,\delta \widehat M,\delta \widehat \omega,\delta \widehat g_a,\delta \widehat g'_a\), while \(\mathcal R(\widehat\Theta)\) contains only products of two or more of these perturbations.

\paragraph*{Step 3: Cancellation of the linear drift.}
From Step 2, the first-order part of the population drift is
\[
\begin{aligned}
L(\widehat\Theta)&=\E_{p_e}\{\delta \widehat V^\pi(S)\}\\
&\quad+\frac{1}{1-\gamma}\E\sum_a\Big[\delta \widehat g'_a\,\omega(S,a)\{R-\theta_R(S,a)\}+g'_a\delta \widehat \omega(S,a)\{R-\theta_R(S,a)\}-g'_a\omega(S,a)\delta \widehat \theta_R(S,a)\Big]\\
&\quad+\frac{\gamma}{1-\gamma}\E\sum_a\Big[\delta \widehat g_a\,\omega(S,a)\{V^\pi(S')-M(S,a)\}+g_a(R,\widetilde A,S)\delta \widehat \omega(S,a)\{V^\pi(S')-M(S,a)\}\\
&\qquad\qquad\qquad\qquad+g_a(R,\widetilde A,S)\omega(S,a)\{\delta \widehat V^\pi(S')-\delta \widehat M(S,a)\}\Big].
\end{aligned}
\]

We now show that 
\[
L(\widehat\Theta)= \frac{1}{1-\gamma}
\E\sum_a \delta \widehat g'_a(S',\widetilde A,S)\omega(S,a)\{R-\theta_R(S,a)\}+\frac{\gamma}{1-\gamma}\E\sum_a \delta \widehat g_a(R,\widetilde A,S)\omega(S,a)\{V^\pi(S')-M(S,a)\}.
\] 
First, for each \(a\in\{0,1\}\), 
\[
\begin{aligned}
&\mathbb{E}\left[g_a'(S',\widetilde A,S)\delta\widehat{\omega}(S,a)\{R-\theta_R(S,a)\}\right] \\
&\qquad=\mathbb{E}\left[\delta\widehat{\omega}(S,a)
\mathbb{E}\{g_a'(S',\widetilde A,S)\mid S,A\}\mathbb{E}\{R-\theta_R(S,a)\mid S,A\}\right]\\
&\qquad=\mathbb{E}\left[\delta\widehat{\omega}(S,a)
\mathds{1}(A=a)\mathbb{E}\{R-\theta_R(S,a)\mid S,A\}\right] = 0.
\end{aligned}
\]
Similarly,
\[
\mathbb{E}\left[
g_a(R,\widetilde A,S)\delta\widehat{\omega}(S,a)\{V^\pi(S')-M(S,a)\}\right]=0,
\]
using
\[
\mathbb{E}\{g_a(R,\widetilde A,S)\mid S,A\}=\mathbbm{1}(A=a)
\quad\text{ and }\quad
\mathbb{E}\{V^\pi(S')-M(S,a)\mid S,A=a\}=0.
\]

Summing over \(a\in\{0,1\}\) gives
\[
\mathbb{E}\left[\sum_ag_a'(S',\widetilde A,S)\delta\widehat{\omega}(S,a)\{R-\theta_R(S,a)\}\right]=0,
\]
and
\[
\mathbb{E}\left[\sum_ag_a(R,\widetilde A,S)\delta\widehat{\omega}(S,a)\{V^\pi(S')-M(S,a)\}\right]=0.
\]

Therefore the linear drift reduces to
\begin{align*}
L(\widehat\Theta)&=\E_{p_e}\{\delta \widehat V^\pi(S)\}-\frac{1}{1-\gamma}
\E\sum_a g'_a\omega(S,a)\delta \widehat \theta_R(S,a)
+\frac{\gamma}{1-\gamma}\E\sum_a g_a\omega(S,a)\{\delta \widehat V^\pi(S')-\delta \widehat M(S,a)\}\\
&+\frac{1}{1-\gamma}
\E\sum_a \delta \widehat g'_a\omega(S,a)\{R-\theta_R(S,a)\}+\frac{\gamma}{1-\gamma}\E\sum_a \delta \widehat g_a\omega(S,a)\{V^\pi(S')-M(S,a)\} \\
&= L_0 +\frac{1}{1-\gamma}
\E\sum_a \delta \widehat g'_a\omega(S,a)\{R-\theta_R(S,a)\}+\frac{\gamma}{1-\gamma}\E\sum_a \delta \widehat g_a\omega(S,a)\{V^\pi(S')-M(S,a)\}, 
\end{align*}
where we define
\[
L_0 = \E_{p_e}\{\delta V^\pi(S)\}-\frac{1}{1-\gamma}
\E\sum_a g'_a\omega(S,a)\delta \widehat \theta_R(S,a)
+\frac{\gamma}{1-\gamma}\E\sum_a g_a\omega(S,a)\{\delta \widehat V^\pi(S')-\delta \widehat M(S,a)\}.
\]

Next, we show $L_0 = 0$.
By Lemma \ref{lemma:indicator},
\[
\E\{g'_a(S',\widetilde A,S)\mid S,A\}=\mathds 1(A=a),
\qquad
\E\{g_a(R,\widetilde A,S)\mid S,A\}=\mathds 1(A=a).
\]
Thus, for any measurable function \(h(S,a)\),
\[
\E\sum_a g'_a(S',A,S)\omega(S,a)h(S,a)=\E\{\omega(S,A)h(S,A)\},
\]
and the same identity holds with \(g_a\) in place of \(g'_a\). Applying this to $L_0$ gives
\begin{align*}
L_0=\E_{p_e}\{\delta \widehat V^\pi(S)\}-\frac{1}{1-\gamma}\E\{\omega(S,A)\delta \widehat \theta_R(S,A)\}
+\frac{\gamma}{1-\gamma}\E\{\omega(S,A)\delta \widehat V^\pi(S')\}-\frac{\gamma}{1-\gamma}\E\{\omega(S,A)\delta \widehat M(S,A)\}
\end{align*}

Finally, by Lemma \ref{lemma:balancing}, for any measurable \(f\),
\[
\E_{p_e}\{f(S)\}=\frac{1}{1-\gamma}\E\left[\omega(S,A)\{f(S)-\gamma \E(f(S')\mid S,A)\}\right].
\]
Taking \(f=\delta \widehat V^\pi\), we obtain
\[
\E_{p_e}\{\delta \widehat V^\pi(S)\}=\frac{1}{1-\gamma}\E\left[\omega(S,A)\{\delta \widehat V^\pi(S)-\gamma \E(\delta \widehat V^\pi(S')\mid S,A)\}\right].
\]
Substituting this into the expression for \(L_0\), and using iterated expectation to cancel
\[
\E\{\omega(S,A)\E(\delta \widehat V^\pi(S')\mid S,A)\}=\E\{\omega(S,A)\delta \widehat V^\pi(S')\},
\]
yields
\[
L_0=\frac{1}{1-\gamma}\E\left[\omega(S,A)\{\delta \widehat V^\pi(S)-\delta \widehat \theta_R(S,A)-\gamma\delta \widehat M(S,A)\}\right].
\]

By the definition of \(V^\pi\),
\[
\delta \widehat V^\pi(S)=\sum_a \pi(a\mid S)\{\delta \widehat \theta_R(S,a)+\gamma\delta \widehat M(S,a)\}.
\]
Therefore
\[
\delta \widehat V^\pi(S)-\delta \widehat \theta_R(S,A)-\gamma\delta \widehat M(S,A)=\sum_a\pi(a\mid S)\{\delta \widehat \theta_R(S,a)+\gamma\delta \widehat M(S,a)\}-\{\delta \widehat \theta_R(S,A)+\gamma\delta \widehat M(S,A)\}.
\]
The balancing identity for \(\omega\) implies that, for any measurable \(h(S,a)\),
\[
\E\left[\omega(S,A)\left\{\sum_a\pi(a\mid S)h(S,a)-h(S,A)\right\}\right]=0.
\]
Applying this identity with $h(S,a)=\delta \widehat \theta_R(S,a)+\gamma\delta \widehat M(S,a)$ gives
\(
L_0=0.
\)
Therefore, 
\[
L(\widehat\Theta)= \frac{1}{1-\gamma}\E\sum_a \delta \widehat g'_a(S',\widetilde A,S)\omega(S,a)\{R-\theta_R(S,a)\}+\frac{\gamma}{1-\gamma}\E\sum_a \delta \widehat g_a(R,\widetilde A,S)\omega(S,a)\{V^\pi(S')-M(S,a)\}.
\] 

We bound this term by conditioning on \((S,A)\). For the reward part, the contribution from \(A=a\) vanishes because
\[
\mathbb E\{R-\theta_R(S,a)\mid S,A=a\}=0.
\]
Hence only the stratum \(A=1-a\) contributes. By the definitions of \(g'_a\),
\[
\mathbb E\{\delta \widehat g'_a(S',\widetilde A,S)\mid S,A=1-a\}=
\frac{\delta \widehat \theta_{\widetilde A}(S,1-a)\delta \widehat \theta_{S'}(S,1-a)}{\widehat\Delta_{\widetilde A,a}(S)\widehat\Delta_{S',a}(S)},
\]
and
\[
\mathbb E\{R-\theta_R(S,a)\mid S,A=1-a\}=\theta_R(S,1-a)-\theta_R(S,a).
\]
Therefore, under Assumption \ref{assump:regularity},
\[
\left|\mathbb E\sum_a\delta \widehat g'_a(O)\omega(S,a)\{R-\theta_R(S,a)\}\right|\lesssim\sum_a\mathbb E\left[\left|\delta \widehat \theta_{\widetilde A}(S,1-a)\delta \widehat \theta_{S'}(S,1-a)\right|\right].
\]
By Cauchy--Schwarz,
\[
\left|\mathbb E\sum_a\delta \widehat g'_a(O)\omega(S,a)\{R-\theta_R(S,a)\}\right|\lesssim\sum_a\|\delta \widehat \theta_{\widetilde A}(\cdot,1-a)\|\|\delta \widehat \theta_{S'}(\cdot,1-a)\|.
\]

Similarly, for the transition part, the contribution from \(A=a\) vanishes because
\[
\mathbb E\{V^\pi(S')-M(S,a)\mid S,A=a\}=0.
\]
Only the stratum \(A=1-a\) contributes. Moreover,
\[
\mathbb E\{\delta \widehat g_a(O)\mid S,A=1-a\}=\frac{\delta \widehat \theta_{\widetilde A}(S,1-a)\delta \widehat \theta_R(S,1-a)}{\widehat\Delta_{\widetilde A,a}(S)\widehat\Delta_{R,a}(S)},
\]
and
\[
\mathbb E\{V^\pi(S')-M(S,a)\mid S,A=1-a\}=M(S,1-a)-M(S,a).
\]
Thus, again by Assumption \ref{assump:regularity},
\[
\left|\mathbb E\sum_a\delta \widehat g_a(O)\omega(S,a)\{V^\pi(S')-M(S,a)\}
\right|\lesssim \sum_a \|\delta \widehat \theta_{\widetilde A}(\cdot,1-a)\| \|\delta \widehat \theta_R(\cdot,1-a)\|.
\]

Combining the two bounds gives
\[
|L(\widehat\Theta)|\lesssim\sum_a\|\delta \widehat \theta_{\widetilde A}(\cdot,1-a)\|
\|\delta \widehat \theta_{S'}(\cdot,1-a)\|+\sum_a\|\delta \widehat \theta_{\widetilde A}(\cdot,1-a)\|\|\delta \widehat \theta_R(\cdot,1-a)\|.
\]

\paragraph*{Step 4: Second- and higher-order bias term.}
In this step, we focus on $\mathcal R_1(\widehat\Theta)$ and $\mathcal R_2(\widehat\Theta)$ in
\[
\mathcal D(\widehat\Theta)= L(\widehat\Theta) + \mathcal R(\widehat\Theta)
=L(\widehat\Theta) +  \mathcal R_1(\widehat\Theta)+\mathcal R_2(\widehat\Theta),
\]
where all un-hatted quantities are evaluated at the true nuisance collection
\(\Theta_0\). Recall that
\[
\begin{aligned}
\mathcal R_1(\widehat\Theta)
=\frac{1}{1-\gamma}\E\sum_a\Big[
&(\delta \widehat g'_a)\delta \widehat \omega(S,a)\{R-\theta_R(S,a)\}-(\delta \widehat g'_a)\omega(S,a)\delta \widehat \theta_R(S,a) \\
&-g'_a\delta \widehat \omega(S,a)\delta \widehat \theta_R(S,a)-(\delta \widehat g'_a)\delta \widehat \omega(S,a)\delta \widehat \theta_R(S,a)\Big],
\end{aligned}
\]
and
\[
\begin{aligned}
\mathcal R_2(\widehat\Theta)=\frac{\gamma}{1-\gamma}\E\sum_a
\Big[
&(\delta \widehat g_a)\delta \widehat \omega(S,a)\{V^\pi(S')-M(S,a)\} +(\delta \widehat g_a)\omega(S,a)\{\delta \widehat V^\pi(S')-\delta \widehat M(S,a)\} \\
&+g_a\delta \widehat \omega(S,a)\{\delta \widehat V^\pi(S')-\delta \widehat M(S,a)\} +(\delta \widehat g_a)\delta \widehat \omega(S,a)\{\delta \widehat V^\pi(S')-\delta \widehat M(S,a)\}\Big].
\end{aligned}
\]

We first consider the leading residual term in \(\mathcal R_1(\widehat\Theta)\),
\[
T_{1,1}=\E\sum_a(\delta \widehat g'_a)\delta \widehat \omega(S,a)\{R-\theta_R(S,a)\}.
\]
Write \(R-\theta_R(S,a)=\{R-\theta_R(S,A)\}+\{\theta_R(S,A)-\theta_R(S,a)\}\). The first part vanishes after conditioning on \((S,A)\), by conditional independence and \(\E\{R-\theta_R(S,A)\mid S,A\}=0\). Hence
\[
T_{1,1}=\sum_a\E\left[\delta \widehat \omega(S,a)\E\{\delta \widehat g'_a\mid S,A\}\{\theta_R(S,A)-\theta_R(S,a)\}\right].
\]
Since \(g'_a\) satisfies \(\E(g'_a\mid S,A)=\mathds 1(A=a)\), we have
\[
\E\{\delta \widehat g'_a\mid S,A\}=\E\{\widehat g'_a\mid S,A\}-\mathds 1(A=a).
\]
The term involving \(\mathds 1(A=a)\) drops out because \(\mathds 1(A=a)\{\theta_R(S,A)-\theta_R(S,a)\}=0\). Therefore,
\[
T_{1,1}=\sum_a\E\left[\delta \widehat \omega(S,a)\E\{\widehat g'_a\mid S,A\}\{\theta_R(S,A)-\theta_R(S,a)\}\right].
\]
Now use the product form \(\widehat g'_a=\widehat\alpha_a\widehat\beta^{S'}_a\). By the conditional independence assumption, \(\widetilde A\perp S'\mid(S,A)\), and boundedness of the denominators in $\widehat\alpha_a$ and $\widehat\beta^{S'}_a$,
\[
|\E\{\widehat g'_a\mid S,A\}|\lesssim|\theta_{\widetilde A}(S,A)-\widehat\theta_{\widetilde A}(S,1-a)|
|\theta_{S'}(S,A)-\widehat\theta_{S'}(S,1-a)|.
\]
The factor \(\theta_R(S,A)-\theta_R(S,a)\) is zero when \(A=a\). Hence only the case \(A=1-a\) contributes, and on this event
\[
\theta_{\widetilde A}(S,A)-\widehat\theta_{\widetilde A}(S,1-a)=-\delta \widehat \theta_{\widetilde A}(S,1-a),\qquad\theta_{S'}(S,A)-\widehat\theta_{S'}(S,1-a)=-\delta \widehat \theta_{S'}(S,1-a).
\]
Thus,
\[
|T_{1,1}|\lesssim\sum_a\E\left[|\delta \widehat \omega(S,a)\delta \widehat \theta_{\widetilde A}(S,1-a)\delta \widehat \theta_{S'}(S,1-a)|\right].
\]
The remaining terms in \(\mathcal R_1(\widehat\Theta)\) can also be bounded by conditioning on \((S,A)\). Consider
\[
T_{1,2}=\E\sum_a(\delta \widehat g'_a)\omega(S,a)\delta \widehat \theta_R(S,a).
\]
Since \(\omega(S,a)\delta \widehat \theta_R(S,a)\) is measurable with respect to \((S,A)\),
\[
T_{1,2}=\E\sum_a\omega(S,a)\delta \widehat \theta_R(S,a)\E(\delta \widehat g'_a\mid S,A).
\]
We next bound \(\E(\delta \widehat g'_a\mid S,A)\). Recall that \(g'_a=\alpha_a\beta^{S'}_a\), where
\[
\alpha_a=\frac{\widetilde A-\theta_{\widetilde A}(S,1-a)}{\theta_{\widetilde A}(S,a)-\theta_{\widetilde A}(S,1-a)},\qquad\beta^{S'}_a=\frac{l(S')-\theta_{S'}(S,1-a)}{\theta_{S'}(S,a)-\theta_{S'}(S,1-a)}.
\]
The estimated quantities \(\widehat\alpha_a\) and \(\widehat\beta^{S'}_a\) are defined analogously by replacing the nuisance functions with their estimators. By conditional independence of
\(\widetilde A\) and \(S'\) given \((S,A)\),\[\E(\widehat g'_a\mid S,A=b)=\E(\widehat\alpha_a\mid S,A=b)\E(\widehat\beta^{S'}_a\mid S,A=b),\qquad b\in\{a,1-a\}.
\]
For \(b=a\), the two conditional means satisfy
\[
\E(\widehat\alpha_a\mid S,A=a)=
1-\frac{\delta \widehat \theta_{\widetilde A}(S,a)}{\widehat\theta_{\widetilde A}(S,a)-\widehat\theta_{\widetilde A}(S,1-a)},\quad 
\E(\widehat\beta^{S'}_a\mid S,A=a) =1-\frac{\delta \widehat \theta_{S'}(S,a)}{\widehat\theta_{S'}(S,a)-\widehat\theta_{S'}(S,1-a)}.
\]
Since \(\E(g'_a\mid S,A=a)=1\), the separation condition in Assumption \ref{assump:regularity} implies
\[
|\E(\delta \widehat g'_a\mid S,A=a)|\lesssim|\delta \widehat \theta_{\widetilde A}(S,a)|+|\delta \widehat \theta_{S'}(S,a)|+|\delta \widehat \theta_{\widetilde A}(S,a)\delta \widehat \theta_{S'}(S,a)|.
\]
For \(b=1-a\), we have \(\E(g'_a\mid S,A=1-a)=0\). Moreover,
\[
\E(\widehat\alpha_a\mid S,A=1-a)=-\frac{\delta \widehat \theta_{\widetilde A}(S,1-a)}
{\widehat\theta_{\widetilde A}(S,a)-\widehat\theta_{\widetilde A}(S,1-a)},\quad
\E(\widehat\beta^{S'}_a\mid S,A=1-a)=-\frac{\delta \widehat \theta_{S'}(S,1-a)}{\widehat\theta_{S'}(S,a)-\widehat\theta_{S'}(S,1-a)}.
\]
Hence,
\[
|\E(\delta \widehat g'_a\mid S,A=1-a)|\lesssim|\delta \widehat \theta_{\widetilde A}(S,1-a)\delta \widehat \theta_{S'}(S,1-a)|.
\]

Substituting these conditional bounds into the expression for \(T_{1,2}\), and using the boundedness of \(\omega\) and Assumption \ref{assump:regularity}, gives
\[
\begin{aligned}
|T_{1,2}|&\lesssim\sum_a\mathbb E\left[\left\{
|\delta\widehat\theta_{\widetilde A}(S,a)|+
|\delta\widehat\theta_{S'}(S,a)|+
|\delta\widehat\theta_{\widetilde A}(S,a)
  \delta\widehat\theta_{S'}(S,a)|\right\}
|\delta\widehat\theta_R(S,a)|\right]  \\
&\quad+\sum_a\mathbb E\left[|\delta\widehat\theta_{\widetilde A}(S,1-a)
  \delta\widehat\theta_{S'}(S,1-a)
  \delta\widehat\theta_R(S,a)|\right] \\
&\lesssim\sum_a\mathbb E  \left[\left\{
|\delta\widehat\theta_{\widetilde A}(S,a)|+
|\delta\widehat\theta_{S'}(S,a)|
\right\}
|\delta\widehat\theta_R(S,a)| +
|\delta\widehat\theta_{\widetilde A}(S,1-a)
  \delta\widehat\theta_{S'}(S,1-a)
  \delta\widehat\theta_R(S,a)|\right]
\end{aligned}
\]

For the third term in \(\mathcal R_1(\widehat\Theta)\), conditioning on \((S,A)\) gives
\[
\begin{aligned}
T_{1,3} = \E\sum_a g'_a(O)\delta \widehat \omega(S,a)\delta \widehat \theta_R(S,a)
&=\E\sum_a\delta \widehat \omega(S,a)\delta \widehat \theta_R(S,a)\E(g'_a\mid S,A)  \\
&=\E\sum_a\delta \widehat \omega(S,a)\delta \widehat \theta_R(S,a)\mathds 1(A=a).
\end{aligned}
\]
Therefore,
\[
|T_{1,3}|\le\sum_a \E\left[|\delta \widehat \omega(\cdot,a)\delta \widehat \theta_R(\cdot,a)|\right].
\]

It remains to bound the term involving all three perturbations,
\[
T_{1,4}
=
\mathbb E\sum_a
\delta\widehat g_a'\,
\delta\widehat\omega(S,a)\,
\delta\widehat\theta_R(S,a).
\]
Since \(\delta\widehat\omega(S,a)\delta\widehat\theta_R(S,a)\) is measurable with respect to
\((S,A)\), conditioning on \((S,A)\) gives
\[
T_{1,4}
=
\mathbb E\sum_a
\delta\widehat\omega(S,a)\delta\widehat\theta_R(S,a)
\mathbb E\{\delta\widehat g_a'\mid S,A\}.
\]
Using the conditional bounds for
\(\mathbb E\{\delta\widehat g_a'\mid S,A\}\) derived above, together with the separation condition in Assumption \ref{assump:regularity}, we obtain
\[
\begin{aligned}
|T_{1,4}|
&\lesssim\sum_a\mathbb E\Big[
|\delta\widehat\omega(S,a) \delta\widehat\theta_R(S,a)|
\Big\{
|\delta\widehat\theta_{\widetilde A}(S,a)|
+|\delta\widehat\theta_{S'}(S,a)|
+|\delta\widehat\theta_{\widetilde A}(S,a)
  \delta\widehat\theta_{S'}(S,a)|
\Big\}\Big] \\
&\quad+\sum_a\mathbb E\Big[
|\delta\widehat\omega(S,a) \delta\widehat\theta_R(S,a) \delta\widehat\theta_{\widetilde A}(S,1-a)
  \delta\widehat\theta_{S'}(S,1-a)|
\Big]\\
&\lesssim\sum_a\mathbb E\Big[
|\delta\widehat\omega(S,a) \delta\widehat\theta_R(S,a)|
\Big\{
|\delta\widehat\theta_{\widetilde A}(S,a)|
+|\delta\widehat\theta_{S'}(S,a)|
\Big\}
\Big] \\
&\quad+\sum_a\mathbb E\Big[
|\delta\widehat\omega(S,a) \delta\widehat\theta_R(S,a) \delta\widehat\theta_{\widetilde A}(S,1-a)
  \delta\widehat\theta_{S'}(S,1-a)|
\Big]
\end{aligned}
\]
Combining the bounds for $T_{1,1},\ldots, T_{1,4}$, the bound for \(\mathcal R_1(\widehat\Theta)\) is therefore
\[
\begin{aligned}
&|\mathcal R_1(\widehat\Theta)| \\
\lesssim &\sum_a
\mathbb E\!\left[
|\delta\widehat\omega(S,a) \delta\widehat\theta_{\widetilde A}(S,1-a) \delta\widehat\theta_{S'}(S,1-a)|
\right] 
+\sum_a\mathbb E\!\left[\left\{
|\delta\widehat\theta_{\widetilde A}(S,a)|
+|\delta\widehat\theta_{S'}(S,a)|\right\}
|\delta\widehat\theta_R(S,a)|
\right] \\
&\quad+\sum_a\mathbb E\!\left[
|\delta\widehat\theta_{\widetilde A}(S,1-a)
\delta\widehat\theta_{S'}(S,1-a) \delta\widehat\theta_R(S,a)|\right] 
+\sum_a\mathbb E\!\left[
|\delta\widehat\omega(S,a) \delta\widehat\theta_R(S,a)|
\right] \\
\lesssim &\sum_a\mathbb E\!\left[\left\{
|\delta\widehat\theta_{\widetilde A}(S,a)|
+|\delta\widehat\theta_{S'}(S,a) + |\delta\widehat\omega(S,a)|\right\}
|\delta\widehat\theta_R(S,a)|
\right] \\
&\quad + \sum_a
\mathbb E\!\left[
|\delta\widehat\omega(S,a) \delta\widehat\theta_{\widetilde A}(S,1-a) \delta\widehat\theta_{S'}(S,1-a)|
\right] 
+\sum_a\mathbb E\!\left[
|\delta\widehat\theta_{\widetilde A}(S,1-a)
\delta\widehat\theta_{S'}(S,1-a) \delta\widehat\theta_R(S,a)|\right] 
\end{aligned}
\]

We next consider \(\mathcal R_2(\widehat\Theta)\). The leading residual term is
\[
T_{2,1}=\E\sum_a(\delta \widehat g_a)\delta \widehat \omega(S,a)\{V^\pi(S')-M(S,a)\}.
\]
The same argument applies, now using \(g_a=\alpha_a\beta^R_a\). Write
\[
V^\pi(S')-M(S,a)=\{V^\pi(S')-M(S,A)\}+\{M(S,A)-M(S,a)\}.
\]
The first part has conditional mean zero given \((S,A)\). Since \(\E(g_a\mid S,A)=\mathds 1(A=a)\), the term involving \(\mathds 1(A=a)\) again vanishes after multiplication by \(M(S,A)-M(S,a)\). Therefore the contribution is driven by \(\E(\widehat g_a\mid S,A)\{M(S,A)-M(S,a)\}\). Using the product form \(\widehat g_a=\widehat\alpha_a\widehat\beta^R_a\), conditional independence, and bounded denominators gives
\[
|\E\{\widehat g_a\mid S,A\}|\lesssim|\theta_{\widetilde A}(S,A)-\widehat\theta_{\widetilde A}(S,1-a)||\theta_R(S,A)-\widehat\theta_R(S,1-a)|.
\]
The factor \(M(S,A)-M(S,a)\) is zero when \(A=a\), so only \(A=1-a\) contributes. Thus,
\[
|T_{2,1}|\lesssim\sum_a\E\left[|\delta \widehat \omega(S,a)\delta \widehat \theta_{\widetilde A}(S,1-a)\delta \widehat \theta_R(S,1-a)|\right].
\]
For the remaining terms in \(\mathcal R_2(\widehat\Theta)\), we condition on \((S,A)\). Define 
\[
e_M(S,a)=\E\{\delta \widehat V^\pi(S')\mid S,A=a\}-\delta \widehat M(S,a).
\]
Consider first
\[
T_{2,2}=\E\sum_a(\delta \widehat g_a)\omega(S,a)\{\delta \widehat V^\pi(S')-\delta \widehat M(S,a)\}.
\]
By conditioning on \((S,A)\), and using the conditional independence of \((\widetilde A,R)\) and \(S'\) given \((S,A)\), we obtain
\[
T_{2,2}=\E\sum_a\omega(S,a)\E(\delta \widehat g_a\mid S,A)\left[\E\{\delta \widehat V^\pi(S')\mid S,A\}-\delta \widehat M(S,a)\right].
\]
On the event \(A=a\), the last bracket equals \(e_M(S,a)\). Moreover, \(g_a=\alpha_a\beta_a^R\) gives
\[
|\E(\delta \widehat g_a\mid S,A=a)|\lesssim |\delta \widehat \theta_{\widetilde A}(S,a)|+|\delta \widehat \theta_R(S,a)\}|+|\delta \widehat \theta_{\widetilde A}(S,a)\delta \widehat \theta_R(S,a)|.
\]
On the event \(A=1-a\), the last bracket is not \(e_M(S,a)\), but $\delta \widehat g_a$ gives the sharper bound
\[
|\E(\delta \widehat g_a\mid S,A=1-a)|\lesssim |\delta \widehat \theta_{\widetilde A}(S,1-a)\delta \widehat \theta_R(S,1-a)|.
\]
Therefore, under boundedness of the remaining factors,
\[
\begin{aligned}
|T_{2,2}|\lesssim
\sum_a\mathbb E\!\left[\left\{|\delta\widehat\theta_{\widetilde A}(S,a)|
+|\delta\widehat\theta_R(S,a)|\right\}|e_M(S,a)|\right] 
+\sum_a\mathbb E\!\left[|\delta\widehat\theta_{\widetilde A}(S,1-a)\delta\widehat\theta_R(S,1-a)|\right].
\end{aligned}
\]

Next consider
\[
T_{2,3}=\E\sum_ag_a(O)\delta \widehat \omega(S,a)\{\delta \widehat V^\pi(S')-\delta \widehat M(S,a)\}.
\]
Conditioning on \((S,A)\) and using \(\E(g_a\mid S,A)=\mathds 1(A=a)\), we get
\[
T_{2,3}=\E\sum_a\delta \widehat \omega(S,a)\mathds 1(A=a) e_M(S,a).
\]
Hence
\[
|T_{2,3}|\le\sum_a\E\left\{|\delta \widehat \omega(\cdot,a) e_M(\cdot,a)|\right\}.
\]

Finally, consider the term involving all three perturbations,
\[
T_{2,4}=\E\sum_a(\delta \widehat g_a)\delta \widehat \omega(S,a)\{\delta \widehat V^\pi(S')-\delta \widehat M(S,a)\}.
\]
Using the same conditioning argument as above, we obtain
\[
T_{2,4}=\mathbb E\sum_a\delta\widehat\omega(S,a)
\mathbb E\!\left\{\delta\widehat g_a \mid S,A\right\}
\left[\mathbb E\!\left\{\delta \widehat V^\pi(S') \mid S,A\right\}-\delta \widehat M(S,a)
\right].
\]
On the event \(A=a\), the term in the last bracket equals \(e_M(S,a)\). Moreover,
\[
\mathbb E\!\left\{\delta\widehat g_a \mid S,A=a\right\}
\lesssim |\delta\widehat\theta_{\widetilde A}(S,a)|
+|\delta\widehat\theta_R(S,a)|
+|\delta\widehat\theta_{\widetilde A}(S,a)\delta\widehat\theta_R(S,a)|.
\]
Therefore, the \(A=a\) contribution is bounded by
\[
\begin{aligned}
&\sum_a\mathbb E\!\left[|\delta\widehat\omega(S,a)|
\left\{|\delta\widehat\theta_{\widetilde A}(S,a)|+|\delta\widehat\theta_R(S,a)|
+|\delta\widehat\theta_{\widetilde A}(S,a)\delta\widehat\theta_R(S,a)|\right\}|e_M(S,a)|\right].
\end{aligned}
\]
On the event \(A=1-a\), the last bracket is not \(e_M(S,a)\) (but is bounded). However,
the structure of \(g_a\) gives
\[
\mathbb E\!\left\{\delta\widehat g_a \mid S,A=1-a\right\}
\lesssim |\delta\widehat\theta_{\widetilde A}(S,1-a)\delta\widehat\theta_R(S,1-a)|.
\]
Hence the \(A=1-a\) contribution is bounded by
\[
\begin{aligned}
\sum_a\mathbb E\!\left[|\delta\widehat\omega(S,a)
\delta\widehat\theta_{\widetilde A}(S,1-a)\delta\widehat\theta_R(S,1-a)|
\right].
\end{aligned}
\]
Combining the two cases gives
\[
\begin{aligned}
|T_{2,4}|\lesssim
&\sum_a\mathbb E\!\left[
|\delta\widehat\omega(S,a)|
\left\{|\delta\widehat\theta_{\widetilde A}(S,a)|
+|\delta\widehat\theta_R(S,a)|
+|\delta\widehat\theta_{\widetilde A}(S,a)\delta\widehat\theta_R(S,a)|
\right\}|e_M(S,a)|
\right] \\
&\quad+\sum_a\mathbb E\!\left[
|\delta\widehat\omega(S,a)
\delta\widehat\theta_{\widetilde A}(S,1-a)\delta\widehat\theta_R(S,1-a)|
\right]\\
\lesssim
&\sum_a\mathbb E\!\left[
|\delta\widehat\omega(S,a)|\left\{|\delta\widehat\theta_{\widetilde A}(S,a)|
+|\delta\widehat\theta_R(S,a)|\right\}|e_M(S,a)|\right] \\
&\quad+\sum_a\mathbb E\!\left[|\delta\widehat\omega(S,a)
\delta\widehat\theta_{\widetilde A}(S,1-a)\delta\widehat\theta_R(S,1-a)|\right],
\end{aligned}
\]
which are higher order than the bound for $|T_{2,2}|$.

Consequently, \(\mathcal R_2(\widehat\Theta)\) is bounded by
\[
\begin{aligned}
|\mathcal R_2|\lesssim
&\sum_a\mathbb E\!\left[|\delta\widehat\omega(S,a) \delta\widehat\theta_{\widetilde A}(S,1-a) \delta\widehat\theta_R(S,1-a)|\right] 
+\sum_a\mathbb E\!\left[\left\{|\delta\widehat\theta_{\widetilde A}(S,a)|
+|\delta\widehat\theta_R(S,a)|\right\}|e_M(S,a)|\right] \\
&+\sum_a\mathbb E\!\left[
|\delta\widehat\theta_{\widetilde A}(S,1-a)  \delta\widehat\theta_R(S,1-a)|\right] 
+\sum_a\mathbb E\!\left[|\delta\widehat\omega(S,a) e_M(S,a)|\right] \\
\lesssim & \sum_a\mathbb E\!\left[\left\{|\delta\widehat\theta_{\widetilde A}(S,a)|
+|\delta\widehat\theta_R(S,a)| + |\delta\widehat\omega(S,a)|\right\}|e_M(S,a)|\right] \\
& + \sum_a\mathbb E\!\left[|\delta\widehat\omega(S,a) \delta\widehat\theta_{\widetilde A}(S,1-a) \delta\widehat\theta_R(S,1-a)|\right] +\sum_a\mathbb E\!\left[
|\delta\widehat\theta_{\widetilde A}(S,a)  \delta\widehat\theta_R(S,a)|\right] 
\end{aligned}
\]

Combining the bounds for $\mathcal R_1$ and $\mathcal R_2$, and noting that
\(
\sum_a \E\!\left[
|\delta\widehat\theta_{\widetilde A}(S,a)  \delta\widehat\theta_R(S,a)|\right] 
\)
is already contained in
\[
\sum_a\mathbb E\!\left[\left\{
|\delta\widehat\theta_{\widetilde A}(S,a)| +|\delta\widehat\theta_{S'}(S,a)| + |\delta\widehat\omega(S,a)|\right\} |\delta\widehat\theta_R(S,a)|
\right] 
\]
we obtain
\[
\begin{aligned}
|\mathcal R(\widehat\Theta)|
\lesssim&\sum_a\mathbb E\!\left[
\left\{|\delta\widehat\theta_{\widetilde A}(S,a)|
+|\delta\widehat\theta_{S'}(S,a)|
+|\delta\widehat\omega(S,a)|\right\}
|\delta\widehat\theta_R(S,a)|\right] \\
&+\sum_a\mathbb E\!\left[\left\{|\delta\widehat\theta_{\widetilde A}(S,a)|
+|\delta\widehat\theta_R(S,a)|
+|\delta\widehat\omega(S,a)|
\right\}|e_M(S,a)|\right] \\
&+\sum_a\mathbb E\!\left[
|\delta\widehat\theta_{\widetilde A}(S,1-a)|
|\delta\widehat\theta_{S'}(S,1-a)|
|\delta\widehat\theta_R(S,a)|
\right] \\
&+\sum_a\mathbb E\!\left[
|\delta\widehat\omega(S,a)|
|\delta\widehat\theta_{\widetilde A}(S,1-a)|
|\delta\widehat\theta_{S'}(S,1-a)|
\right] \\
&+\sum_a\mathbb E\!\left[
|\delta\widehat\omega(S,a)|
|\delta\widehat\theta_{\widetilde A}(S,1-a)|
|\delta\widehat\theta_R(S,1-a)|
\right].
\end{aligned}
\]

Combining the preceding bound for \(L(\widehat\Theta)\) with the bound for
\(\mathcal R(\widehat\Theta)\), we obtain
\begin{align*}
|\mathcal D(\widehat\Theta)|
\le |L(\widehat\Theta)|+|\mathcal R(\widehat\Theta)|
\lesssim &\sum_a\mathbb E\!\left[
\left\{|\delta\widehat\theta_{\widetilde A}(S,a)|
+|\delta\widehat\theta_{S'}(S,a)|
+|\delta\widehat\omega(S,a)|\right\}
|\delta\widehat\theta_R(S,a)|\right] \\
&+\sum_a\mathbb E\!\left[\left\{|\delta\widehat\theta_{\widetilde A}(S,a)|
+|\delta\widehat\theta_R(S,a)|
+|\delta\widehat\omega(S,a)|
\right\}|e_M(S,a)|\right] \\
&+\sum_a\mathbb E\!\left[
|\delta\widehat\theta_{\widetilde A}(S,a)|
|\delta\widehat\theta_{S'}(S,a)|
\right] \\
&+\sum_a\mathbb E\!\left[
|\delta\widehat\theta_{\widetilde A}(S,1-a)|
|\delta\widehat\theta_{S'}(S,1-a)|
|\delta\widehat\theta_R(S,a)|
\right] \\
&+\sum_a\mathbb E\!\left[
|\delta\widehat\omega(S,a)|
|\delta\widehat\theta_{\widetilde A}(S,1-a)|
|\delta\widehat\theta_{S'}(S,1-a)|
\right] \\
&+\sum_a\mathbb E\!\left[
|\delta\widehat\omega(S,a)|
|\delta\widehat\theta_{\widetilde A}(S,1-a)|
|\delta\widehat\theta_R(S,1-a)|
\right] 
\end{align*}
Therefore, 
\begin{align*}
&I_n = \sum_{a\in\{0,1\}}\E\Big[
\left\{|\delta\widehat\theta_{\widetilde A}(S,a)|
+|\delta\widehat\theta_{S'}(S,a)|
+|\delta\widehat\omega(S,a)|\right\}
|\delta\widehat\theta_R(S,a)| \\
&+ \left\{|\delta\widehat\theta_{\widetilde A}(S,a)|
+|\delta\widehat\theta_R(S,a)|
+|\delta\widehat\omega(S,a)|
\right\}|e_M(S,a)| \\
&+ |\delta\widehat\theta_{\widetilde A}(S,a)|
|\delta\widehat\theta_{S'}(S,a)| + 
|\delta\widehat\theta_{\widetilde A}(S,1-a)|
|\delta\widehat\theta_{S'}(S,1-a)|
|\delta\widehat\theta_R(S,a)| \\
&+
|\delta\widehat\omega(S,a)|
|\delta\widehat\theta_{\widetilde A}(S,1-a)|
|\delta\widehat\theta_{S'}(S,1-a)|
+
|\delta\widehat\omega(S,a)|
|\delta\widehat\theta_{\widetilde A}(S,1-a)|
|\delta\widehat\theta_R(S,1-a)|
\Big] 
\end{align*}

By Cauchy-Schwarz, the first three terms are controlled by
\begin{align*}
& \sum_a \big[\|\delta \widehat \theta_{\widetilde A}(\cdot,a)\|
+\|\delta \widehat \theta_{S'}(\cdot,a)\|
+\|\delta \widehat \omega(\cdot,a)\|
\big]\|\delta \widehat \theta_R(\cdot,a)\|;\\
& \sum_a \big[\|\delta \widehat \theta_{\widetilde A}(\cdot,a)\|+
\|\delta \widehat \theta_R(\cdot,a)\|+
\|\delta \widehat \omega(\cdot,a)\|\big]\|e_M(\cdot,a)\|; \\
\text{and }& \sum_a \|\delta \widehat \theta_{\widetilde A}(\cdot,a)\| 
\|\delta \widehat \theta_{S'}(\cdot,a)\|.
\end{align*}
By Holder's inequality, the last three terms are controlled by
\begin{align*}
&\sum_a
\mathbb E\!\left[
|\delta\widehat\theta_{\widetilde A}(S,1-a)|
|\delta\widehat\theta_{S'}(S,1-a)|
|\delta\widehat\theta_R(S,a)|
\right] \le
\sum_a
\|\delta\widehat\theta_{\widetilde A}(\cdot,1-a)\|_{\infty}
\|\delta\widehat\theta_{S'}(\cdot,1-a)\|
\|\delta\widehat\theta_R(\cdot,a)\|, \\
&\sum_a
\mathbb E\!\left[
|\delta\widehat\omega(S,a)|
|\delta\widehat\theta_{\widetilde A}(S,1-a)|
|\delta\widehat\theta_{S'}(S,1-a)|
\right] \le
\sum_a
\|\delta\widehat\theta_{\widetilde A}(\cdot,1-a)\|_{\infty}
\|\delta\widehat\omega(\cdot,a)\|
\|\delta\widehat\theta_{S'}(\cdot,1-a)\|, \\
&\sum_a
\mathbb E\!\left[
|\delta\widehat\omega(S,a)|
|\delta\widehat\theta_{\widetilde A}(S,1-a)|
|\delta\widehat\theta_R(S,1-a)|
\right] \le
\sum_a
\|\delta\widehat\theta_{\widetilde A}(\cdot,1-a)\|_{\infty}
\|\delta\widehat\omega(\cdot,a)\|
\|\delta\widehat\theta_R(\cdot,1-a)\|.
\end{align*}

\subsection{Proof of Corollary \ref{cor: multiple robust}}
\begin{proof}
From the decomposition established in the proof of Theorem~\ref{thm:error},
\[
\begin{aligned}
\widehat V(\pi)-V(\pi)=\frac{1}{NT}\sum_{i=1}^{N}\sum_{t=1}^{T}\varphi^\pi(O_{i,t};\Theta_0) 
+\bigl\{\widehat V_{\widehat d}(\pi)-\widehat V_d(\pi)\bigr\}+O_p(I_n)+o_p\bigl((NT)^{-1/2}\bigr).
\end{aligned}
\]

The influence-function term is \(o_p(1)\) by the law of large
numbers. Moreover,
\[
\left|\widehat V_{\widehat d}(\pi)-\widehat V_d(\pi)\right|\le\left|\widehat V_1(\pi)-\widehat V_0(\pi)\right|\mathbbm{1}\{\widehat d\neq d\}.
\]
By Assumption~\ref{assump:regularity}, the candidate value estimators are \(O_p(1)\), and consistency of the label-alignment step implies
\[
\mathbbm{1}\{\widehat d\neq d\}=o_p(1).
\]
Hence,
\[
\widehat V_{\widehat d}(\pi)-\widehat V_d(\pi)=o_p(1).
\]
It therefore remains to show that \(I_n=o_p(1)\) under each of the four
stated nuisance-consistency conditions.

Assumption \ref{assump:regularity} ensures that the nuisance estimators are uniformly bounded and that the relevant separation conditions hold for both the true and estimated nuisance functions. Hence the constants appearing in the error bound are well controlled, and any product term in $I_n$ is $o_p(1)$ whenever at least one of its factors is $o_p(1)$.

Under condition (i), $\|\widehat\theta_{\widetilde A}(\cdot,a) -\theta_{\widetilde A}(\cdot,a)\|=o_p(1)$, $\|\widehat\theta_R(\cdot,a)-\theta_R(\cdot,a)\|=o_p(1)$, and $\|e_M(\cdot,a)\|=o_p(1)$ for each $a\in\{0,1\}$. Inspecting the definition of $I_n$, every term contains at least one of these three $o_p(1)$ factors. Hence $I_n=o_p(1)$.

Under condition (ii), $\|\widehat\theta_{S'}(\cdot,a) -\theta_{S'}(\cdot,a)\|=o_p(1)$,
$\|\widehat\theta_R(\cdot,a)-\theta_R(\cdot,a)\|=o_p(1)$, and $\|e_M(\cdot,a)\|=o_p(1)$ for each $a\in\{0,1\}$. Again, each term in $I_n$ contains at least one of these $o_p(1)$ factors, and therefore $I_n=o_p(1)$.

Under condition (iii), $\|\widehat\omega(\cdot,a)-\omega(\cdot,a)\|=o_p(1)$, $\|\widehat\theta_R(\cdot,a)-\theta_R(\cdot,a)\|=o_p(1)$, and $\|\widehat\theta_{\widetilde A}(\cdot,a) -\theta_{\widetilde A}(\cdot,a)\|=o_p(1)$ for each $a\in\{0,1\}$.
The first group of terms in $I_n$ contains the factor $|\delta\widehat\theta_R(S,a)|$, the second group contains at least one of $|\delta\widehat\theta_{\widetilde A}(S,a)|$, $|\delta\widehat\theta_R(S,a)|$, or $|\delta\widehat\omega(S,a)|$, and the remaining higher-order product terms also contain at least one of these factors. Thus $I_n=o_p(1)$.

Under condition (iv), $\|\widehat\omega(\cdot,a)-\omega(\cdot,a)\| =o_p(1)$, $\|\widehat\theta_{\widetilde A}(\cdot,a) -\theta_{\widetilde A}(\cdot,a)\|=o_p(1)$,
$\|\widehat\theta_{S'}(\cdot,a)-\theta_{S'}(\cdot,a)\|=o_p(1)$, and $\|e_M(\cdot,a)\|=o_p(1)$ for each $a\in\{0,1\}$. Each term in $I_n$ then contains at least one of these $o_p(1)$ factors, so $I_n=o_p(1)$.

Therefore, under any one of the four conditions, $L_n=o_p(1)$, $I_n=o_p(1)$, and $E_n=o_p(1)$. Combining these facts with Theorem \ref{thm:error} gives
\[
\widehat V(\pi)-V(\pi)=o_p(1).
\]
This proves the corollary.
\end{proof}

\subsection{Proof of Theorem \ref{thm:asymptotic_normality}}

\begin{proof}
By Theorem~\ref{thm:error}, we have
\[
\widehat V(\pi)-V(\pi)=(\mathbb P_n-\mathbb P)\varphi^\pi(O;\Theta_0)+O_p(L_n)+O_p(I_n)+E_n .
\] By Lemma \ref{lem:EP}, we have $E_n=o_p(n^{-1/2})$. It remains to show that
\(
L_n=o_p(n^{-1/2}),\ I_n=o_p(n^{-1/2}).
\) 

First, by the additional condition in Theorem~\ref{thm:asymptotic_normality},
\(
\max_{a\in\{0,1\}}\|\delta \widehat \theta_{\widetilde A}(\cdot,a)\|^\kappa=o_p(n^{-1/2}).
\)
Since
\(
L_n=\max_{a\in\{0,1\}}\|\delta \widehat \theta_{\widetilde A}(\cdot,a)\|_\infty^\kappa,
\)
we immediately have
\(
L_n=o_p(n^{-1/2}).
\)

Next, we show that \(I_n=o_p(n^{-1/2})\). Let
\(
\alpha_{\min}=\min\{\alpha_\omega,\alpha_{\widetilde A},\alpha_{S'},\alpha_R,\alpha_M\}.
\)
By assumption, \(\alpha_{\min}>1/4\), and for each \(a\in\{0,1\}\),
\[
\|\delta\widehat\omega(\cdot,a)\|=O_p(n^{-\alpha_\omega}),
\qquad
\|\delta\widehat\theta_{\widetilde A}(\cdot,a)\|=O_p(n^{-\alpha_{\widetilde A}}),
\]
\[
\|\delta\widehat\theta_R(\cdot,a)\|=O_p(n^{-\alpha_R}),
\qquad
\|\delta\widehat\theta_{S'}(\cdot,a)\|=O_p(n^{-\alpha_{S'}}),
\]
and
\[
\|e_M(\cdot,a)\|=O_p(n^{-\alpha_M}).
\]
Assume also that the nuisance estimation errors are uniformly bounded in
probability, that is,
\[
\max_{a\in\{0,1\}}\left\{
\|\delta\widehat\omega(\cdot,a)\|_\infty,
\|\delta\widehat\theta_{\widetilde A}(\cdot,a)\|_\infty,
\|\delta\widehat\theta_{S'}(\cdot,a)\|_\infty,
\|\delta\widehat\theta_R(\cdot,a)\|_\infty
\right\}=O_p(1).
\]

From the preceding bound, \(I_n\) is bounded by the sum of the following terms.
First, by Cauchy--Schwarz,
\[
\begin{aligned}
&\sum_{a\in\{0,1\}}\E\!\left[
\left\{|\delta\widehat\theta_{\widetilde A}(S,a)|
+|\delta\widehat\theta_{S'}(S,a)|
+|\delta\widehat\omega(S,a)|\right\} |\delta\widehat\theta_R(S,a)|\right] \\
&\qquad\leq\sum_{a\in\{0,1\}}
\left\{\|\delta\widehat\theta_{\widetilde A}(\cdot,a)\|
+\|\delta\widehat\theta_{S'}(\cdot,a)\|
+\|\delta\widehat\omega(\cdot,a)\|\right\} \|\delta\widehat\theta_R(\cdot,a)\| \\
&\qquad=O_p\!\left(
n^{-\alpha_R-\min\{\alpha_{\widetilde A},\alpha_{S'},\alpha_\omega\}}
\right)=o_p(n^{-1/2}).
\end{aligned}
\]

Second, again by Cauchy--Schwarz,
\[
\begin{aligned}
&\sum_{a\in\{0,1\}}\mathbb E\!\left[\left\{
|\delta\widehat\theta_{\widetilde A}(S,a)|
+|\delta\widehat\theta_R(S,a)|
+|\delta\widehat\omega(S,a)|\right\}|e_M(S,a)| \right] \\
&\qquad\leq
\sum_{a\in\{0,1\}}
\left\{\|\delta\widehat\theta_{\widetilde A}(\cdot,a)\|
+\|\delta\widehat\theta_R(\cdot,a)\|
+\|\delta\widehat\omega(\cdot,a)\|\right\} \|e_M(\cdot,a)\| \\
&\qquad=O_p\!\left(n^{-\alpha_M-\min\{\alpha_{\widetilde A},\alpha_R,\alpha_\omega\}}
\right)=o_p(n^{-1/2}).
\end{aligned}
\]

Third,
\[
\begin{aligned}
\sum_{a\in\{0,1\}} \E\left[
|\delta\widehat\theta_{\widetilde A}(S,a)|
|\delta\widehat\theta_{S'}(S,a)| \right]
&\leq \sum_{a\in\{0,1\}}
\|\delta\widehat\theta_{\widetilde A}(\cdot,a)\|
\|\delta\widehat\theta_{S'}(\cdot,a)\| \\
&= O_p(n^{-\alpha_{\widetilde A}-\alpha_{S'}})=o_p(n^{-1/2}).
\end{aligned}
\]

It remains to control the triple-product terms. By Hölder's inequality with one \(L_\infty\) factor and two \(L_2\) factors,
\[
\mathbb E|XYZ|\leq\|X\|_\infty \|Y\| \|Z\|.
\]
Together with the uniform boundedness assumption, this gives
\[
\begin{aligned}
&\sum_{a\in\{0,1\}}\E \left[
|\delta\widehat\theta_{\widetilde A}(S,1-a)|
|\delta\widehat\theta_{S'}(S,1-a)|
|\delta\widehat\theta_R(S,a)| \right] \\
&\qquad\leq\sum_{a\in\{0,1\}}
\|\delta\widehat\theta_{\widetilde A}(\cdot,1-a)\|_\infty
\|\delta\widehat\theta_{S'}(\cdot,1-a)\|
\|\delta\widehat\theta_R(\cdot,a)\| \\
&\qquad=O_p(n^{-\alpha_{S'}-\alpha_R})=o_p(n^{-1/2}).
\end{aligned}
\]

Similarly,
\[
\begin{aligned}
&\sum_{a\in\{0,1\}}\mathbb E\!\left[
|\delta\widehat\omega(S,a)|
|\delta\widehat\theta_{\widetilde A}(S,1-a)|
|\delta\widehat\theta_{S'}(S,1-a)| \right] \\
&\qquad\leq\sum_{a\in\{0,1\}}
\|\delta\widehat\theta_{\widetilde A}(\cdot,1-a)\|_\infty
\|\delta\widehat\omega(\cdot,a)\|
\|\delta\widehat\theta_{S'}(\cdot,1-a)\| \\
&\qquad=O_p(n^{-\alpha_\omega-\alpha_{S'}})=o_p(n^{-1/2}),
\end{aligned}
\]
and
\[
\begin{aligned}
&\sum_{a\in\{0,1\}}\mathbb E\!\left[
|\delta\widehat\omega(S,a)|
|\delta\widehat\theta_{\widetilde A}(S,1-a)|
|\delta\widehat\theta_R(S,1-a)| \right] \\
&\qquad\leq\sum_{a\in\{0,1\}}
\|\delta\widehat\theta_{\widetilde A}(\cdot,1-a)\|_\infty
\|\delta\widehat\omega(\cdot,a)\|
\|\delta\widehat\theta_R(\cdot,1-a)\| \\
&\qquad=O_p(n^{-\alpha_\omega-\alpha_R})=o_p(n^{-1/2}).
\end{aligned}
\]
Hence every component of \(I_n\) is \(o_p(n^{-1/2})\). Therefore,
\[
I_n=o_p(n^{-1/2}).
\]




Combining the preceding bounds with the expansion in Theorem~\ref{thm:error} gives
\[
\sqrt n\{\widehat V(\pi)-V(\pi)\}=\sqrt n(\mathbb P_n-\mathbb P)\varphi^\pi(O;\Theta_0)+o_p(1).
\]
Since \(n=NT\), it remains to establish the limiting distribution of the leading influence-function term.

Let
\[
\xi_{i,t}=\sum_{a\in\mathcal A} g_a'(O_{i,t})\omega(S_{i,t},a)\{R_{i,t}-\theta_R(S_{i,t},a)\}
+\gamma\sum_{a\in\mathcal A}g_a(O_{i,t})\omega(S_{i,t},a)\{V^\pi(S_{i,t+1})-(\mathcal M V^\pi)(S_{i,t},a)\}.
\]

By the asymptotic linear expansion established above for the cross-fitted estimator,
\[
\widehat V(\pi)-V(\pi)=\frac{1}{1-\gamma}\frac{1}{NT}\sum_{i=1}^N\sum_{t=1}^T \xi_{i,t}+R_{NT},
\qquad R_{NT}=o_p\{(NT)^{-1/2}\}.
\]
It remains to establish a central limit theorem for
\(
(NT)^{-1/2}\sum_{i=1}^N\sum_{t=1}^T \xi_{i,t}.
\)

We apply a martingale central limit theorem \citep{mcleish1974dependent} by flattening the double index \((i,t)\) into a single index. Let \(j=(i-1)T+t\), for \(j=1,\ldots,NT\), and write \(i(j)=\lceil j/T\rceil\) and \(t(j)=j-\{i(j)-1\}T\). Define \(Z_{N,T,j}=\xi_{i(j),t(j)}/\sqrt{NT}\). Then
\[
\sum_{j=1}^{NT}Z_{N,T,j}=\frac{1}{\sqrt{NT}}\sum_{i=1}^N\sum_{t=1}^T\xi_{i,t}.
\]

Let \(\mathcal H_{i,t}\) denote the sigma-field generated by all completed trajectories before trajectory \(i\), together with the history of trajectory \(i\) up to the current hidden state-action pair \((S_{i,t},A_{i,t})\). Thus \(\mathcal H_{i,t}\) contains \((S_{i,t},A_{i,t})\), but not the current \(\widetilde A_{i,t}\), \(R_{i,t}\), or \(S_{i,t+1}\). For the flattened array, define \(\mathcal F_{N,T,j-1}=\mathcal H_{i(j),t(j)}\).

We first verify the martingale difference condition. By Lemma~\ref{lemma:indicator}, for each \(a\in\mathcal A\),
\[
\mathbb E\{g_a(O_{i,t})\mid S_{i,t},A_{i,t}\}=\mathbb E\{g_a'(O_{i,t})\mid S_{i,t},A_{i,t}\}=\mathds 1(A_{i,t}=a).
\]
Moreover, the reward and transition residuals are centered conditional on the
current hidden state-action pair: \(\mathbb E\{R_{i,t}-\theta_R(S_{i,t},A_{i,t})
\mid S_{i,t},A_{i,t}\}=0\) and
\[
\mathbb E\{V^\pi(S_{i,t+1})-(\mathcal M V^\pi)(S_{i,t},A_{i,t})\mid S_{i,t},A_{i,t}\}=0.
\]
It follows that \(\mathbb E(\xi_{i,t}\mid S_{i,t},A_{i,t})=0\). By the Markov property, conditioning on \(\mathcal H_{i,t}\) is equivalent to conditioning on
\((S_{i,t},A_{i,t})\) for the current transition, and therefore
\[
\mathbb E(\xi_{i,t}\mid \mathcal H_{i,t})=0.
\]
Equivalently, \(\mathbb E(Z_{N,T,j}\mid \mathcal F_{N,T,j-1})=0\). Hence the flattened array is a martingale difference array.

We next verify the two conditions of the martingale central limit theorem. First, suppose the hidden state-action process is stationary and ergodic, or geometrically ergodic with negligible initial transient effects, and define
\[
h(s,a)=\mathbb E(\xi_{i,t}^2\mid S_{i,t}=s,A_{i,t}=a).
\]
Then
\[
\mathbb E(\xi_{i,t}^2\mid \mathcal H_{i,t})=h(S_{i,t},A_{i,t}).
\]
By the Markov-chain law of large numbers,
\[
\frac{1}{NT}\sum_{i=1}^N\sum_{t=1}^T\mathbb E(\xi_{i,t}^2\mid \mathcal H_{i,t})
=\frac{1}{NT}\sum_{i=1}^N\sum_{t=1}^Th(S_{i,t},A_{i,t})
\xrightarrow{p}\mathbb E\{h(S,A)\}=:\sigma_\xi^2.
\]
We assume the limiting variance is nondegenerate, namely
\[
0<\sigma_\xi^2<\infty.
\]

Second, the conditional Lindeberg condition follows from Assumption~\ref{assump:regularity}. Indeed, Assumption~\ref{assump:regularity} implies that \(R_{i,t}\), \(V^\pi\), \(\omega\), \(g_a\), and \(g'_a\) are uniformly bounded, where the boundedness of \(g_a\) and \(g'_a\) follows from the separation condition. Hence there exists a constant \(C_\xi<\infty\) such that, with probability tending to one,
\[
\max_{1\le i\le N,\,1\le t\le T}|\xi_{i,t}|\le C_\xi .
\]
Therefore, for every \(\varepsilon>0\), when \(NT\) is large enough,
\[
\mathds 1\{|\xi_{i,t}|>\varepsilon\sqrt{NT}\}=0
\]
uniformly in \(i,t\), with probability tending to one. Consequently,
\[
\frac{1}{NT}\sum_{i=1}^N\sum_{t=1}^T\mathbb E\left[\xi_{i,t}^2
\mathds 1\{|\xi_{i,t}|>\varepsilon\sqrt{NT}\}\mid \mathcal H_{i,t}\right]\xrightarrow{p}0.
\]

By the conditional variance convergence,
\[
\frac{1}{NT}\sum_{i=1}^N\sum_{t=1}^T\mathbb E(\xi_{i,t}^2\mid \mathcal H_{i,t})\overset{p}{\longrightarrow}\sigma_\xi^2\in(0,\infty).
\]
Since \(Z_{N,T,j}=\xi_{i(j),t(j)}/\sqrt{NT}\), this is equivalent to
\[
\sum_{j=1}^{NT}\mathbb E(Z_{N,T,j}^2\mid \mathcal F_{N,T,j-1})\overset{p}{\longrightarrow}\sigma_\xi^2.
\]

Finally, the assumed conditional Lindeberg condition says that, for every
\(\varepsilon>0\),
\[
\frac{1}{NT}\sum_{i=1}^N\sum_{t=1}^T
\mathbb E\!\left[\xi_{i,t}^2\mathds 1\{|\xi_{i,t}|>\varepsilon\sqrt{NT}\}\mid \mathcal H_{i,t}
\right]\overset{p}{\longrightarrow}0.
\]
Equivalently,
\[
\sum_{j=1}^{NT}\mathbb E\!\left[Z_{N,T,j}^2\mathds 1\{|Z_{N,T,j}|>\varepsilon\}\mid \mathcal F_{N,T,j-1}\right]\overset{p}{\longrightarrow}0.
\]

Therefore, all conditions of the martingale central limit theorem hold, and
\[
\frac{1}{\sqrt{NT}}\sum_{i=1}^N\sum_{t=1}^T \xi_{i,t}=\sum_{j=1}^{NT}Z_{N,T,j}\overset{d}{\longrightarrow}N(0,\sigma_\xi^2).
\]
Using the expansion of \(\widehat V(\pi)\), we have
\[
\sqrt{NT}\{\widehat V(\pi)-V(\pi)\}=\frac{1}{1-\gamma}\frac{1}{\sqrt{NT}}\sum_{i=1}^N\sum_{t=1}^T \xi_{i,t}+
o_p(1).
\]
By Slutsky's theorem,
\[
\sqrt{NT}\{\widehat V(\pi)-V(\pi)\}\overset{d}{\longrightarrow}N\left(0,\frac{\sigma_\xi^2}{(1-\gamma)^2}\right).
\]
Thus \(\sigma_\pi^2=\sigma_\xi^2/(1-\gamma)^2\).

Under stationarity, \(\sigma_\xi^2=\mathbb E(\xi^2)\). Hence, if \(\varphi^\pi(O)=(1-\gamma)^{-1}\xi(O)\), then \(\sigma_\pi^2=\mathbb E\{[\varphi^\pi(O)]^2\} =\operatorname{Var}\{\varphi^\pi(O)\}\), since the influence function is mean zero. Otherwise, the general variance is defined through the conditional-variance limit above.

\end{proof}

\subsection{Proof of Proposition \ref{prop:var}}

\begin{proof}
For convenience, let $n = NT$ denote the total number of transition-level observations across all trajectories. Recall that the true, unobserved centered influence function for a given transition tuple $O_{i,t} = (S_{i,t}, \widetilde{A}_{i,t}, R_{i,t}, S_{i,t+1})$ is given by:
\begin{align*}
\varphi^\pi(O_{i,t}) = & (1-\gamma)^{-1}\sum_{a\in\mathcal A} g_a'(O_{i,t})\,\omega(S_{i,t},a)\,\Bigl[R_{i,t}-\theta_R(S_{i,t},a)\Bigr] \\
& +\frac{\gamma}{1-\gamma}\sum_{a\in\mathcal A} g_a(O_{i,t})\,\omega(S_{i,t},a)\,\Bigl[V^\pi(S_{i,t+1})-\mathcal{M}V^{\pi}(S_{i,t},a)\Bigr] - V(\pi).
\end{align*}
The true asymptotic variance is defined as $\sigma_\pi^2 = \Var\big(\varphi^\pi(O)\bigr) = \E\big[(\varphi^\pi(O))^2\bigr]$. 

Let $\widehat{\varphi}^\pi(O_{i,t})$ denote the plug-in estimated centered influence function evaluated with the cross-fitted nuisance estimators $\widehat{\Theta} = (\widehat{\theta}_R, \widehat{\mathcal{M}V^\pi}, \widehat{\omega}, \widehat{\theta}_{\widetilde A}, \widehat{\theta}_{S'})$ and the value estimate $\widehat{V}(\pi)$:
\begin{align*}
\widehat{\varphi}^\pi(O_{i,t}) = & (1-\gamma)^{-1}\sum_{a\in\mathcal A} \widehat{g}_a'(O_{i,t})\,\widehat{\omega}(S_{i,t},a)\,\Bigl[R_{i,t}-\widehat{\theta}_R(S_{i,t},a)\Bigr] \\
& +\frac{\gamma}{1-\gamma}\sum_{a\in\mathcal A} \widehat{g}_a(O_{i,t})\,\widehat{\omega}(S_{i,t},a)\,\Bigl[\widehat{V}^\pi(S_{i,t+1})-\widehat{\mathcal{M}V^{\pi}}(S_{i,t},a)\Bigr] - \widehat{V}(\pi).
\end{align*}
The plug-in empirical variance estimator $\widehat{\sigma}_\pi^2$ and the oracle empirical variance estimator $\overline{\sigma}_\pi^2$ satisfy:
\begin{align*}
\widehat{\sigma}_\pi^2 = \frac{1}{n}\sum_{i=1}^N\sum_{t=1}^T \left( \widehat{\varphi}^\pi(O_{i,t}) \right)^2, \qquad \text{and} \qquad \overline{\sigma}_\pi^2 = \frac{1}{n}\sum_{i=1}^N\sum_{t=1}^T \left( \varphi^\pi(O_{i,t}) \right)^2.
\end{align*}
To establish the consistency of our variance estimator, it suffices to show that $\widehat{\sigma}_\pi^2 - \overline{\sigma}_\pi^2 = o_p(1)$.

Using the identity $x^2 - y^2 = (x-y)^2 + 2y(x-y)$, we decompose the targeted difference pointwise:
\begin{align}
\widehat{\sigma}_\pi^2 - \overline{\sigma}_\pi^2 = \frac{1}{n}\sum_{i=1}^N\sum_{t=1}^T \left( \widehat{\varphi}^\pi(O_{i,t}) - \varphi^\pi(O_{i,t}) \right)^2 + \frac{2}{n}\sum_{i=1}^N\sum_{t=1}^T \varphi^\pi(O_{i,t}) \left( \widehat{\varphi}^\pi(O_{i,t}) - \varphi^\pi(O_{i,t}) \right). \label{eq:var_decomp}
\end{align}
By the Cauchy--Schwarz inequality, the cross-product term in \eqref{eq:var_decomp} is bounded by:
\begin{align*}
\left| \frac{2}{n}\sum_{i=1}^N\sum_{t=1}^T \varphi^\pi(O_{i,t}) \left( \widehat{\varphi}^\pi(O_{i,t}) - \varphi^\pi(O_{i,t}) \right) \right| \le 2 \sqrt{\overline{\sigma}_\pi^2} \cdot \sqrt{\frac{1}{n}\sum_{i=1}^N\sum_{t=1}^T \left( \widehat{\varphi}^\pi(O_{i,t}) - \varphi^\pi(O_{i,t}) \right)^2}.
\end{align*}
It is therefore sufficient to make both terms in \eqref{eq:var_decomp} vanish. Then we have
\begin{align}
\frac{1}{n}\sum_{i=1}^N\sum_{t=1}^T \left( \widehat{\varphi}^\pi(O_{i,t}) - \varphi^\pi(O_{i,t}) \right)^2 \le 2 \left( \widehat{V}(\pi) - V(\pi) \right)^2 + \frac{2}{n}\sum_{i=1}^N\sum_{t=1}^T \left( \phi^\pi(O_{i,t}; \widehat{\Theta}) - \phi^\pi(O_{i,t}; \Theta_0) \right)^2. \label{eq:phi_bound}
\end{align}
Under Assumption~\ref{assump:regularity}, all true and estimated nuisance functions are uniformly bounded, and their denominators are bounded away from zero by a constant $c > 0$. By the proof of Theorem \ref{thm:error}, we have:
\begin{align*}
\left| \phi^\pi(O_{i,t}; \widehat{\Theta}) - \phi^\pi(O_{i,t}; \Theta_0) \right| \lesssim \sum_{a \in \mathcal{A}} \Bigl( & |\delta \widehat{\omega}(S_{i,t},a)| + |\delta \widehat{\theta}_R(S_{i,t},a)| + |\delta \widehat{\theta}_{S'}(S_{i,t},a)| \\
& + |\delta \widehat{\theta}_{\widetilde{A}}(S_{i,t},a)| + |\delta \widehat{V}^\pi(S_{i,t+1})| + |e_M(S_{i,t},a)| \Bigr).
\end{align*}

Let $\{I_k\}_{k=1}^K$ denote the cross-fitting partition folds. For observations within a held-out fold $I_k$, the nuisance functions $\widehat{\Theta}$ are trained on independent data outside of $I_k$. Taking the conditional expectation of the squared variations inside fold $I_k$ given the training dataset $\mathcal{T}_k$ gives:
\begin{align*}
\E \left[ \frac{1}{n} \sum_{(i,t) \in I_k} \left( \phi^\pi(O_{i,t}; \widehat{\Theta}) - \phi^\pi(O_{i,t}; \Theta_0) \right)^2 \;\middle|\; \mathcal{T}_k \right] \lesssim \sum_{a \in \mathcal{A}} \Bigl( & \|\delta \widehat{\omega}\|^2 + \|\delta \widehat{\theta}_R\|^2 + \|\delta \widehat{\theta}_{S'}\|^2 \\
& + \|\delta \widehat{\theta}_{\widetilde{A}}\|^2 + \|\delta \widehat{V}^\pi\|^2 + \|e_M\|^2 \Bigr).
\end{align*}
By the convergence rates given for Theorem~\ref{thm:asymptotic_normality}, each nuisance component converges under the $L_2(P)$ norm at a rate of $O_p(n^{-\alpha_{\min}})$ with $\alpha_{\min} > 1/4$. Squaring these components yields an error bound of order $O_p(n^{-2\alpha_{\min}}) = o_p(1)$. 

Summing across the finite number of folds $K$ and applying Markov's inequality shows that the second term in \eqref{eq:phi_bound} scales as $O_p(n^{-2\alpha_{\min}}) = o_p(1)$. This immediately gives $$\frac{1}{n}\sum_{i=1}^N\sum_{t=1}^T \left( \widehat{\varphi}^\pi(O_{i,t}) - \varphi^\pi(O_{i,t}) \right)^2 \xrightarrow{p} 0.$$

Returning to our target expansion \eqref{eq:var_decomp}:
\begin{align*}
\widehat{\sigma}_\pi^2 - \overline{\sigma}_\pi^2 = o_p(1) + 2 \cdot O_p(1) \cdot o_p(1) = o_p(1).
\end{align*}
Combining this with $\overline{\sigma}_\pi^2 -\sigma_\pi^2=o_p(1)$ completes the proof.
\end{proof}

\subsection{Extension to Categorical Actions} \label{sec:multi-action}
We consider binary actions in the main paper for ease of presentation. The framework can be extended to a categorical action space
\[
\mathcal A=\{1,\ldots,K\},\qquad K>2.
\]

For each proxy variable \(X\in\{\widetilde A,R,S'\},\) choose a \(K\)-dimensional feature vector
\[
h_X(X)\in\mathbb R^K.
\]
A natural choice for the categorical action surrogate is its one-hot encoding,
\[
h_{\widetilde A}(\widetilde A)=\bigl(\mathbbm 1\{\widetilde A=1\}, \ldots, \mathbbm 1\{\widetilde A=K\}\bigr)^\top.
\]
For the reward and next-state proxies, the feature vectors \(h_R(R)\) and \(h_{S'}(S')\) should be chosen so that their conditional moment vectors distinguish the \(K\) latent action classes. For example, when appropriate, one may use
\[
h_R(R)=\bigl(1,R,\ldots,R^{K-1}\bigr)^\top,
\]
and analogous basis functions of \(S'\).

For \(X\in\{\widetilde A,R,S'\}\), define the \(K\times K\)
conditional moment matrix
\[
\Gamma_X(s)=\begin{bmatrix}
\mathbb E\{h_X(X)\mid S=s,A=1\}&\cdots&\mathbb E\{h_X(X)\mid S=s,A=K\}
\end{bmatrix}.
\]
Thus, the \(b\)th column of \(\Gamma_X(s)\) is \(\mathbb E\{h_X(X)\mid S=s,A=b\}.\)
We assume the following. 
\begin{assumption}
For each \(X\in\{\widetilde A,R,S'\}\), \(\Gamma_X(s)\) is nonsingular for every \(s\).
\end{assumption}
This is the categorical analogue of Assumption \ref{assump:latent_action_id}(ii) in the binary case. 

Let \(e_a\in\mathbb R^K\) denote the \(a\)th canonical basis vector. For \(a\in\mathcal A\), define
\[
q_a^X(X,S)=e_a^\top\Gamma_X(S)^{-1}h_X(X).
\]
Then, for every \(a,b\in\mathcal A\),
\[
\begin{aligned}
\mathbb E\{q_a^X(X,S)\mid S,A=b\}
&=e_a^\top\Gamma_X(S)^{-1}\mathbb E\{h_X(X)\mid S,A=b\}\\
&=e_a^\top\Gamma_X(S)^{-1}\Gamma_X(S)e_b\\
&=\mathbbm 1\{a=b\}.
\end{aligned}
\]
Hence, \(q_a^X\) is an observed-data representation of the latent-action indicator \(\mathbbm 1\{A=a\}\).

Define
\[
\bar g_a(O)=q_a^{\widetilde A}(\widetilde A,S)q_a^R(R,S) \quad\text{ and }\quad
\bar g_a'(O)=q_a^{\widetilde A}(\widetilde A,S)q_a^{S'}(S',S).
\]
By Assumption \ref{assump:latent_action_id}(iii),
\[
\begin{aligned}
\mathbb E\{\bar g_a(O)\mid S,A=b\}
&=\mathbb E\{q_a^{\widetilde A}(\widetilde A,S)\mid S,A=b\}\mathbb E\{q_a^R(R,S)\mid S,A=b\}\\
&=\mathbbm 1\{a=b\},
\end{aligned}
\]
and similarly,
\[
\mathbb E\{\bar g_a'(O)\mid S,A=b\}=\mathbbm 1\{a=b\}.
\]
Then a valid influence function for \(V(\pi)\) is
\[
\begin{aligned}
\varphi^\pi(O)
={}&
\mathbb E_{p_e}\{V^\pi(S)\}-V(\pi)\\
&+\frac{1}{1-\gamma}\sum_{a\in\mathcal A}\bar g_a'(O)\omega(S,a)\{R-\theta_R(S,a)\}\\
&+\frac{\gamma}{1-\gamma}\sum_{a\in\mathcal A}\bar g_a(O)\omega(S,a)\{V^\pi(S')-\mathcal MV(S,a)\}.
\end{aligned}
\]

The conditional moment matrices identify the latent action classes only up to a common permutation. An additional label-alignment condition is therefore required to associate the recovered latent classes with the action labels. 
Assumption \ref{assump:latent_action_id}(iv) for the binary case should be replaced by either of the following.
\begin{assumption}
1. $\Pr(\widetilde{A}=a\mid A=a, S=s) > \Pr(\widetilde{A}=j\mid A=a, S=s)$ for $j\ne a$; or 
2. $\Pr(\widetilde{A}=1\mid A=a, S=s)$ is strictly decreasing for $a\in\{1,2,\ldots,K \}$.
\end{assumption}

When \(K=2\), the matrix-inverse construction reduces to the scalar
contrast formulas used in the main paper.

\end{document}